\documentclass[10pt]{article}
\input{macro_arxiv.tex}
\usepackage{mathrsfs}
\allowdisplaybreaks

\begin{document}

\title{On the Robustness of Distribution Support under Diffusion Guidance
}
\author{%
	Ruijia Cao\thanks{Center for Applied Mathematics,  Cornell University; \,\, Email: \href{mailto:rc948@cornell.edu}{\texttt{rc948@cornell.edu}}. } \and
Yuchen Wu\thanks{School of Operations Research and Information Engineering, Cornell University;  \,\, Email: {\href{mailto:yuchen.wu@cornell.edu}{\texttt{yuchen.wu@cornell.edu}}}. 
} 
   \and
Nisha Chandramoorthy\thanks{Department of Statistics, The University of Chicago; \,\, Email: \href{mailto:nishac@uchicago.edu}{\texttt{nishac@uchicago.edu}}. }
}

\date{\today}
\maketitle

\begin{abstract}

Diffusion guidance is a powerful technique that enables controllable and high-fidelity sample generation with diffusion models.
At a high level, it modifies the score function by incorporating a guidance term that steers the generative process toward a desired condition. 
Despite its empirical success, the theoretical properties of diffusion guidance remain largely unexplored, and it is not well understood why it consistently produces high-quality samples.

In this work, we explain the effectiveness of diffusion guidance by establishing a \emph{robustness of support} property. 
Specifically, we show that, given exact access to the score functions, guided diffusion processes almost always generate samples that remain close to the target support. 
This property is particularly desirable, as samples that lie off the support are often structurally implausible and may adversely affect downstream tasks.
Our analysis covers both Denoising Diffusion Implicit Models (DDIM) and Denoising Diffusion Probabilistic Models (DDPM), and applies to a wide range of discretization schemes induced by exponential integrators.
Our results provide a rigorous foundation for understanding why diffusion guidance produces physically meaningful and structurally plausible samples.

\end{abstract}

\tableofcontents

\section{Introduction}

\emph{Score-based diffusion models}, originally inspired by thermodynamic principles \citep{sohldickstein2015deep}, have emerged as a powerful class of generative models, achieving state-of-the-art performance across a wide range of applications \citep{song2019generative,ho2020denoising,song2020denoising,rombach2022highresolution, saharia2022photorealistic,watson2023novo,abramson2024accurate}.
At a high level, diffusion models generate samples by progressively denoising a sequence of intermediate distributions $\{P_i\}_{i = 1}^N$, thereby transforming a simple, easy-to-sample distribution $P_1$ (e.g., Gaussian noise) into a complex target distribution $P_N$.
This generative procedure is driven by a sequence of \emph{score functions} $\{\nabla \log P_i\}_{i = 1}^{N - 1}$, which are typically learned by training deep neural networks through a score matching procedure \citep{hyvarinen2005estimation}.
Given the estimated score functions, diffusion models generate samples through an $N$-step iterative process, which is illustrated by the following schema:
\begin{align*}
    P_1 \xrightarrow{\mathrm{sample}} Y_1 \xrightarrow{\nabla \log P_1 } Y_{2} \xrightarrow{\nabla \log P_{2} }  \cdots \xrightarrow{\nabla \log P_{N - 1} } Y_{N} \,\overset{d}{\approx}\, P_N. 
\end{align*}

In practice, many applications require controllable data generation to ensure that the generated content aligns with specific objectives or user-defined preferences \citep{dhariwal2021diffusion,ho2022classifierfree,zhang2023adding,cao2024controllable}. 
A common approach to achieve this goal with diffusion models is \emph{diffusion guidance} \citep{dhariwal2021diffusion,ho2022classifierfree}, 
a technique that steers the generation process toward samples with the desired attributes. 
Specifically, to generate samples associated with a target label $\eta_0$, diffusion guidance modifies the unconditional score $\nabla \log P_i(\cdot)$ by incorporating an additional guidance term $\gamma \nabla \log P_i(\eta_0 \mid \cdot)$ for all $i \in [N - 1]$, 
where $\gamma \geq 0$ controls the strength of the guidance.
In practice, this guidance term must be learned from data, and the two main approaches for doing so are \emph{classifier guidance} \citep{dhariwal2021diffusion} and \emph{classifier-free guidance} \citep{ho2022classifierfree}, depending on whether the term is approximated via classifier training or joint score estimation. 
Notably, diffusion guidance not only enables controllable generation, but also often improves the quality of the generated samples.
An informal illustration of the sample generation process under guided diffusion models is provided below, and a more detailed description is deferred to Section \ref{sec:guidance}. From a theoretical perspective, both guided and unguided diffusion models can be understood as discrete approximations of continuous-time processes \citep{song2021scorebased}.
\begin{align*}
    P_1 \xrightarrow{\mathrm{sample}} Y_1 \xrightarrow{\nabla \log P_1  + \textcolor{BrickRed}{\mathrm{guidance}}} Y_{2} \xrightarrow{\nabla \log P_{2}  + \textcolor{BrickRed}{\mathrm{guidance}}}  \cdots \xrightarrow{\nabla \log P_{N - 1}  + \textcolor{BrickRed}{\mathrm{guidance}}} Y_{N} \,\overset{d}{\approx} \,  \textcolor{BrickRed}{\mathrm{?}}\,. 
\end{align*}

Despite its empirical success, the theoretical properties of diffusion guidance remain largely unexplored. 
In particular, the distribution produced by guided diffusion sampling has not yet been fully characterized. 
To shed light on the effectiveness of diffusion guidance, a line of work analyzes simplified settings in which the target distribution admits tractable structure, for example, when the conditional distributions are Gaussian or supported on one-dimensional intervals \citep{wu2024theoretical,chidambaram2024does,ventura2026emergencedistortionshighdimensionalguided}. 
For more general distributions, \cite{bradley2024classifier} demonstrate that guidance does not, in general, sample from a powered distribution, and can be interpreted as a predictor-corrector method.
Recent work establishes an average-case result demonstrating that diffusion guidance reduces the expected reciprocal of the conditional label probability, 
and thus increases classification confidence in an average-case sense \citep{li2025provable,jiao2025towards}.
From a different perspective, \cite{sahu2026provably} show that under mild smoothness conditions, classifiers trained by minimizing the empirical cross-entropy loss lead to an effective guidance mechanism. 
For conditional sample generation with hard constraints, \cite{guo2026conditional} propose a guidance framework based on Doob's $h$-transform.

In this work, we adopt a different perspective, explaining the success of diffusion guidance through the \emph{robustness of support} property \citep{chandramoorthyand}.
In particular, we ask the following question:

\begin{center}
\begin{minipage}{0.8\linewidth}
\raggedright\itshape
\vspace{0.5em}
Does diffusion guidance generate samples that are close to the target support? 
\vspace{0.5em}
\end{minipage}
\end{center}
The ability to generate samples near the target support is a highly desirable property for generative models, since off-support samples are often structurally implausible and can adversely impact downstream tasks.
For example, a recent paper on video sampling empirically demonstrates that, in certain settings, generative models may fail to achieve the robustness of support property, leading to physically distorted videos \citep{jang2026self}.
In the context of diffusion guidance, this means that although guidance may not generate samples exactly from the conditional distribution associated with the label of interest (and indeed should not when $\gamma \neq 1$), 
we still expect it to produce physically meaningful and plausible samples that remain near the target support.
Support robustness of diffusion guidance has been established in \cite{chidambaram2024does} for one-dimensional mixture distributions. 
However, it is unclear whether their analysis extends to high-dimensional settings.

\subsection{Main contributions}

In this paper, we study the behavior of diffusion guidance when the target distribution is a mixture of general distributions supported on compact convex sets.
Our findings are summarized below:

\begin{itemize}
    \item \textbf{Support robustness with perfect score estimation: }
    We show that guided diffusion models with exact access to the score functions exhibit strong support robustness, in the sense that the generated sample paths almost surely converge to the correct support.
    Our theory applies broadly to all guidance strength $\gamma \geq 1$ and to a wide family of discretization schemes induced by an exponential integrator (including continuous-time processes with infinitesimal discretization),  and covers both the deterministic diffusion sampler Denoising Diffusion Implicit Models (DDIM) \citep{song2020denoising} 
and the stochastic diffusion sampler Denoising Diffusion Probabilistic Models (DDPM) \citep{ho2020denoising}.
Our theory implies that, in the absence of score estimation error and with sufficiently fine discretization, diffusion guidance with arbitrary guidance strength produces physically plausible samples that remain close to the support of the target component.
\item \textbf{Numerical evidence: }
We provide numerical evidence supporting our theoretical findings.  
Based on experimental outcomes, we further conjecture that, under mild regularity assumptions, convergence to the target support may continue to hold even in the absence of support convexity.
We leave a rigorous investigation of this point for future work.
\end{itemize}

In summary, our results provide insight into the qualitative behavior of the sample paths produced by diffusion guidance, revealing that the resulting dynamics inherently adapt to the intrinsic geometry of the target distribution.
In practice, however, diffusion guidance can still produce physically implausible samples, particularly when the guidance strength $\gamma$ is large. 
Our results suggest that this behavior does not originate from the diffusion guidance framework itself, but rather from either errors in score estimation or an insufficiently fine discretization scheme. 
Note that diffusion models are generally expected to fail under inaccurate score estimation or coarse discretization.

\subsection{Related works}
\paragraph{Theoretical explanations for the success of diffusion guidance.}  
A recent line of work seeks to theoretically explain the empirical success of diffusion guidance.
Among others,  \cite{wu2024theoretical} analyzes diffusion guidance when the target distribution is a Gaussian mixture model (GMM). 
Specifically, they show that applying guidance to both DDIM and DDPM increases classification confidence, defined as the posterior probability of the label of interest. 
Moreover, they prove that for guided DDIM, diffusion guidance decreases differential entropy, theoretically validating the empirical observation that guidance reduces sample diversity. 
\cite{chidambaram2024does} analyzes guided DDIM applied to GMM or mixture distributions supported on disjoint intervals.  
Their results show that strong guidance steers the generated samples toward the boundary of the target component.
More recently, several works have studied diffusion guidance for general distributions without an explicit parametric form \citep{bradley2024classifier,li2025provable,jiao2025towards,guo2026conditional}. Since we have discussed these works earlier, we omit further discussion here.

\paragraph{Reward-guided diffusion models. }
Another application of controllable generation with  diffusion models is reward-guided diffusion \citep{fan2023dpok,yuan2023reward,black2024training,wallace2024diffusion,gao2024reward,jiao2025towards}, where the objective is to generate samples that achieve higher reward values. These methods modify the reverse diffusion dynamics by incorporating gradients of a reward function, thereby steering the generation process toward outputs that receive higher reward.
Here, the reward function might represent various objectives, including alignment with human preferences or adherence to given constraints.
We refer the reader to \cite{uehara2025inference} for a review of recent developments in this area.

\paragraph{Other theoretical advances for diffusion models.}
Various results have been established to explain the overall success of diffusion models,
covering topics including convergence rates \citep{lee2023convergence,chen2023sampling,benton2024nearly,li2024accelerating,wu2024theoretical,gupta2025faster,li2025unified,li2025odt,huang2025convergence,huang2025fast,li2025faster}, score estimation \citep{oko2023diffusion,wibisono2024optimal,zhang2024minimax,dou2024optimal}, and applications in statistical or optimization settings \citep{montanari2023posterior,chung2023diffusion,li2024diffusion,chen2025diffusion}.
We refer interested readers to \cite{chen2024overview} for a summary of recent theoretical developments in the past few years. 

\subsection{Organization}
In \cref{sec:background}, we provide a brief overview of diffusion models and diffusion guidance. 
In Section \ref{sec:main_results_DDIM}, we present and prove the main results for DDIM, while Section \ref{sec:main_results_DDPM} presents the corresponding results for DDPM. 
We provide intuition for our theoretical results in the main text, while deferring the complete proofs to the appendix. 
Numerical experiments are reported in Section \ref{sec:experiments}.
We conclude the paper and discuss several directions for future work in Section \ref{sec:discussion}.

\subsection{Notation}

For $d \in \mathbb{N}_+$, we define the set $[d] = \{1, 2, \cdots, d\}$. 
For a probability measure $\mu$ over $\R^d$, we denote by $\supp(\mu)$ the support of $\mu$.
Unless otherwise specified, the symbol $\norm{\cdot}$ will denote the usual Euclidean $2$-norm.
For a set $A \subseteq \R^d$ and a vector $x \in \R^d$, we define $\dist(x, A) = \inf_{y \in A} \norm{x - y}$. 
The (set-valued) projection operator is defined by $\Proj_A(x) = \argmin_{y \in A} \norm{x - y}$.
If $\Proj_A(x)$ contains a single element, then we denote that element by $\Proj_A(x)$. 
For $d \ge 1$, we denote by $\mathbb{S}^{d - 1}$ the unit sphere in $\R^d$ centered at the origin. 
For a set $K \subseteq \R^d$, we denote by $\mathsf{cl}(K)$ the closure of $K$, $\mathsf{int}(K)$ the interior of $K$, $\vol(K)$ the volume of $K$, 
and define the boundary of $K$ as $\partial K = \mathsf{cl}(K) \backslash \mathsf{int}(K)$. 
For $x \in \R^d$ and $\delta > 0$, we define $\cB(x, \delta) = \{y \in \R^d: \|y - x\| < \delta\}$.

\section{Background}\label{sec:background}

\subsection{Score-based diffusion models}

This section provides a brief overview of diffusion models. 
At a high level, a diffusion model consists of a \emph{forward process} and a \emph{reverse process}.
In the forward process, the target distribution is progressively transformed into an easy-to-sample noise distribution (typically a standard Gaussian). 
The reverse process, in turn, maps this noise back to the target distribution, enabling the generation of new samples.
Note that only the reverse process is used during the sample generation stage. 

Diffusion models are often analyzed through their continuous-time limits, and are typically implemented as numerical approximations to these limits \citep{song2021scorebased}.
For better understanding, we first introduce the continuous-time formulation of diffusion models, and then describe the samplers implemented in practice, which can be viewed as time-discretized approximations of these continuous-time processes. 

\paragraph{Diffusion models through a continuous-time perspective.}
A common choice for the forward process is based on the Ornstein-Uhlenbeck (OU) process.
Specifically, let $p_0$ denote the target distribution to be sampled. We then consider the following stochastic differential equation (SDE) defined on $0 \leq t \leq T$: 
\begin{align}
\label{eq:forward_sde}
\dd y_t = -  y_t \D t + \sqrt{2}  \D W_t, \qquad
y_0 \sim p_{0},
\end{align}
where $(W_t)_{t \geq 0}$ denotes a standard $\R^d$-valued Wiener process that is independent of $y_0$. For $0 \leq t \leq T$, we denote by $p_t$ the distribution of $y_t$. 
Standard computation implies that 
\begin{align}
\label{eq:distribution-evolution}
\begin{split}
    & p_t \overset{d}{=} \lambda_t\, y_0 + \sigma_t\, w, \qquad (y_0, w) \sim p_0 \otimes \pi_d, \\
    & \lambda_t = e^{-t}, \qquad \sigma_t = \sqrt{ 1 - \lambda_t^2}, 
\end{split}
\end{align}
where $\pi_d$ denotes the distribution of a standard Gaussian random vector in $\R^d$. 
From \cref{eq:distribution-evolution}, we see that process \eqref{eq:forward_sde} begins with the target distribution, and progressively injects Gaussian noise until the resulting distribution approaches the standard Gaussian distribution.

For the reverse process, two commonly used approaches are DDIM and DDPM.
Their continuous-time formulations corresponding to the forward process \eqref{eq:forward_sde} are given below:
\begin{itemize}
    \item \emph{DDIM sampler. } 
    The continuous-time process underlying the deterministic DDIM sampler is formulated as an ordinary differential equation (ODE) on $0 \leq t < T$: 
    \begin{align} \label{eq:ODE-reverse}
    \dd x_t = \left(x_t + \nabla \log p_{T - t}(x_t) \right) \D t, \qquad x_0 \sim p_T.
    \end{align}
    The marginal distributions of process \eqref{eq:ODE-reverse} match that of the forward diffusion process \eqref{eq:forward_sde}, in the sense that $x_t \overset{d}{=} y_{T - t}$ for all $0 \leq t < T$. 
    \item \emph{DDPM sampler. } The continuous-time process underlying the stochastic DDPM sampler is on the other hand formulated as an SDE defined on $0 \leq t < T$:
    \begin{align} \label{eq:SDE-reverse}
    \dd x_t = \left(x_t + 2 \nabla \log p_{T - t}(x_t) \right) \D t + \sqrt{2} \D B_t, \qquad  x_0 \sim p_T,
\end{align}
where $(B_t)_{t \geq 0}$ is a standard $d$-dimensional Wiener process that is independent of $x_0$. 
Process \eqref{eq:SDE-reverse} also matches the marginal distributions of process \eqref{eq:forward_sde} in a time-reversed sense: $x_t \overset{d}{=} y_{T - t}$ for all $0 \leq t < T$ \citep{anderson1982reverse}. 
\end{itemize}

Processes \eqref{eq:ODE-reverse} and \eqref{eq:SDE-reverse} are both driven by a family of \emph{score functions} $\{\nabla \log p_{T - t}\}_{0 \leq t < T}$, which in practice are unknown and are typically learned from training data drawn from $p_0$ through a \emph{score matching} procedure \citep{hyvarinen2005estimation}.  
In addition, the correct initial distribution $p_T$ depends on $p_0$ hence is also unknown. 
On the positive side, from \cref{eq:distribution-evolution}, we know that $p_T$ can be well approximated by a standard Gaussian distribution for only moderately large $T$ (See Proposition 4 in \cite{benton2024nearly} for a rigorous justification of this point). 
To sample from $p_0$, rather than tracking the processes in \eqref{eq:ODE-reverse} and \eqref{eq:SDE-reverse} all the way to $T$, practitioners typically stop early at $T - \delta$ for a small positive $\delta$ to avoid numerical instability.

\paragraph{Diffusion models via time-discretization.}
Apart from score estimation and approximate initialization,
practical implementation of DDIM (DDPM) requires choosing a discretization scheme for the associated reverse-time ODE (SDE).
There are multiple ways to discretize the reverse dynamics, and the choice of discretization scheme can significantly affect the resulting diffusion model.
For instance, a popular approach is the exponential integrator, which has been empirically shown to produce high-fidelity samples \citep{zhang2023fast}. 
\cite{karras2022elucidating} further improve the  sampling efficiency of diffusion models by adaptively selecting discretization step sizes.
Another line of work investigates the use of high-order numerical solvers to accelerate sample generation with diffusion models \citep{lu2022dpm,lu2025dpm,wu2024stochastic,huang2025convergence,huang2025fast,li2025faster}.
For a summary of other common discretization strategies, we refer the reader to \cite[Section 2]{liang2025low}.

\subsection{Controllable generation with diffusion guidance}
\label{sec:guidance}
In many practical scenarios, instead of sampling unconditionally from $p_0$, one may wish to generate samples associated with a user-specified label $\eta_0$.
Diffusion guidance achieves this by augmenting the unconditional score $\nabla \log p_{T - t}(\cdot)$ with an additional term $\gamma \nabla \log p_{T - t}(\eta_0 \mid \cdot)$, where the hyperparameter $\gamma \geq 0$ controls the guidance strength. 
Here, with a slight abuse of notation, we denote by $p_t$ the joint distribution of $(y_t, \eta_0)$, assuming that the Wiener process  $(W_t)_{t \geq 0}$ in \cref{eq:forward_sde} is independent of $(y_0, \eta_0)$. 
From a continuous-time perspective, the guided ODE associated with the DDIM sampler takes the following form:
\begin{align}\label{eqn:ddim_guidance}
\begin{split}
	\dd x_t = & \left(x_t +  \nabla \log p_{{T - t}}(x_t) + \gamma \nabla \log p_{{T - t}}(\eta_0 \mid x_t) \right) \D t \\
    = & \left(x_t +  (1 - \gamma)\nabla \log p_{{T - t}}(x_t) + \gamma \nabla \log p_{{T - t}}( x_t,  \eta_0) \right) \D t,
\end{split}
\end{align}
where $p_{T - t}(\eta_0 \mid x_t) \in [0, 1]$ denotes the probability of label $\eta_0$ given $x_t$ under $p_{T - t}$. 
Throughout this work, unless otherwise stated, all gradients are taken with respect to the $x$-space. 
Similarly, the guided SDE associated with the DDPM sampler admits the following expression: 
\begin{align}\label{eqn:ddpm_guidance}
\begin{split}
	\dd x_t = & \left(x_t +  2\nabla \log p_{{T - t}}(x_t) + 2\gamma \nabla \log p_{{T - t}}(\eta_0 \mid x_t) \right) \D t + \sqrt{2} \D B_t \\
    = & \left(x_t +  2(1 - \gamma)\nabla \log p_{{T - t}}(x_t) + 2\gamma \nabla \log p_{{T - t}}( x_t,  \eta_0) \right) \D t + \sqrt{2} \dd B_t. 
\end{split}
\end{align}
In practice, the additional guidance term must also be learned from data. Depending on the training approach, it can be classified as either classifier guidance (where $p_{T - t}(\eta_0 \mid \cdot)$ is learned) or classifier-free guidance (where a joint score function over both the label and the sample space is learned). 
We refer interested readers to \cite{dhariwal2021diffusion} and \cite{ho2022classifierfree} for further details.

Observe that when $\gamma = 0$, processes \eqref{eqn:ddim_guidance} and \eqref{eqn:ddpm_guidance} reduce to the unconditional sampling processes \eqref{eq:ODE-reverse} and \eqref{eq:SDE-reverse}, respectively. On the other hand, when $\gamma = 1$, they coincide with the reverse processes associated with the conditional distribution $p_0(\cdot \mid \eta_0)$.  
Notably, practitioners often choose $\gamma > 1$ to improve fidelity of the generated samples. With this choice, the goal is not to sample exactly from $p_0(\cdot \mid \eta_0)$, but to generate samples that are more strongly aligned with the target label $\eta_0$.

\section{Results for DDIM}
\label{sec:main_results_DDIM}

In this section, we present our main results for DDIM. 
Throughout this paper, we denote by $\cI$ a finite set of labels and,
without loss of generality, assume that diffusion guidance is always directed toward samples associated with $\eta_0 \in \cI$.

\subsection{Summary of main results}

We begin by stating the assumptions on the target distribution $p_0$ required to establish our main theorem.

\begin{ass}
\label{assumption:target}
    We assume that $p_0 = \sum_{\eta \in \cI} w_{\eta} p_{\eta}$ is a mixture distribution in $\R^d$, where $(w_{\eta})_{\eta \in \mathcal{I}}$ are non-negative mixture weights that satisfy $\sum_{\eta \in \cI} w_{\eta} = 1$, and $p_{\eta}$ is a probability density function supported on a convex and compact set $K_{\eta} \subseteq \R^d$. We further impose the following assumptions: 
    \begin{enumerate}
        \item The set $K_{\eta}$ contains at least one interior point for all $\eta \in \cI$. In addition, $K_{\eta} \cap K_{\eta'} = \emptyset$ for all $\eta, \eta' \in \cI$ with $\eta \neq \eta'$. 
        \item The weight associated with the target component is positive, i.e., $w_{\eta_0} > 0$. 
        \item The density function associated with the target component is bounded from below on its support, i.e., $\inf_{x \in K_{\eta_0}} p_{\eta_0}(x) > 0$. 
    \end{enumerate}
\end{ass}

For simplicity, we first present our findings for the continuous-time guided ODE. 

\begin{thm}
\label{thm:continuous-DDIM}
Let $(x_t)_{0 \le t < T}$ be any sample path that represents a solution to the guided ODE \eqref{eqn:ddim_guidance}, then under Assumption \ref{assumption:target}, for any starting location $x_0 \in \R^d$ and guidance strength $\gamma \geq 1$, we have 
\begin{align*}
    \lim_{t \to T^-} \dist (x_t, K_{\eta_0}) = 0. 
\end{align*}
\end{thm}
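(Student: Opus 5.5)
We will work with the denoiser representation of the scores. Set $s = T - t$, so that $s \to 0^+$ as $t \to T^-$. By Tweedie's formula,
\[
\nabla \log p_s(x) = \frac{\lambda_s m_s(x) - x}{\sigma_s^2},
\qquad
\nabla \log p_s(x,\eta_0) = \frac{\lambda_s m_s^{\eta_0}(x) - x}{\sigma_s^2}.
\]
Here $m_s(x) = \mathbb{E}[y_0 \mid y_s = x]$ is a convex combination of points in $\bigcup_\eta K_\eta$. The quantity $m_s^{\eta_0}(x) = \mathbb{E}[y_0 \mid y_s = x, \eta_0] \in K_{\eta_0}$ is the posterior mean under the component $p_{\eta_0}$ alone; it lies in $K_{\eta_0}$ by convexity. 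Substituting into \eqref{eqn:ddim_guidance}, the drift becomes
\[
x + \frac{(1-\gamma)\bigl(\lambda_s m_s(x) - x\bigr) + \gamma\bigl(\lambda_s m^{\eta_0}_s(x) - x\bigr)}{\sigma_s^2}
= x + \frac{\lambda_s\bigl(m^{\eta_0}_s + (\gamma-1)(m^{\eta_0}_s - m_s)\bigr) - x}{\sigma_s^2}.
\]
This identity is the object we will analyze. The dominant term is $-x/\sigma_s^2 \approx -x/(2s)$, which pulls $x$ toward an effective target point $\lambda_s z_s(x)$, where
\[
z_s := m^{\eta_0}_s + (\gamma-1)\bigl(m^{\eta_0}_s - m_s\bigr) = \gamma m_s^{\eta_0} - (\gamma-1) m_s .
\]
For $\gamma \ge 1$ this is an extrapolation of $m_s$ through $m_s^{\eta_0}$, pointing away from the other components.

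The Lyapunov function will be $V(x) = \tfrac12 \dist(x,\lambda_s K_{\eta_0})^2$, or equivalently $\dist(x, K_{\eta_0})$ up to $O(s)$ corrections. Let $\pi = \Proj_{\lambda_s K_{\eta_0}}(x)$ and $n = x - \pi$. Then
\[
\frac{\D V}{\D t} = \langle n, \dot x\rangle + (\text{a term from the moving set, } O(s)\cdot\|n\|\cdot\mathrm{diam}).
\]
Three pieces enter $\langle n, \dot x \rangle$:
\begin{itemize}
\item the term $\langle n, \lambda_s m^{\eta_0}_s - x\rangle \le -\|n\|^2$, by the obtuse-angle property of the projection onto a convex set;
\item the term $(\gamma-1)\lambda_s\langle n, m^{\eta_0}_s - m_s\rangle$;
\item the bounded term $\langle n, x\rangle$.
\end{itemize}
The key claim is that the guidance term is nonpositive, or at worst negligible. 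Write $m_s = \sum_\eta \pi_\eta(x) m^\eta_s$, so that $m^{\eta_0}_s - m_s = \sum_{\eta\ne\eta_0}\pi_\eta\,(m_s^{\eta_0} - m_s^{\eta})$. We need $\langle x - \pi, m_s^{\eta} - m_s^{\eta_0}\rangle \ge -o(\cdot)$, i.e.\ that the other components' posterior means do not lie on the far side relative to $n$. This is not true geometrically in general. So instead we control the weights: when $x$ is near $\lambda_s K_{\eta_0}$ and $s$ is small, the posterior weights satisfy $\pi_\eta(x) \lesssim \exp(-c/s)$, because the $K_\eta$ are disjoint compact sets at positive mutual distance. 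Here the lower density bound in Assumption~\ref{assumption:target}, item 3, together with the interior point, lower bounds the $\eta_0$ evidence by $\gtrsim s^{d/2}\exp(-\dist^2/2s)$. These exponentially small weights make the guidance term negligible against $-\|n\|^2/\sigma_s^2$.

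The argument then splits into two phases.
\begin{enumerate}
\item \textbf{Global phase (main obstacle).} From an arbitrary $x_0$, we must show the trajectory eventually enters the region where $\eta_0$ dominates the posterior. The plan is to track the log-ratio $\log\bigl(\pi_{\eta_0}/\pi_\eta\bigr)$ along the flow. We expect it to increase, since for $\gamma\ge1$ the drift contains $\gamma\nabla\log p(\eta_0\mid x)$, which ascends exactly this quantity plus a bounded correction. First we would compute its time derivative and use the bound $\|\nabla\log p_s(\eta_0\mid x)\|^2 \ge 0$ to get monotonicity up to $O(1/s)$ terms. Alternatively we would use the contraction $x\mapsto$ toward the convex hull of the supports, since the drift $-x/\sigma_s^2$ forces $\dist(x,\mathrm{conv}\,\lambda_s\bigcup K_\eta)$ to decay like $s^{1/2}$. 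Near the mixture support, once the separation dominates, all posterior mass concentrates on a single component.
\item \textbf{Local phase.} Once $\pi_{\eta_0}\ge 1-e^{-c/s}$, the inequality
\[
\dot V \le -\frac{2V}{\sigma_s^2} + C\sqrt V\,(1 + e^{-c/s}/s)
\]
integrates via Grönwall, giving $V \to 0$ as $s\to0$. We conclude $\dist(x_t,K_{\eta_0}) \le \dist(x_t,\lambda_sK_{\eta_0}) + (1-\lambda_s)\,\mathrm{diam} \to 0$.
\end{enumerate}
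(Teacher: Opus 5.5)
\textbf{The gap is the global phase.} Your local phase is sound: near $\lambda_s K_{\eta_0}$, the obtuse-angle bound on the $m^{\eta_0}_s$ term together with exponentially small posterior weights does give Gr\"onwall decay. But you flag the global phase as the main obstacle and do not prove it, and neither suggested route closes it.
\begin{itemize}
\item Monotonicity of $\log(\pi_{\eta_0}/\pi_\eta)$ ``up to $O(1/s)$ terms'' yields nothing. An additive error of order $1/s$ is not integrable as $s\to0^+$, and you give no control of the remaining terms $-\partial_s\log p_s(\eta_0\mid x)$ and $\langle\nabla\log p_s(\eta_0\mid x),\,x+\nabla\log p_s(x)\rangle$.
\item The convex-hull contraction is unjustified for $\gamma>1$. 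The drift pulls toward $\lambda_s z_s$, not toward the hull, and as you note yourself $z_s=\gamma m^{\eta_0}_s-(\gamma-1)m_s$ is an extrapolation that can sit a distance of order $(\gamma-1)\,\mathrm{diam}$ outside $\mathrm{conv}\bigcup_\eta K_\eta$.
\item Even granting that the posterior concentrates on a single component, nothing forces that component to be $\eta_0$. A trajectory lingering near some $K_\eta$ with $\eta\neq\eta_0$ is exactly the case the theorem must handle, and your local estimate never applies there.
\end{itemize}

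\textbf{The missing idea: no global phase is needed.} Regroup the target as
\[
\lambda_s z_s-x=\bigl[1+(\gamma-1)(1-\pi_{\eta_0})\bigr]\bigl(\lambda_s m^{\eta_0}_s-x\bigr)-(\gamma-1)\sum_{\eta\neq\eta_0}\pi_\eta\bigl(\lambda_s m^{\eta}_s-x\bigr).
\]
So the guidance itself adds attraction toward $K_{\eta_0}$ with weight $(\gamma-1)(1-\pi_{\eta_0})$. Apply the obtuse-angle bound to the first term and Cauchy--Schwarz to the rest:
\[
\langle n,\lambda_s z_s-x\rangle\le-\|n\|^2+(\gamma-1)\|n\|\sum_{\eta\ne\eta_0}\pi_\eta\bigl(\|x-\lambda_s m^\eta_s\|-\|n\|\bigr).
\]
You then need a second ingredient you lack: the tilted mean $m^\eta_s(x)$ lies within $O(\sqrt{s}\log(1/s))$ of $\Proj_{K_\eta}(x/\lambda_s)$, uniformly in $x$. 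This is Lemma~\ref{le:cm}, which rests on convexity and the volume bound of Lemma~\ref{lemma:volume-lower-bound}. Without it, $\|x-\lambda_s m^\eta_s\|$ can exceed $\dist(x,\lambda_sK_\eta)$ by up to $\mathrm{diam}(K_\eta)$.

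With it, each summand is essentially $\pi_\eta\bigl(\dist(x,\lambda_sK_\eta)-\dist(x,\lambda_sK_{\eta_0})\bigr)$, and the three cases go as follows.
\begin{itemize}
\item Components closer to $x$ than $K_{\eta_0}$ contribute \emph{negatively}: their repulsion helps.
\item Components within a margin of order $s^{1/4}$ contribute $O(s^{1/4})$.
\item Components farther than that margin have posterior weight exponentially small relative to $\pi_{\eta_0}$. This follows from the same evidence comparison you use near $K_{\eta_0}$ (lower density bound plus a volume lower bound near $\Proj_{K_{\eta_0}}$), which in fact holds for every $x$.
\end{itemize}
This gives
\[
\frac{\mathrm{d}}{\mathrm{d}t}\dist(z_t,K_{\eta_0})\le-\sigma_{T-t}^{-2}\bigl(\dist(z_t,K_{\eta_0})-o(1)\bigr)
\]
uniformly in position, with $z_t=x_t/\lambda_{T-t}$; this is Lemma~\ref{le:dist_deriv_general}. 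The theorem then follows by Gr\"onwall, without ever showing that $\eta_0$ dominates the posterior.
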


\begin{proof}[Proof of Theorem \ref{thm:continuous-DDIM}]
    A proof of Theorem \ref{thm:continuous-DDIM} is given in Appendix \ref{proof:continuous-DDIM}. 
\end{proof}

We explain the intuition behind the theorem in the next section.
The crucial ingredient of our proof is a differential inequality controlling the rate of decay of the distance function along the trajectory, which roughly states that
\begin{equation}\label{eqn:dist_deriv_informal}
    \frac{\dd}{\dd t} \dist(x_t, K_{\eta_0}) \leq - \frac{1}{4\sigma_{T - t}^2} \dist(x_t, K_{\eta_0}) 
\end{equation}
for all $t$ close enough to $T$ as long as $\dist(x_t, K_{\eta_0})$ is bounded away from zero by an arbitrarily small positive constant.
A rigorous statement of this inequality, along with its proof, is provided in Appendix \ref{proof:continuous-DDIM}.

\begin{rem}
    Theorem \ref{thm:continuous-DDIM} establishes a path-wise convergence result that holds for any initial location, providing stronger guarantees than those based on average-case analysis. 
    Our result implies that, for a sufficiently small early-stopping parameter $\delta$, the samples generated by the guided ODE \eqref{eqn:ddim_guidance} remain within the vicinity of the target support.
    Furthermore, inequality \eqref{eqn:dist_deriv_informal} suggests that the distance contracts at rate $\mathcal{O}(- 1/ \sigma^2_{T - t})$ whenever the trajectory is not too close to $K_{\eta_0}$.
\end{rem}

We then extend the continuous-time analysis given in Theorem \ref{thm:continuous-DDIM} to cover the discrete samplers used in practice.
While we focus on the sampling scheme based on the exponential integrator \citep{zhang2023fast}, we expect that our results generalize to other discretization methods as well.

To describe the sampling scheme based on the exponential integrator, we consider a discretization with time steps $0 = t_0 < t_1 < \ldots < t_N \leq T$. 
On each interval $[t_k, t_{k + 1})$ for $k = 0, 1, \ldots, N - 1$, we perform the following update
\begin{align}
\label{eq:discretized-ODE}
    \dd \hat x_t = \big( \hat x_t + \nabla \log p_{T - t_k}(\hat x_{t_k}) + \gamma \nabla \log p_{T - t_k}(\eta_0 \mid \hat x_{t_k}) \big) \D t. 
\end{align}
Update equation \eqref{eq:discretized-ODE} is derived by replacing the non-linear drift term $\nabla \log p_{T - t}(\cdot) + \gamma \nabla \log p_{T - t}(\eta_0 \mid \cdot)$ with its value at $t = t_k$ on the interval $[t_k, t_{k + 1})$.
Note that this update equation can be expressed in the following closed form:
\begin{align}
\label{eq:exp-integrator}
\begin{split}
    & \hat x_{t_{k + 1}} = e^{\Delta_k} \hat x_{t_k} + (e^{\Delta_k} - 1) \cdot \big( \nabla \log p_{T - t_k}(\hat x_{t_k}) + \gamma \nabla \log p_{T - t_k}(\eta_0 \mid \hat x_{t_k}) \big), \\
    &\Delta_k = t_{k + 1} - t_k, \qquad  k = 0, 1, \cdots, N - 1. 
\end{split}
\end{align}
We next impose an assumption on the step sizes. 
Note that our assumption is standard and aligns with the one adopted in \cite{benton2024nearly}.
\begin{ass}
\label{assumption:discretization}
    We assume that there exists $\kappa \in (0, 1)$, such that $\Delta_k \leq \kappa \min\{1, T - t_{k + 1}\}$. 
\end{ass}

\begin{thm}
\label{thm:guided_discretized_ode_qualitative}
    Under Assumptions \ref{assumption:target} and \ref{assumption:discretization}, 
    there exists $\kappa_0 > 0$ that depends only on $(p_0, \gamma)$, such that for all $0 < \kappa \leq \kappa_0$, the following statement is true: 
    for any discretization scheme $\{t_k\}_{k = 0}^{\infty}$ that satisfies $t_0 = 0$, $T > t_{k + 1} > t_k \geq 0$, and $t_k \to T$ as $k \to \infty$, it holds that
    \begin{align*}
        \lim_{k \to \infty} \dist(\hat x_{t_k}, K_{\eta_0}) = 0 
    \end{align*}
    for any initialization $\hat x_{t_0} \in \R^d$ and guidance strength $\gamma \geq 1$.  
    Here, the sequence $\{\hat x_{t_k}\}_{k=0}^\infty$ is produced by \cref{eq:exp-integrator}. 
\end{thm}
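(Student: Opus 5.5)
Write $s = T - t_k$, $\lambda = \lambda_s$, $\sigma = \sigma_s$. The plan is to reduce the discrete update \eqref{eq:exp-integrator} to a one-step contraction of $\dist(\hat x_{t_k}, K_{\eta_0})$, mirroring the continuous-time inequality \eqref{eqn:dist_deriv_informal}.

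\textbf{Step 1: rewrite the drift via Tweedie's formula.} For the mixture $p_0$ we have
\[
\nabla\log p_s(x) = \frac{\lambda m(x) - x}{\sigma^2}, \qquad \nabla \log p_s(x,\eta_0) = \frac{\lambda m_{\eta_0}(x) - x}{\sigma^2}.
\]
Here $m(x) = \mathbb{E}[y_0 \mid y_s = x]$ is a convex combination of the component posterior means $m_\eta(x) \in K_\eta$. The guided drift therefore equals
\[
\frac{1}{\sigma^2}\Bigl(\lambda\bigl[\gamma m_{\eta_0}(x) + (1-\gamma) m(x)\bigr] - x\Bigr).
\]
Substituting into \eqref{eq:exp-integrator}, the new iterate is $e^{\Delta}\hat x_{t_k}$ plus $(e^\Delta - 1)/\sigma^2$ times this bracket. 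Since $\gamma \ge 1$, the point $\gamma m_{\eta_0} + (1-\gamma) m$ is an extrapolation that pushes away from the other components.

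\textbf{Step 2: posterior concentration.} Let $D = \dist(x/\lambda, K_{\eta_0})$ on the rescaled scale. I will use the posterior weight ratio $w_\eta p_s(x\mid\eta) / (w_{\eta_0} p_s(x\mid \eta_0))$. By Assumption \ref{assumption:target}(3), the lower bound on $p_{\eta_0}$ gives a lower bound on $p_s(x\mid\eta_0)$ of order $\exp(-\lambda^2 D^2/(2\sigma^2))$ times a polynomial factor in $\sigma$. Compactness and disjointness of the $K_\eta$ then let me split into two cases.
\begin{itemize}
\item[(a)] If $x/\lambda$ is strictly closer to $K_{\eta_0}$ than to the other components by a margin, the other components have exponentially small weight in $1/\sigma^2$. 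Then $m \approx m_{\eta_0}$, and the bracket is essentially $\lambda m_{\eta_0}(x) - x$.
\item[(b)] Otherwise, the extrapolation with $\gamma \ge 1$ still moves the point toward $K_{\eta_0}$. I expect to need the continuous-time lemma from Appendix \ref{proof:continuous-DDIM} here, namely the rigorous form of \eqref{eqn:dist_deriv_informal}, which I will take as given for the exact drift direction.
\end{itemize}
In addition, $m_{\eta_0}(x)$ concentrates near $\Proj_{K_{\eta_0}}(x/\lambda)$ as $\sigma \to 0$ whenever $D$ is bounded below. Combining this with convexity of $K_{\eta_0}$, the inner product $\langle x - \lambda\Proj(x/\lambda),\ \lambda m_{\eta_0} - x\rangle$ is at most about $-\|x-\lambda \Proj\|^2$. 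So the drift points inward with magnitude comparable to $\dist/\sigma^2$.

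\textbf{Step 3: one-step contraction and the obstacle.} Let $h = (e^\Delta - 1)/\sigma^2$. Assumption \ref{assumption:discretization} gives $\Delta \le \kappa(T - t_{k+1})$, which implies $h \lesssim \kappa$, since $\sigma^2 \asymp s$ near $T$. The update is then roughly $x \mapsto x + h(\lambda m' - x)$ with $h \le c\kappa$ small. This is a convex-combination move toward a point $\lambda m'$ lying (up to $o(1)$) in $\lambda K_{\eta_0}$, or beyond it in the extrapolated direction. Using the projection and convexity, I aim to show
\[
\dist(\hat x_{t_{k+1}}, K_{\eta_0}) \le (1 - c' h)\dist(\hat x_{t_k}, K_{\eta_0}) + \epsilon_k ,
\]
where $\epsilon_k \to 0$ and $\sum_k h_k = \infty$. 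The divergence holds because $\sum_k \Delta_k / (T - t_k) = \infty$ when $t_k \to T$. The factor $e^\Delta$ and the difference between $x/\lambda$ and $x$ only add $O(\Delta)$ terms, which are summable relative to these.

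The main obstacle is the extrapolation term. With $\gamma$ large, $\gamma m_{\eta_0} + (1-\gamma) m$ may overshoot far past $K_{\eta_0}$, so the step $h\gamma(\ldots)$ could jump over the set and cause oscillation. This is why $\kappa_0$ must depend on $\gamma$. I would first bound $\|m_{\eta_0} - m\|$ by the posterior mass on the other components, which is exponentially small when $x$ is near $K_{\eta_0}$. I would then choose $\kappa_0 \lesssim 1/\gamma$ so that the overshoot stays within a fraction of the current distance. A standard recursion argument then yields $\limsup_k \dist \le \epsilon$ for every $\epsilon > 0$, which proves the limit.
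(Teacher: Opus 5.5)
Your proposal is correct and is essentially the paper's proof. It uses the same tilted-mean (Tweedie) reformulation, the directional bound of Lemma~\ref{le:dist_deriv_general} for the first-order term, $h_k \lesssim \kappa$ to absorb the quadratic term, exponential smallness of the non-target posteriors near $K_{\eta_0}$, and $\sum_k h_k = \infty$; you fold the paper's three regime Lemmas~\ref{lemma:discretized_ddim}--\ref{lemma:discretized_ddim3} into a single perturbed contraction, and your convex-combination/obtuse-angle argument near $K_{\eta_0}$ is a slightly cleaner substitute for the crude per-step bound in Lemma~\ref{lemma:discretized_ddim3}.

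Two details need fixing when you write it out. First, the step sizes are only bounded above and can be arbitrarily small, so the recursion needs $\epsilon_k = h_k\beta_k + c_k$ with $\beta_k \to 0$ and $\sum_k c_k < \infty$, not merely $\epsilon_k \to 0$; this is in fact what your error sources produce. Second, with only the bound $-\frac{1}{4}\dist(z, K_{\eta_0})$ available in case (b), absorbing the quadratic term forces $\kappa_0 \lesssim \gamma^{-2}$ (times geometric constants) rather than $\gamma^{-1}$, which is harmless because $\kappa_0$ may depend on $(p_0,\gamma)$.
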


\begin{proof}[Proof of Theorem \ref{thm:guided_discretized_ode_qualitative}]
    We prove Theorem \ref{thm:guided_discretized_ode_qualitative} in Appendix \ref{proof:thm:guided_discretized_ode_qualitative}. 
\end{proof}

\begin{rem}
\label{rem:score}
	So far, we have considered a simplified setting in which the score functions and guidance terms are known exactly. 
	Evidently, the support robustness property continues to hold under uniform bounds on the score and guidance term errors. 
	However, we do not yet know how to extend the result to the more realistic setting in which the errors are measured in $L_2$. 
	In this case, it is natural to assume that the errors are under the base measures $(p_t)_{0 < t \leq T}$ from the forward process, as both the score and the guidance terms are trained using data drawn from these distributions.
	However, in the guided diffusion model, the intermediate states in the reverse process do not approximately follow the distributions of the forward process. Instead, their distributions depend on the choice of the guidance parameter $\gamma$. 
	This distribution mismatch represents the primary obstacle to extending our results to account for score and guidance estimation errors.
\end{rem}

\begin{rem}
    Theorem \ref{thm:guided_discretized_ode_qualitative} requires the discretization scheme to be sufficiently fine, as governed by $\kappa_0$.
    Moreover, a closer inspection of our proof shows that the interval of $\kappa_0$ for which the theorem holds may shrink as $\gamma$ increases, 
    indicating that a large guidance strength combined with insufficiently fine discretization can lead to physically distorted samples.
\end{rem}

\begin{rem}
    Although we analyze a commonly used discretization scheme here, we do expect similar results to hold for other discretization schemes.
    One reason is that the one-step displacement under different discretization schemes is $\kappa$ under Assumption \ref{assumption:discretization}. 
    The drift evaluation at $\hat{z}_{t_k}$ is therefore a close approximation of the continuous-time drift, which blows up as $\mathcal{O}(1/\sigma^2_{T-t})$. 
    Thus, the rate of the contraction of the distance function is unaffected.
\end{rem}

Our main result \cref{thm:continuous-DDIM} assumes that the support domains of the mixture components are convex. 
It is worth noting that this convexity requirement may be somewhat restrictive, as practical mixture distributions do not necessarily exhibit convex support.
We make two remarks regarding this point.
First, our numerical experiments indicate that our theoretical findings continue to hold for mixture distributions with non-convex support.
Second, although many high-dimensional datasets in practice may not have convex support, their latent representations in a lower-dimensional subspace often form simpler, and sometimes even linearly separable clusters. 
For instance, this effect is evident in the Principal Component Analysis (PCA) plots of human genomics data \cite{elhaik2022principal}.
On this note, we remark that Latent Diffusion Models introduced in \cite{rombach2022highresolution} 
have been shown to yield highly competitive performance on various image-related tasks compared to diffusion models trained directly on pixel space while being more computationally efficient.

\subsection{Intuition: reformulating DDIM with guidance}
\label{sec:reformulate_DDIM}

In this section, we present a change-of-variables reformulation of the process \eqref{eqn:ddim_guidance} that simplifies the analysis and offers intuition for the proof of Theorem \ref{thm:continuous-DDIM}. 
Observe that for $0 < t \leq T$, we can express the density functions $p_t(\cdot)$ and $p_t(\eta_0 \mid \cdot)$ using Gaussian convolutions:  
\begin{align*}
    & p_t(x) = \int \sum_{\eta \in \cI} w_{\eta} p_{\eta}(x_0) \phi_t(x - \lambda_t x_0) \dd x_0, \\
    & p_t(\eta_0 \mid x) = \frac{w_{\eta_0} \int p_{\eta_0}(x_0) \phi_t(x - \lambda_t x_0) \dd x_0 }{\sum_{\eta \in \cI} w_{\eta} \int p_{\eta}(x_0) \phi_t(x - \lambda_t x_0) \dd x_0}, \\
    & \phi_t(z) = \frac{1}{(\sqrt{2\pi} \sigma_t)^d} \exp\Big( -\frac{1}{2\sigma_t^2} \|z\|^2 \Big). 
\end{align*}
Taking the gradients of $p_t(x)$ and $p_t(\eta_0 \mid x)$ with respect to $x$, we obtain by Stein's formula \citep{stein1981estimation} the following expressions for the score function and the guidance term: 
\begin{align*}
    & \nabla \log p_t(x) = \frac{\sum_{\eta \in \cI} w_{\eta} \int p_{\eta}(x_0) \phi_t(x - \lambda_t x_0) (-\sigma_t^{-2} x + \lambda_t \sigma_t^{-2} x_0) \dd x_0}{\sum_{\eta \in \cI} w_{\eta} \int p_{\eta}(x_0) \phi_t(x - \lambda_t x_0) \dd x_0}, \\
    & \nabla \log p_t(\eta_0 \mid x) 
    = \frac{\int p_{\eta_0}(x_0) \phi_t(x - \lambda_t x_0) (-\sigma_t^{-2} x + \lambda_t \sigma_t^{-2} x_0) \dd x_0 }{\int p_{\eta_0}(x_0) \phi_t(x - \lambda_t x_0) \dd x_0 } \\
    & \qquad\qquad\qquad\qquad - \frac{\sum_{\eta \in \cI} w_{\eta} \int p_{\eta}(x_0) \phi_t(x - \lambda_t x_0) (-\sigma_t^{-2} x + \lambda_t \sigma_t^{-2} x_0) \dd x_0}{\sum_{\eta \in \cI} w_{\eta} \int p_{\eta}(x_0) \phi_t(x - \lambda_t x_0) \dd x_0}. 
\end{align*}
For notational simplicity, for $\eta \in \cI$ and $0 < t \leq T$, we define 
\begin{align}
\label{eq:bar-eta-bar-m}
\begin{split}
    & \bar{\zeta}_{\eta, t}(x) = \frac{w_{\eta} \int p_{\eta}(x_0) \phi_t(x - \lambda_t x_0) \dd x_0 }{\sum_{\eta \in \cI} w_{\eta} \int p_{\eta}(x_0) \phi_t(x - \lambda_t x_0) \dd x_0} = p_t(Y = \eta \mid X_t = x), \\
    & \bar{m}_{\eta, t}(x) = \frac{\int x_0 p_{\eta}(x_0) \phi_t(x - \lambda_t x_0)  \dd x_0}{\int p_{\eta}(x_0) \phi_t(x - \lambda_t x_0) \dd x_0} = \mathbb{E}\big[ X_0 \mid X_t = x, \, Y = \eta \big], 
\end{split}
\end{align}
where $p_t(Y = \eta) = w_{\eta}$, $X_0 \mid Y = \eta \sim p_{\eta}$, $X_t = \lambda_t X_0 + \sigma_t Z$, and $Z \sim \cN(0, \id_d)$ is independent of $(X_0, Y)$. 
Observe that for all $x \in \R^d$, it holds that $\sum_{\eta \in \cI} \bar \zeta_{\eta, t}(x) = 1$, $\nabla \log p_t(x) = -\sigma_t^{-2} x + \lambda_t \sigma_t^{-2} \sum_{\eta \in \cI} \bar \zeta_{\eta, t}(x) \bar m_{\eta, t}(x)$, and
\begin{equation*}
\nabla \log p_t(\eta_0 \mid x) = \lambda_t \sigma_t^{-2} \bar m_{\eta_0, t}(x) - \lambda_t \sigma_t^{-2} \sum_{\eta \in \cI} \bar\zeta_{\eta, t}(x)  \bar m_{\eta, t}(x).
\end{equation*}
Plugging the above expressions into the guided ODE \eqref{eqn:ddim_guidance}, we conclude that 
\begin{align}
\begin{split}
  &\dd x_t =  \Big\{ \frac{\sigma_{T - t}^2 - 1}{\sigma_{T - t}^2} x_t + \frac{\gamma \lambda_{T - t}}{\sigma_{T - t}^2} \bar{m}_{\eta_0, T  - t} (x_t) - \frac{ (\gamma - 1) \lambda_{T - t}}{\sigma_{T - t}^2} \sum_{\eta \in \cI}  \bar\zeta_{\eta, T - t}(x_t) \bar{m}_{\eta, T - t}(x_t) \Big\} \dd t \label{eq:ODE-reformulation-step1}\\
    &=  \Big\{  x_t + 
    \frac{\gamma \lambda_{T - t}}{\sigma_{T - t}^2} \Big(  \bar{m}_{\eta_0, T  - t} (x_t) -  \frac{x_t}{\lambda_{T - t}} \Big)  - \frac{ (\gamma - 1) \lambda_{T - t}}{\sigma_{T - t}^2} \sum_{\eta \in \cI} \bar{\zeta}_{\eta, T - t}(x_t) \Big( \bar{m}_{\eta, T - t}(x_t) - \frac{x_t}{\lambda_{T - t}}  \Big) \Big\} \dd t. 
\end{split}
\end{align}
Define $z_t = \lambda_{T - t}^{-1} x_t$ for $0 \leq t < T$. By \cref{eq:ODE-reformulation-step1}, we see that the resulting process $(z_t)_{0 \leq t < T}$ is the solution to the following ODE: 
\begin{equation}\label{eqn:ode_time_scaled}
    \dd z_t = \Big\{ \frac{\gamma}{\sigma_{T - t}^2} \left( m_{\eta_0, T - t}(z_t) - z_t \right) - \frac{\gamma - 1}{\sigma_{T - t}^2} \sum_{\eta \in \cI} \zeta_{\eta, T - t}(z_t) \left( m_{\eta, T - t}(z_t) - z_t \right) \Big\} \dd t, 
\end{equation}
where for $\eta \in \cI$ and $0 \leq t < T$, 
\begin{align}
\label{eq:zeta-and-m}
\begin{split}
    & \zeta_{\eta, t}(z) = \frac{w_{\eta} \int p_{\eta}(x_0) \phi_t(\lambda_t(x_0 - z)) \dd x_0}{\sum_{\eta \in \cI} w_{\eta} \int p_{\eta}(x_0) \phi_t(\lambda_t(x_0 - z)) \dd x_0} = p_t(Y = \eta \mid X_t = \lambda_t z), \\
    & m_{\eta, t} (z) = \frac{\int x_0 p_{\eta}(x_0) \phi_t(\lambda_t(x_0 - z)) \dd x_0}{\int p_{\eta}(x_0) \phi_t(\lambda_t(x_0 - z)) \dd x_0} = \E\big[ X_0 \mid X_t = \lambda_t z,\, Y = \eta \big]. 
\end{split}
\end{align}
By definition, $\sum_{\eta \in \cI} \zeta_{\eta, t}(z) = 1$ for all $0 \leq t < T$ and $z \in \R^d$. 
Note that the vector-valued function $m_{\eta, t}(z)$ can be interpreted as the mean of a tilted version of $p_{\eta}$, with the tilt determined by the input vector $z \in \R^d$.
When $K_{\eta}$ is convex, we know that $m_{\eta, t}(z) \in K_{\eta}$ for all $z \in \R^d$.

For notational convenience, for $\eta \in \cI$ and $0 < t \leq T$, we define
\begin{align}
\label{eq:F-def}
    F_{\eta, t}(z) = m_{\eta, t}(z) - z. 
\end{align}
With the above definition, we can re-write ODE \eqref{eqn:ode_time_scaled} as 
\begin{align}\label{eqn:ode_general}
    \dd z_t = 
    \Big\{ 
        \frac{\gamma}{\sigma_{T - t}^2} F_{\eta_0, T - t}(z_t) 
        - \frac{\gamma - 1}{\sigma_{T - t}^2} 
            \sum_{\eta \in \cI} \zeta_{\eta, T - t}(z_t) F_{\eta, T - t}(z_t) 
    \Big\} \dd t.
\end{align}
Intuitively, one may interpret the reformulation of \eqref{eqn:ddim_guidance} that we derive here as a force-balance equation: each term $F_{\eta, t}(z_t)$ acts as a ``force'' that pulls $z_t$ towards a point $m_{\eta, t}(z_t)$ in $K_{\eta}$.
When $\gamma = 1$, the only acting force is $F_{\eta_0, T - t}(z_t)$, and one should therefore expect that the trajectory of $z_t$ to move toward $K_{\eta_0}$ as $t$ increases. 
When $\gamma > 1$, the trajectory is subject to additional forces that repel it from other regions, 
and it is not {a priori} clear that the trajectory will still move toward $K_{\eta_0}$. 
Nevertheless, our analysis shows that the force directed toward $K_{\eta_0}$ dominates the other forces, thereby ensuring that the trajectory of $z_t$ ultimately converges to $K_{\eta_0}$.
See Figure \ref{fig:guidance} for an illustration of ODE \eqref{eqn:ode_general}.

\begin{figure}[ht]
	\centering
	\includegraphics[width=0.6\textwidth]{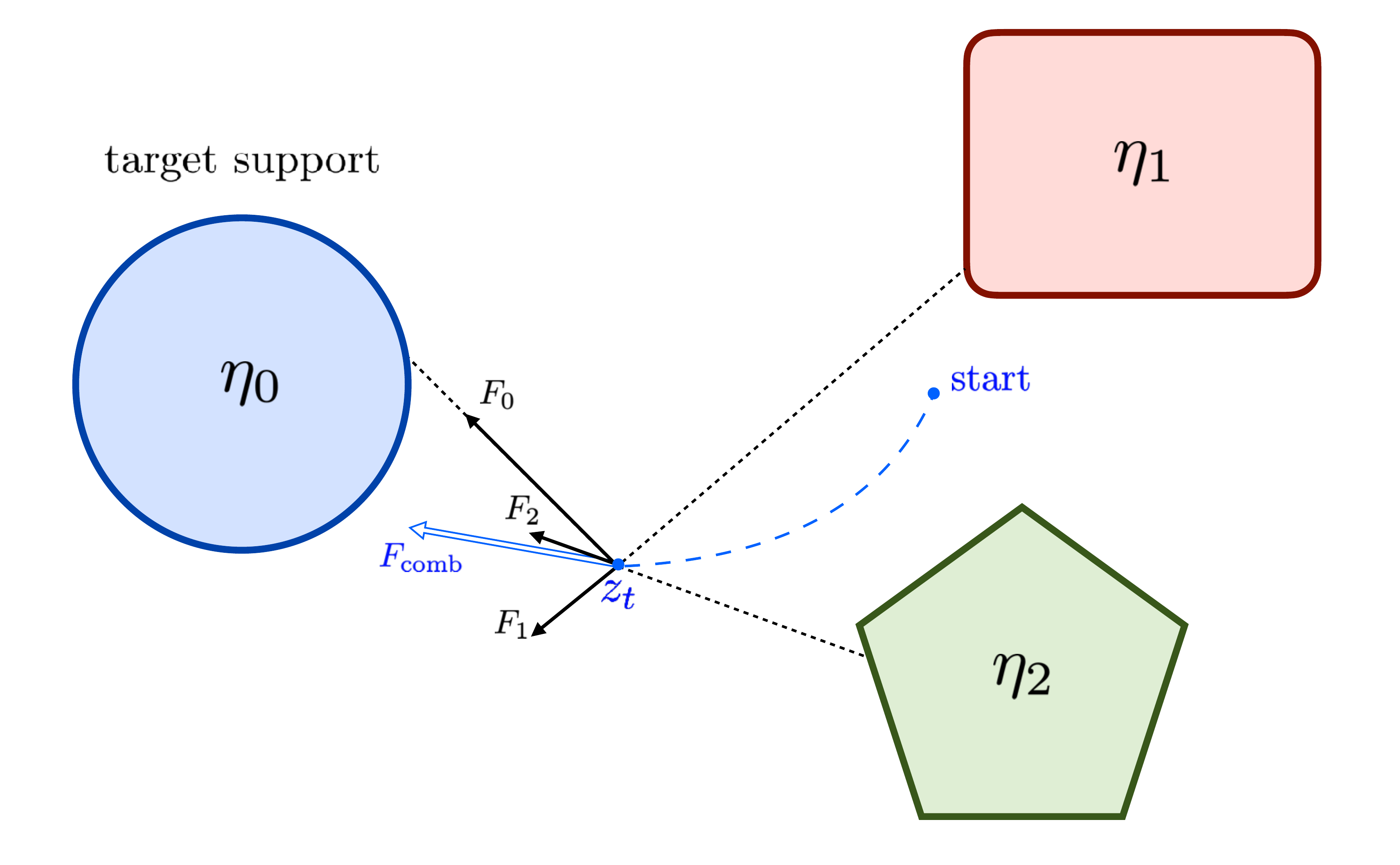}
	\caption{Illustration of ODE \eqref{eqn:ode_general} when $\cI = \{\eta_0, \eta_1, \eta_2\}$. Here, guidance is towards the blue circle with label $\eta_0$, $F_0 =  \frac{\gamma - (\gamma - 1) \zeta_{\eta_0, T - t}(z_t)}{\sigma_{T - t}^2} F_{\eta_0, T - t}(z_t) $, $F_1 = - \frac{\gamma - 1}{\sigma_{T - t}^2} \zeta_{\eta_1, T - t}(z_t) F_{\eta_1, T - t}(z_t) $ and $F_2 =  -\frac{\gamma - 1}{\sigma_{T - t}^2} \zeta_{\eta_2, T - t}(z_t) F_{\eta_2, T - t}(z_t) $. The combined force is given by $F_{\rm comb} = F_0 + F_1 + F_2$. }
	\label{fig:guidance}
\end{figure} 

\section{Results for DDPM}
\label{sec:main_results_DDPM}

In this section, we extend our deterministic results in Section \ref{sec:main_results_DDIM} to the stochastic setting, and establish the support robustness property for DDPM.

\subsection{Summary of main results} 

We present the main results for the DDPM sampler in this section, covering both the continuous-time formulation and a discrete-time scheme based on an exponential integrator.
We start with the continuous-time result. 

\begin{thm}\label{thm:guided_sde_qualitative}
  Let $(x_t)_{0 \le t < T}$ be any sample path that represents a solution to the guided SDE \eqref{eqn:ddpm_guidance}, then under Assumption \ref{assumption:target}, for any starting location $x_0 \in \R^d$ and guidance strength $\gamma \geq 1$, almost surely we have 
  \begin{equation*}
        \lim_{t \to T^-} \dist(x_t, K_{\eta_0}) = 0.
  \end{equation*}
\end{thm}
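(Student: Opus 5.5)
We reduce to the rescaled coordinates, as in Section~\ref{sec:reformulate_DDIM}. Set $z_t = \lambda_{T-t}^{-1} x_t$. Applying It\^o's formula to \eqref{eqn:ddpm_guidance}, the linear terms cancel exactly as in the DDIM computation, and we obtain
\begin{align*}
\dd z_t = \Big\{ \frac{2\gamma}{\sigma_{T-t}^2} F_{\eta_0,T-t}(z_t) - \frac{2(\gamma-1)}{\sigma_{T-t}^2}\sum_{\eta\in\cI}\zeta_{\eta,T-t}(z_t)F_{\eta,T-t}(z_t)\Big\}\dd t + \sqrt{2}\,\lambda_{T-t}^{-1}\dd B_t .
\end{align*}
Since $\lambda_{T-t}\to 1$, we have $\dist(x_t,K_{\eta_0})\to 0$ if and only if $\dist(z_t,\lambda_{T-t}^{-1}K_{\eta_0})\to 0$. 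Because $K_{\eta_0}$ is compact, this in turn is equivalent to $\dist(z_t,K_{\eta_0})\to 0$. The drift is exactly twice the DDIM drift in \eqref{eqn:ode_general}. So the deterministic contraction inequality behind \eqref{eqn:dist_deriv_informal}, proved in Appendix~\ref{proof:continuous-DDIM}, gives the following: for every $\epsilon>0$ there is $t_\epsilon<T$ such that, whenever $t\ge t_\epsilon$ and $\dist(z,K_{\eta_0})\ge\epsilon$, the drift $b_t(z)$ satisfies $\langle b_t(z), \nabla D(z)\rangle \le -\frac{1}{2\sigma_{T-t}^2}D(z)$, where $D(z)=\dist(z,K_{\eta_0})$.

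The key step is to control the noise. We work with $V(z)=\tfrac12 D(z)^2$, which is $C^1$ with $1$-Lipschitz gradient $z-\Proj_{K_{\eta_0}}(z)$ because $K_{\eta_0}$ is convex. It is semiconvex, so It\^o--Tanaka or a mollification argument yields
\begin{align*}
\dd V(z_t) \le \Big(-\frac{V(z_t)}{\sigma_{T-t}^2} + d\,\lambda_{T-t}^{-2}\Big)\dd t + \sqrt2\,\lambda_{T-t}^{-1}\langle z_t-\Proj(z_t),\dd B_t\rangle
\end{align*}
on the region $\{D\ge\epsilon,\ t\ge t_\epsilon\}$. Near $T$ the drift term $-V/\sigma^2_{T-t}$ blows up like $1/(T-t)$, since $\sigma_{T-t}^2\sim 2(T-t)$, while the noise coefficients stay bounded. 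This is the mechanism we will exploit.

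\emph{Main obstacle.} The obstacle is turning this infinite drift-to-noise ratio into almost sure convergence. Note that the bounded-variance noise accumulated over $[t,T)$ is only $O(\sqrt{T-t})$, which is small. We make this precise pathwise as follows.
\begin{enumerate}
\item \emph{Non-explosion.} The drift grows at most linearly in $z$, since $m_{\eta,t}\in K_\eta$ is bounded. Hence $z_t$ does not explode before $T$, and $\sup_{t<T}\|\int_0^t \lambda^{-1}\dd B\|<\infty$ almost surely.
\item \emph{Comparison.} Fix $\epsilon$. Set $N_t=\sqrt2\int_{t_\epsilon}^t\lambda_{T-s}^{-1}\dd B_s$, which converges almost surely as $t\to T$ and is therefore uniformly continuous on $[t_\epsilon,T)$. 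Consider $w_t=z_t-N_t+N_T$, which is a $C^1$ path with the same drift evaluated at $z_t$. Since the drift is Lipschitz with constant $O(\sigma^{-2})$ and $\|z_t-w_t\|=\|N_T-N_t\|$, the discrepancy between them is $\sigma^{-2}\cdot o(1)$.
\item \emph{Absorbing the error.} This discrepancy is dominated by the contraction term $\epsilon/\sigma^2$ once $t$ is close enough to $T$ that $\|N_T-N_t\|\le c\epsilon$. By the ODE argument, $D(w_t)$ therefore falls below $2\epsilon$ and stays there. Transferring back, $D(z_t)\le 3\epsilon$ eventually.
\end{enumerate}
Taking $\epsilon\downarrow0$ along a countable sequence gives almost sure convergence. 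The delicate point is step 2, the Lipschitz bound for $F_{\eta,t}$ and $\zeta_{\eta,t}$. The Jacobian of $m_{\eta,t}$ is $\lambda_t^2\sigma_t^{-2}\,\mathrm{Cov}$ of a tilted law supported on a compact set, so it is $O(\sigma^{-2})$, and $\zeta$ behaves the same way. Alternatively, if this proves too lossy, one can work directly with the supermartingale-type inequality for $V$ over dyadic time blocks near $T$ and use Borel--Cantelli.
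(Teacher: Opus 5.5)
\paragraph{The gap: the drift is not $O(\sigma^{-2})$-Lipschitz.} This is the step you flagged as delicate, and it fails as written. Because $\sum_\eta\zeta_{\eta,T-t}=1$, the drift is $b_t(z)=2\sigma_{T-t}^{-2}\bigl(G_t(z)-z\bigr)$ with $G_t=\gamma m_{\eta_0,T-t}-(\gamma-1)\sum_\eta\zeta_{\eta,T-t}m_{\eta,T-t}$. Your own Jacobian bounds say $\nabla m_{\eta,T-t}$ and $\nabla\zeta_{\eta,T-t}$ are of order $\sigma_{T-t}^{-2}$, so they give $\mathrm{Lip}(b_t)=O(\sigma_{T-t}^{-4})$, not $O(\sigma_{T-t}^{-2})$: you dropped the prefactor $\sigma_{T-t}^{-2}$ of the drift.

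For $\gamma>1$ this order is genuinely attained. We have $\nabla\log\zeta_{\eta,T-t}=\tilde\sigma_{T-t}^{-2}\bigl(m_{\eta,T-t}-\sum_{\eta'}\zeta_{\eta',T-t}m_{\eta',T-t}\bigr)$, and tilted means of different components are at least $d_0$ apart. Hence the weights switch from $\approx 0$ to $\approx 1$ across a layer of width of order $\sigma_{T-t}^2$ around the sets $\{\dist(z,K_{\eta_0})=\dist(z,K_\eta)\}$. There $\|\nabla b_t\|\asymp(\gamma-1)\sigma_{T-t}^{-4}$, and these sets lie in $\{D\ge\epsilon\}$ once $\epsilon<d_0/2$.

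On the other hand, $N_T-N_t$ behaves like $\sqrt2(B_T-B_t)$, so by the law of the iterated logarithm it is almost surely not $o(\sqrt{T-t})=o(\sigma_{T-t})$. The bound $\|b_t(z_t)-b_t(w_t)\|\le\mathrm{Lip}(b_t)\|N_T-N_t\|$ can therefore only be shown to be of order $\sigma_{T-t}^{-3}$ along a sequence of times. That cannot be absorbed into the contraction $\epsilon/\sigma_{T-t}^2$, so steps 2--3 do not go through as stated.

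\paragraph{The repair: move the gradient of the distance, not the argument of the drift.} Since $\dot w_t=b_t(z_t)$,
\[
\frac{\dd}{\dd t}D(w_t)=\langle\nabla D(z_t),\,b_t(z_t)\rangle+\langle\nabla D(w_t)-\nabla D(z_t),\,b_t(z_t)\rangle .
\]
The first term is at most $-D(z_t)/(2\sigma_{T-t}^2)$, by the pointwise contraction inequality applied at $z_t$ itself.

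For the second term, two facts suffice.
\begin{enumerate}
\item $\nabla D(z)=(z-\Proj_{K_{\eta_0}}(z))/D(z)$ and $z\mapsto z-\Proj_{K_{\eta_0}}(z)$ is nonexpansive. Hence $\|\nabla D(w)-\nabla D(z)\|\le 2\|w-z\|/\epsilon$ whenever $D(w),D(z)\ge\epsilon$.
\item $\|b_t(z)\|\le 2\sigma_{T-t}^{-2}\bigl((2\gamma-1)R_0+\|z\|\bigr)$ with $R_0=\sup_{y\in\cup_\eta K_\eta}\|y\|$. Moreover $\sup_{t_\epsilon\le t<T}\|z_t\|<\infty$ almost surely, because $\frac{\dd}{\dd t}\|w_t\|\le 2\sigma_{T-t}^{-2}(M-\|w_t\|)$ with $M=(2\gamma-1)R_0+\sup_{t\ge t_\epsilon}\|N_T-N_t\|$.
\end{enumerate}
So the error term is $\sigma_{T-t}^{-2}\cdot O(\|N_T-N_t\|/\epsilon)=\sigma_{T-t}^{-2}\cdot o(1)$. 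No regularity of $m$ or $\zeta$ is needed at all.

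Your argument then closes with a path-dependent time threshold. If $D(w_t)\ge 2\epsilon$ and $\|N_T-N_t\|\le\epsilon/2$, then $D(z_t)\ge\tfrac32\epsilon$, so $\frac{\dd}{\dd t}D(w_t)\le-\epsilon/(2\sigma_{T-t}^2)$ for $t$ near $T$. Since $\int^T\sigma_{T-t}^{-2}\,\dd t=\infty$, $D(w_t)$ reaches $2\epsilon$ and stays below it. Transferring back gives $D(z_t)\le 3\epsilon$ eventually.

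\paragraph{Comparison with the paper.} With this repair your proof is genuinely different from the paper's. The paper proceeds as follows:
\begin{enumerate}
\item It replaces $K_{\eta_0}$ by a smooth convex approximation, so that It\^o's formula applies to the distance, with a curvature bound on its Laplacian.
\item It compares with a one-dimensional Brownian motion with drift $-\epsilon/\sigma_{T-t}^2$ to get almost-sure hitting of a level of order $\gamma\epsilon$.
\item It rules out infinitely many upcrossings using dyadic intervals, Bernstein's maximal inequality and Borel--Cantelli.
\end{enumerate}
This is close to your fallback with $V$.

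Your pathwise reduction instead exploits that the noise coefficient $\sqrt2\lambda_{T-t}^{-1}$ is bounded. The noise tail therefore vanishes almost surely, and the SDE becomes the deterministic dynamics of Lemma~\ref{le:dist_deriv_general} with a vanishing perturbation. This avoids the smoothing step, the comparison theorem and the upcrossing bookkeeping, and reuses the ODE lemma essentially verbatim.
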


\begin{proof}[Proof of Theorem \ref{thm:guided_sde_qualitative}]
	We prove Theorem \ref{thm:guided_sde_qualitative} in Appendix \ref{proof:thm:guided_sde_qualitative}. 
\end{proof}

Similar to the update rule defined in \cref{eq:exp-integrator}, we apply an exponential integrator scheme to DDPM with guidance. 
Specifically, with time steps $0 = t_0 < t_1 < \ldots < t_N < T$, 
we perform the following updates on the  interval $[t_k, t_{k + 1})$ for $k = 0, 1, \ldots, N - 1$: 
\begin{align}
\label{eq:discretized-SDE}
    \dd \hat x_t 
    = \big( \hat x_t + 2 \nabla \log p_{T - t_k}(\hat x_{t_k}) + 2 \gamma \nabla \log p_{T - t_k}(\eta_0 \mid \hat x_{t_k}) \big) \D t + \sqrt{2} \dd B_t.   
\end{align}
We next show that, almost surely, the sample path defined by process \eqref{eq:discretized-SDE} enters the target support $K_{\eta_0}$.
Note that the update rule defined in \cref{eq:discretized-SDE} admits the following explicit form:
\begin{align}
\label{eq:15}
\begin{split}
    & \hat x_{t_{k + 1}} = e^{\Delta_k} \hat x_{t_k} + 2 (e^{\Delta_k} - 1) \big( \nabla \log p_{T - t_k}(\hat x_{t_k}) + \gamma \nabla \log p_{T - t_k}(\eta_0 \mid \hat x_{t_k}) \big) + \sqrt{ e^{2\Delta_k} - 1} \,Z_k, \\
    &\Delta_k = t_{k + 1} - t_k, \qquad  Z_k \sim_{i.i.d.} \cN(0, I_d),  \qquad  k = 0, 1, \cdots, N - 1.
\end{split}
\end{align}
The following theorem guarantees that process \eqref{eq:15} almost surely converges to the target support. 

\begin{thm}
\label{thm:discretized-ddpm}
	Under Assumptions \ref{assumption:target} and \ref{assumption:discretization}, 
    there exists $\kappa_0 > 0$ that depends only on $(p_0, \gamma, T)$, such that for all $0 < \kappa \leq \kappa_0$, the following statement is true: 
    for any discretization scheme $\{t_k\}_{k = 0}^{\infty}$ that satisfies $t_0 = 0$, $T > t_{k + 1} > t_k \geq 0$, and $t_k \to T$ as $k \to \infty$, for any initialization $\hat x_{t_0} \in \R^d$ and guidance strength $\gamma \geq 1$, almost surely we have
    \begin{align*}
        \lim_{k \to \infty} \dist(\hat x_{t_k}, K_{\eta_0}) = 0. 
    \end{align*}
    Here, the sequence $\{\hat x_{t_k}\}_{k=0}^\infty$ is produced by \cref{eq:15}. 
\end{thm}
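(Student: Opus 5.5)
We work in the rescaled variable $\hat z_k = \lambda_{T-t_k}^{-1}\hat x_{t_k}$, exactly as in Section~\ref{sec:reformulate_DDIM}. Write $s_k = T-t_k$.

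\textbf{Step 1: rescaled recursion.} Substituting the formulas for $\nabla\log p_t$ and $\nabla\log p_t(\eta_0\mid\cdot)$ into \cref{eq:15} and dividing by $\lambda_{s_{k+1}} = e^{\Delta_k}\lambda_{s_k}$ gives
\begin{align*}
\hat z_{k+1} = \hat z_k + h_k\, G_k(\hat z_k) + \xi_k, \qquad h_k = \frac{2(e^{\Delta_k}-1)\lambda_{s_k}^2}{e^{\Delta_k}\sigma_{s_k}^2},
\end{align*}
where $G_k = \gamma F_{\eta_0,s_k} - (\gamma-1)\sum_\eta \zeta_{\eta,s_k}F_{\eta,s_k}$ is the drift of \eqref{eqn:ode_general}. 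The noise $\xi_k$ is centered Gaussian with standard deviation $\sqrt{e^{2\Delta_k}-1}/\lambda_{s_{k+1}}$. Assumption~\ref{assumption:discretization} gives $\Delta_k\le\kappa s_{k+1}$. Hence $h_k \asymp \Delta_k/s_k \le 2\kappa$, and the noise variance is of order $\Delta_k$. Summing, $\sum_k \mathrm{Var}(\xi_k)\lesssim T<\infty$, so the martingale $M_n=\sum_{k<n}\xi_k$ converges almost surely. Moreover, the noise variance is only $O(h_k s_k)$. This means the noise is negligible relative to the drift step as $s_k\to0$; this is why $\kappa_0$ may depend on $T$.

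\textbf{Step 2: a deterministic one-step contraction.} I would reuse the drift estimate behind \eqref{eqn:dist_deriv_informal} and Theorem~\ref{thm:guided_discretized_ode_qualitative}, with $D(z)=\dist(z,K_{\eta_0})$ and $\varepsilon>0$ fixed. For $s$ small and $D(z)\ge\varepsilon$, the posterior $\zeta_{\eta_0,s}(z)$ is exponentially close to either $0$ or $1$. The separation of the compact convex sets makes each repulsive term $-(\gamma-1)\zeta_\eta F_\eta$ with $\eta\ne\eta_0$ either negligible or pointing away from $K_\eta$. Meanwhile $m_{\eta_0,s}(z)$ concentrates near $\Proj_{K_{\eta_0}}(z)$; here the lower density bound is used. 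Together these give $\langle G_k(z), z-\Proj_{K_{\eta_0}}(z)\rangle \le -c D(z)^2$ and $\|G_k(z)\|\le C(1+D(z)+\|z\|)$. Since $h_k\le 2\kappa$, taking $\kappa\le\kappa_0$ small yields
\[
D\big(z+h_kG_k(z)\big)\le(1-c'h_k)D(z).
\]
I would also need a coarse bound keeping $\hat z_k$ bounded on the early stretch where $s_k$ is not small. This comes from linear growth of the drift and the Gaussian increments, and is finite almost surely.

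\textbf{Step 3: absorbing the noise (main obstacle).} $D$ is $1$-Lipschitz, so
\[
D(\hat z_{k+1})\le(1-c'h_k)D(\hat z_k)+\|\xi_k\|
\]
whenever $D(\hat z_k)\ge\varepsilon$ and $k\ge k_0$. Two facts close the argument: $\sum h_k=\infty$, because $\sum\Delta_k/s_k$ diverges like $\log(T/s_k)$, and almost surely $\sum_k\|\xi_k\|^2<\infty$. I expect the difficulty to be that $\sum_k\|\xi_k\|$ need not be finite. The fix I would try first is to apply the increment bound to $D^2$ instead, using $\|\xi_k\|^2$ and the martingale $\sum\langle \xi_k, \nabla D\rangle$, which converges almost surely by Step~1. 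Equivalently, one can exploit that the variance of $\xi_k$ is $O(h_k s_k)$ with $s_k\to0$: conditioning on $\sigma$-fields, a Robbins--Siegmund supermartingale lemma gives that $D(\hat z_k)^2$ converges and that $\sum h_k (D^2-\varepsilon^2)_+<\infty$. Combined with $\sum h_k=\infty$, this forces $\limsup D(\hat z_k)\le\varepsilon$ almost surely. Letting $\varepsilon\downarrow0$ along a countable sequence, and then undoing the rescaling with $\lambda_{s_k}\to1$ and $K_{\eta_0}$ bounded, gives $\dist(\hat x_{t_k},K_{\eta_0})\to0$.
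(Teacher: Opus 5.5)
Your strategy is the paper's, recast in discrete time: a rescaled recursion, the drift estimate from the proof of Lemma~\ref{le:dist_deriv_general}, a squared-distance supermartingale with summable noise, and Robbins--Siegmund. Working with the exact grid recursion $\hat z_{k+1}=\hat z_k+h_kG_k(\hat z_k)+\xi_k$ is in fact cleaner than the paper's interval-frozen rescaling.

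\textbf{Gap: the regime $D(\hat z_k)<\varepsilon$ is never controlled.} To get ``$D(\hat z_k)^2$ converges and $\sum_k h_k(D^2-\varepsilon^2)_+<\infty$'' from Robbins--Siegmund, you need the following for every state, including $\hat z_k$ near or inside $K_{\eta_0}$:
\[
\mathbb{E}\big[D(\hat z_{k+1})^2\mid\mathcal F_k\big]\le D(\hat z_k)^2-c\,h_k\big(D(\hat z_k)^2-\varepsilon^2\big)_++\beta_k,\qquad \textstyle\sum_k\beta_k<\infty .
\]
Steps 2--3 give contraction only when $D(\hat z_k)\ge\varepsilon$. Outside that regime your only estimate is $\|G_k\|\le C(1+D+\|z\|)$. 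That lets $D$ grow by $O(\kappa)$ per step, which is not summable over infinitely many steps. The paper covers this regime in Lemma~\ref{lemma:C3} with a global recursion whose additive term is summable.

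In your framework, a clean fix is as follows.
\begin{itemize}
\item Write $G_k=a_kF_{\eta_0,s_k}-(\gamma-1)\sum_{\eta\ne\eta_0}\zeta_{\eta,s_k}F_{\eta,s_k}$, with $a_k=\gamma-(\gamma-1)\zeta_{\eta_0,s_k}\in[1,\gamma]$.
\item Take $\kappa_0$ small enough that $h_k\gamma\le1$. Then $\hat z_k+h_ka_kF_{\eta_0,s_k}(\hat z_k)$ is a convex combination of $\hat z_k$ and $m_{\eta_0,s_k}(\hat z_k)\in K_{\eta_0}$. Its distance to $K_{\eta_0}$ is therefore at most $(1-h_ka_k)D(\hat z_k)$.
\item When $D(\hat z_k)<d_0/3$, the remainder is $O\big(h_ke^{-d_0^2/(24\tilde\sigma_{s_k}^2)}\big)$ by \eqref{eq:sum-zeta-upper-2}, which is summable in $k$.
\end{itemize}
This gives the needed inequality for $D<\varepsilon$. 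Once $s_k$ is small, it also gives a rate-$h_k$ contraction on all of $\{\varepsilon\le D<d_0/3\}$.

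\textbf{Second problem: $\kappa_0$ is not uniform in $\varepsilon$.} Your contraction requires $h_k\|G_k\|^2\lesssim D^2$. With your bound on $\|G_k\|$, at $D\approx\varepsilon$ this forces $\kappa\lesssim\varepsilon^2$. Then $\kappa_0$ depends on $\varepsilon$, and the final limit $\varepsilon\downarrow0$ is lost. The fix splits at $d_0/3$, which is exactly why the paper does so in Lemmas~\ref{lemma:discretized-ddpm-1}--\ref{lemma:discretized-ddpm-2}.
\begin{itemize}
\item On $\{D<d_0/3\}$, the convex-combination argument above removes the dependence.
\item On $\{D\ge d_0/3\}$, use $\|F_\eta(z)\|\le D(z)+D_0$. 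This gives $\|G_k\|\le(2\gamma-1)(1+3D_0/d_0)D$, so only $\kappa\lesssim_{\gamma,p_0}1$ is needed.
\end{itemize}

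\textbf{Minor slips.}
\begin{itemize}
\item The step size is $h_k=2(1-e^{-\Delta_k})/\sigma_{s_k}^2$, with no factor $\lambda_{s_k}^2$.
\item The total noise variance telescopes to $d(e^{2T}-1)$, not $O(T)$.
\item $\zeta_{\eta_0,s}(z)$ is not exponentially close to $0$ or $1$ when $z$ is nearly equidistant from two supports. The $\mathcal I^{>},\mathcal I^{\approx},\mathcal I^{<}$ argument you reuse does not need that claim, so nothing breaks.
\end{itemize}
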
 
\begin{proof}
	We prove Theorem \ref{thm:discretized-ddpm} in Appendix \ref{proof:thm:discretized-ddpm}. 
\end{proof}

\subsection{Intuition: reformulating DDPM with guidance}

As in the derivations of Section \ref{sec:reformulate_DDIM}, SDE \eqref{eqn:ddpm_guidance} can be reformulated as
\begin{align*}
& \dd x_t - \sqrt{2} \dd B_t
=\\ & \Big\{  x_t + 
    \frac{2 \gamma \lambda_{T - t}}{\sigma_{T - t}^2} \Big(  \bar m_{\eta_0, T  - t} (x_t) - \frac{x_t}{\lambda_{T - t}}  \Big) 
  - \frac{ 2(\gamma - 1) \lambda_{T - t}}{\sigma_{T - t}^2} \sum_{\eta \in \cI} \bar\zeta_{\eta, T - t}( x_t) \Big( \bar m_{\eta, T - t}(x_t) - \frac{x_t}{\lambda_{T - t}}   \Big) \Big\} \dd t, 
\end{align*}
where we recall that $\bar m_{\eta, t}$ and $\bar \zeta_{\eta, t}$ are defined in \cref{eq:bar-eta-bar-m}.
Define $z_t = e^{T - t} x_t$ for $0 \leq t < T$. 
With such a change of variables, we have
\begin{align}
\begin{split}
    \dd z_t &= \Big\{\frac{2 \gamma}{\sigma_{T - t}^2} \big(  {m}_{\eta_0, T - t} ({ z_t}) - {z_t} \big) - \frac{ 2(\gamma - 1)}{\sigma_{T - t}^2} \sum_{\eta \in \cI} {\zeta}_{\eta, T - t} (z_t) 
  \big( {m}_{\eta, T - t}( z_t) - z_t \big)   \Big\}\dd t + \sqrt{2} e^{T - t} \dd B_t \label{eqn:guided_sde} \\
  & = \Big\{\frac{2 \gamma}{\sigma_{T - t}^2} F_{\eta_0, T - t}(z_t) - \frac{ 2(\gamma - 1)}{\sigma_{T - t}^2} \sum_{\eta \in \cI} {\zeta}_{\eta, T - t} (z_t) 
  F_{\eta, T - t}(z_t)   \Big\}\dd t + \sqrt{2} e^{T - t} \dd B_t. 
\end{split}
\end{align}
The drift term above is exactly twice that of the ODE in \eqref{eqn:ode_general}. 
Consequently, the trajectory defined in Eq.~\eqref{eqn:guided_sde} experiences a force that pulls it toward the target support, along with forces that push it away from the other supports. In addition, the process is influenced by a random force arising from the Brownian motion. 
When $t$ is sufficiently close to $T$, the random force is dominated by the deterministic force because of the large negative drift in the SDE.
To formalize this intuition, we compare process \eqref{eqn:guided_sde} to a Brownian motion with a large negative drift using the SDE comparison theorem \citep{ikeda2014stochastic}.
Again, as in the proof of Theorem \ref{thm:continuous-DDIM}, 
we argue that as $t \to T$ the deterministic force $2 (\gamma - (\gamma - 1) \zeta_{\eta_0, T - t}(z_t)) F_{\eta_0, T - t}(z_t) / \sigma_{T - t}^2$ dominates, thus ensuring that the trajectory moves towards the support of the target distribution. 
See Figure \ref{fig:guidance_DDPM} for an illustration of this process. 

\begin{figure}[ht!]
	\centering
	\includegraphics[width=0.55\textwidth]{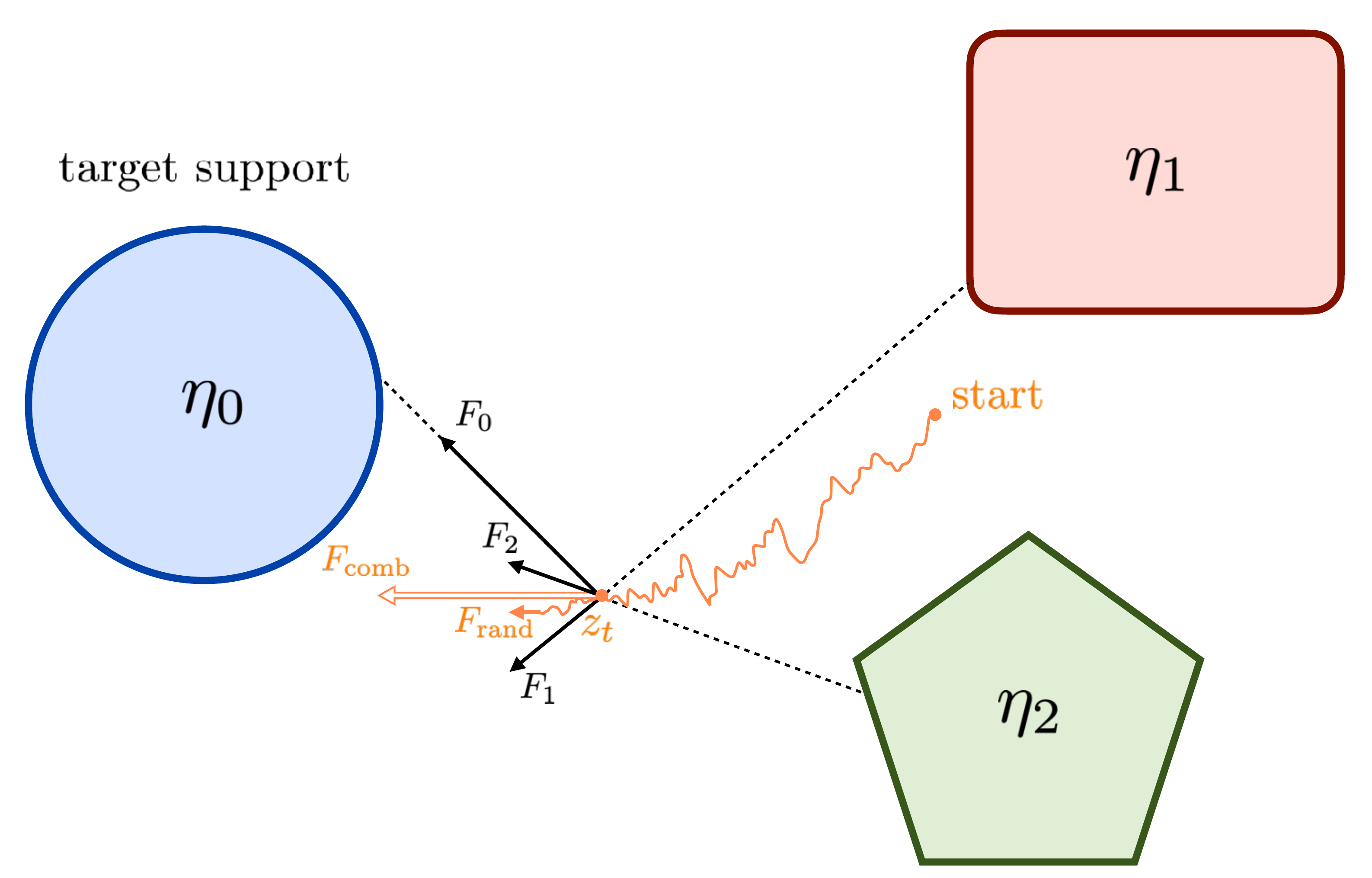}
	\caption{Illustration of SDE \eqref{eqn:guided_sde} for $\cI = \{\eta_0, \eta_1, \eta_2\}$. As before, the guidance is directed toward the blue circle labeled $\eta_0$. The terms $F_0$, $F_1$, and $F_2$ denote deterministic forces associated with the supports $K_{\eta_0}$, $K_{\eta_1}$, and $K_{\eta_2}$, respectively, and $F_{\rm rand}$ represents the stochastic force induced by the Brownian motion. The total force is given by $F_{\rm comb} = F_0 + F_1 + F_2 + F_{\rm rand}$. }
	\label{fig:guidance_DDPM}
\end{figure} 

\section{Numerical experiments}
\label{sec:experiments}

In this section, we illustrate our theoretical results through numerical experiments. 
In the first part, we illustrate sample trajectories of the guided diffusion models, and in the second part we plot the corresponding empirical distributions. 
Both experiments are consistent with our theoretical findings. 
The code used for these experiments is available at
\href{https://github.com/ruijiacao/diffusion_guidance_code}{\textcolor{magenta}{https://github.com/ruijiacao/diffusion\_guidance\_code}}.

\subsection{Sample trajectories}

\subsubsection{Example with convex support}

We consider a mixture distribution $p_0$ over $\R^2$, defined as follows: 
\begin{equation*}
    p_0 = \frac{1}{3} \mathsf{Unif} \Big( \ball \big((-1.5, 0.5),\, 0.7 \big) \Big)
    + \frac{1}{3} \mathsf{Unif} \Big( \ball \big((0.2, 3),\, 1.2 \big) \Big)
    + \frac{1}{3} \mathsf{Unif} \Big( \ball \big((2.5, -0.5),\, 1.5 \big) \Big).
\end{equation*}
Note that $p_0$ is a mixture of distributions, each supported on a convex set. 
We assume that the guidance is directed toward the second component.
For our experiment, we set $T = 5$, $\delta = 0.001$, $\kappa = 0.1$, and perform the diffusion updates using the exponential integrator scheme, which is defined in \cref{eq:exp-integrator} for guided DDIM and in \cref{eq:15} for guided DDPM.
We choose the time steps as follows:
\begin{align*}
	& t_0 = 0, \\
	& t_{k + 1} = t_k + \Delta_k, \\
	& \Delta_k = 
   \begin{cases}
   \kappa  & \text{if } t_k \leq T - 1, \\
   \frac{\kappa}{(1 + \kappa)^{k - M + 1}} & \text{if } t_k > T - 1. 
   \end{cases}
\end{align*}
where $M = \lfloor (T - 1) / \kappa \rfloor + 1$.
We examine different initial positions and guidance strengths. 
The resulting trajectories of the guided DDIM and guided DDPM are shown in Figures~\ref{fig:guided_ddim_trajectories} and \ref{fig:guided_ddpm_trajectories}, respectively. 
These figures illustrate that both methods produce sample trajectories that converge to the target support, confirming our theoretical findings. 
\begin{figure}[H]
    \centering
    \includegraphics[width=1.0\textwidth]{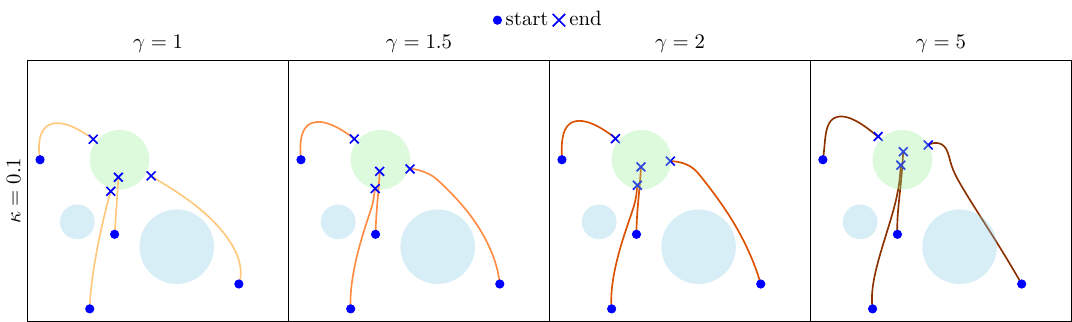}
    \caption{Trajectories of the guided DDIM under varying guidance strengths and initial positions. The target region is shown in green, the initial position is marked by a blue dot, and the endpoint by an ``X''.} 
    \label{fig:guided_ddim_trajectories}
\end{figure}

\begin{figure}[H]
    \centering
    \includegraphics[width=1.0\textwidth]{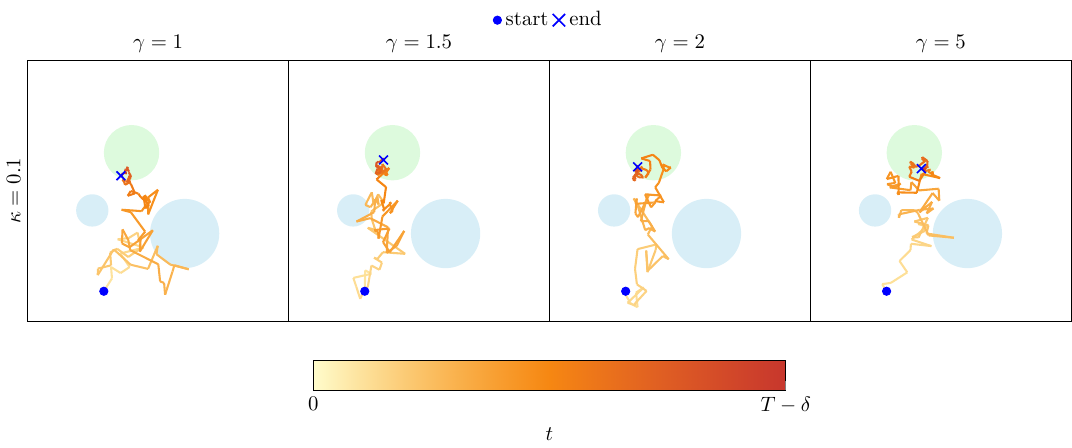}
    \caption{
    Trajectories of the guided DDPM under varying guidance strengths and initial positions. 
    Here, the target region is shown in green, the initial position is marked by a blue dot, and the endpoint by an ``X''.
    Lighter (darker) colors indicate smaller (larger) time indices within the interval $[0, T - \delta]$. 
    }
    \label{fig:guided_ddpm_trajectories}
\end{figure}
\subsubsection{Example with non-convex support}

While our theoretical results assume the convexity of the support of the mixture components,
we also provide an example demonstrating that support robustness can continue to hold even in non-convex settings. 
As before, we consider mixtures of uniform distributions, with the target support highlighted in green. 
Sample trajectories for the guided DDIM and guided DDPM are shown in Figures \ref{fig:guided_ddim_trajectories_nonconvex} and \ref{fig:guided_ddpm_trajectories_nonconvex}, respectively. 
These figures suggest that (under mild regularity assumptions) our theoretical results may hold even without the convexity assumption, which we defer to future work.

\begin{figure}[!ht]
    \centering
    \includegraphics[width=1.0\textwidth]{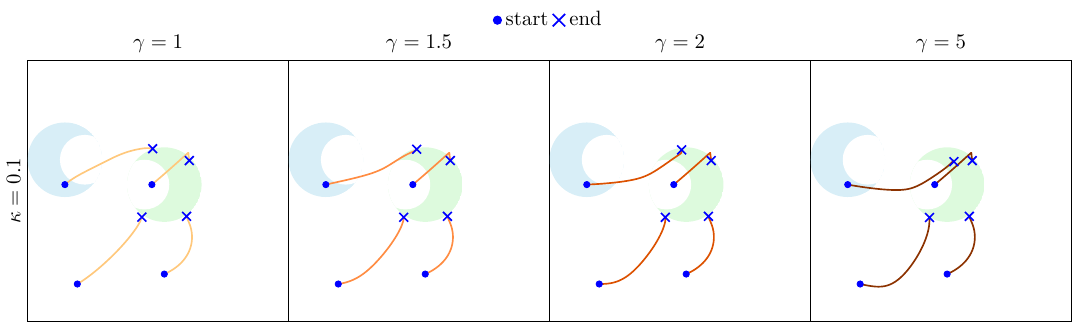}
    \caption{Trajectories of the guided DDIM for varying guidance strengths and initial positions, with the target distribution being a mixture of uniform distributions over non-convex supports. As before, the target region is highlighted in green, the initial position indicated by a blue dot, and the endpoint marked with an ``X.".}
    \label{fig:guided_ddim_trajectories_nonconvex}
\end{figure}

\begin{figure}[!ht]
    \centering
    \includegraphics[width=1.0\textwidth]{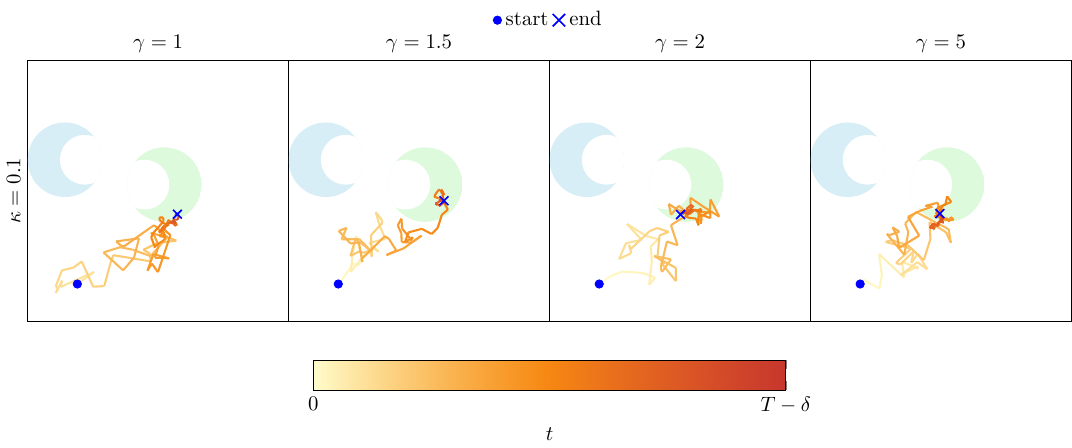}
    \caption{
    Trajectories of the guided DDPM for different guidance strengths and initial positions. As before, lighter (darker) colors correspond to earlier (later) times within the interval $[0, T - \delta]$. }
    \label{fig:guided_ddpm_trajectories_nonconvex}
\end{figure}

\subsection{Visualization of the guided density}

In the second part, we empirically visualize the distributions produced by the guided DDIM and DDPM samplers. 
As before, we set $T = 5$, $\delta = 0.001$, $\kappa = 0.1$, and sample $10^4$ points from $\normal(0, \id_2)$ as the initial positions.
The target distribution is an equally weighted mixture of five uniform distributions over the shapes shown in Figures \ref{fig:ddim_densities} and \ref{fig:ddpm_densities}. 
We then run both guided DDIM and DDPM using exponential integrators and plot the resulting samples in the same figures. 
These plots show that the distributions generated by diffusion guidance remain concentrated on the target support, further confirming our theoretical findings.

\begin{figure}[!ht]
    \centering
    \begin{subfigure}{\textwidth}
        \centering
        \includegraphics[width=\textwidth]{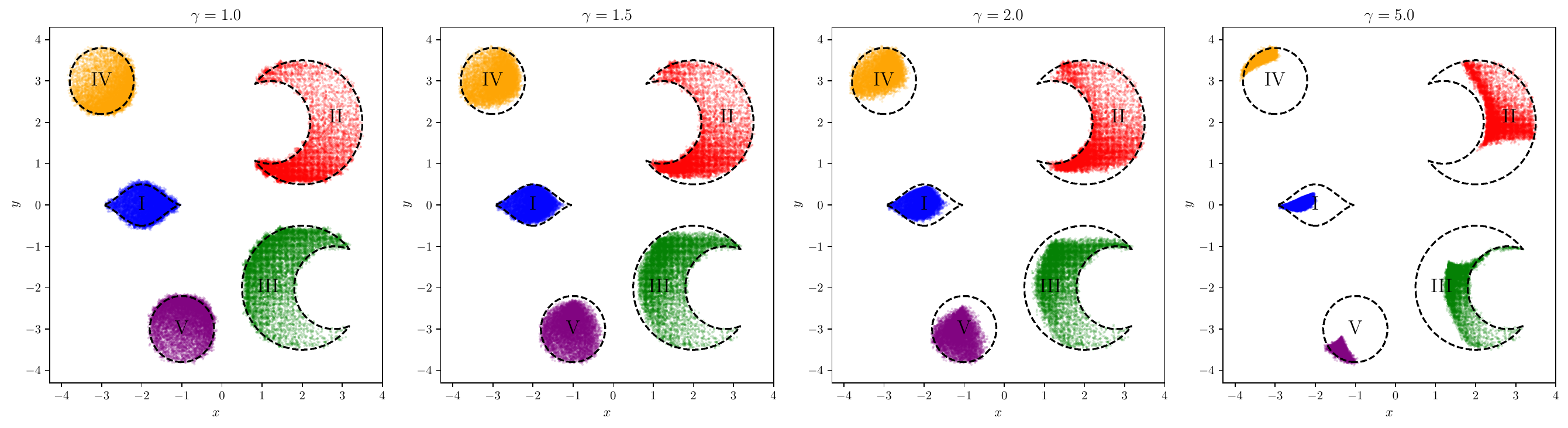}
        \caption{Empirical density obtained by running guided DDIM}
        \label{fig:ddim_densities}
    \end{subfigure}

    \vspace{0.8cm}

    \begin{subfigure}{\textwidth}
        \centering
        \includegraphics[width=\textwidth]{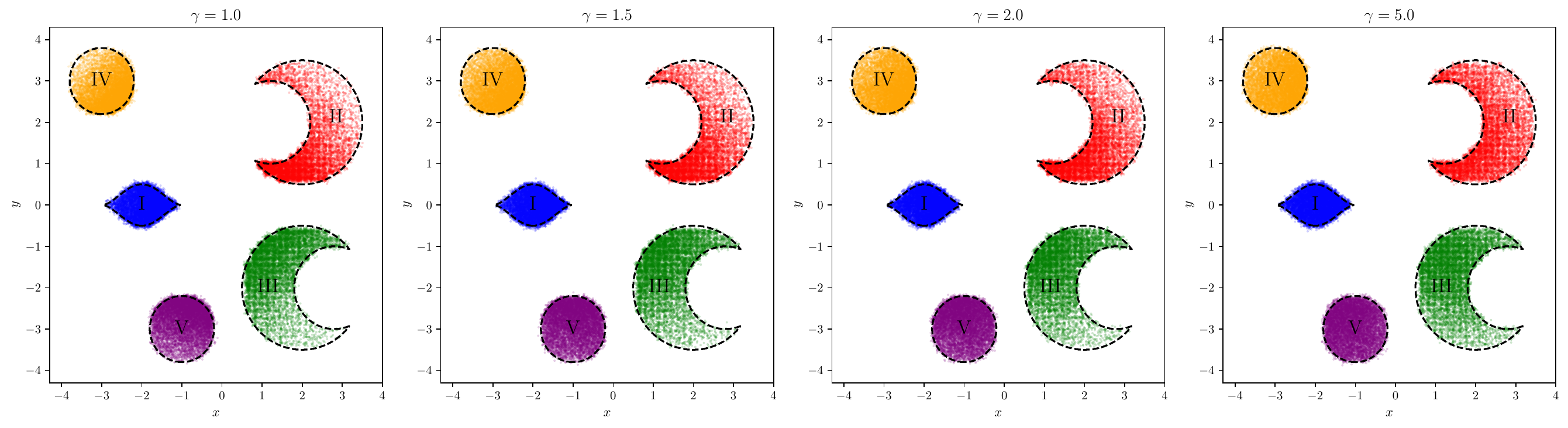}
        \caption{Empirical density obtained by running guided DDPM}
        \label{fig:ddpm_densities}
    \end{subfigure}
    \caption{In Figures \ref{fig:ddim_densities} and \ref{fig:ddpm_densities}, we illustrate the empirical densities produced by the guided DDIM and DDPM samplers under varying levels of guidance. 
    Each colored region represents the empirical distribution obtained by guiding the diffusion model toward its corresponding component. 
 }
\end{figure}

\section{Discussion}
\label{sec:discussion}

In this work, we show that diffusion models with guidance exhibit the support robustness property. 
Our theory applies to both DDIM and DDPM, 
and encompasses both continuous-time processes and discretizations induced by an exponential integrator. 

Several interesting directions remain for future exploration:
\begin{itemize}
	\item \textbf{Support robustness without convexity assumptions. }
	Our numerical experiments in Section \ref{sec:experiments} suggest that the support robustness property extends to mixture distributions with non-convex support. 
	We currently impose the convexity assumption due to technical limitations, 
    removing this restriction and providing a rigorous theoretical justification is an interesting direction for future work. 
	\item \textbf{Accommodating score estimation error. }
	Our current theoretical results assume exact access to the score functions. 
	As noted in Remark \ref{rem:score}, extending these results to estimated scores requires controlling the score estimation error under a shifted distribution, which requires additional technical developments.
	\item \textbf{Further analysis of the guided distribution. }
	This work suggests that diffusion guidance generates samples that remain on the target support. 
	An interesting open problem is to further characterize the distribution produced by diffusion guidance. 
    In particular, while our results indicate that diffusion guidance produces physically plausible samples, they do not fully explain its success with respect to sample-quality metrics.
    A deeper understanding would require a more refined analysis of the guided distribution. 
\end{itemize}

\bibliographystyle{plainnat}
\bibliography{refs}

\newpage

\appendix
\section{Technical lemmas}

In this section, we collect the technical lemmas required for our proof.

\begin{lem}
\label{lemma:volume-lower-bound}
    Let $K \subseteq \R^d$ be a convex and compact set that contains at least one interior point. 
    Then there exist constants  $\rho_K, r_K > 0$ that depends only on $K$, such that for all $x \in \partial K$ and $0 < r \leq r_K$, 
    \begin{align*}
   \vol \big( \cB(x, r) \cap K \big) \geq \rho_K r^d. 
    \end{align*}
\end{lem}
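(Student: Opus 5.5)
The plan is to use an interior ball and the ``ice-cream cone'' construction. Since $K$ has an interior point, there are $x^\star \in \mathsf{int}(K)$ and $R_0 > 0$ with $\cB(x^\star, R_0) \subseteq K$. Compactness gives $D = \operatorname{diam}(K) < \infty$. For any $x \in \partial K \subseteq K$, convexity shows that the convex hull $C_x = \mathrm{conv}(\{x\} \cup \cB(x^\star, R_0))$ is contained in $K$. It therefore suffices to lower bound $\vol(\cB(x,r) \cap C_x)$ uniformly in $x$, for small $r$.

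The key step is a homothety argument. For $\theta \in (0,1]$, consider the map $h_\theta(y) = x + \theta (y - x)$. By convexity of $C_x$, it sends $\cB(x^\star, R_0)$ into $C_x$. Its image is the ball $\cB(x + \theta(x^\star - x), \theta R_0)$. Every point of this ball lies within distance $\theta(\|x^\star - x\| + R_0) \le \theta(D + R_0)$ of $x$, using $\|x^\star - x\| \le D$ because both points lie in $K$.

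Now set $\theta = r/(D + R_0)$. We need $\theta \le 1$, which holds whenever $r \le r_K := D + R_0$; one could equally take $r_K = R_0$ for simplicity. Then the image ball lies in $\cB(x,r)$, up to the closure issue on its boundary, which has measure zero, or one can shrink slightly using the open ball. It also lies in $K$. Hence
\begin{align*}
\vol(\cB(x,r)\cap K) \ \ge\ \vol\big(\cB(0,1)\big)\,(\theta R_0)^d \ =\ \vol\big(\cB(0,1)\big)\Big(\frac{R_0}{D+R_0}\Big)^d r^d,
\end{align*}
so $\rho_K = \vol(\cB(0,1)) R_0^d/(D+R_0)^d$ works. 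These constants depend only on $K$ through $R_0$, $D$ and $d$.

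There is no serious obstacle. The only points needing care are that the bound is uniform over all boundary points, which is handled by $\|x - x^\star\| \le D$, and the strict versus non-strict inequality in the ball containment. For the latter, I would use the open ball of radius $\theta R_0$: its points are at distance strictly less than $\theta\|x^\star - x\| + \theta R_0 \le r$ from $x$. The argument in fact holds for every $x \in K$, not just $x \in \partial K$.
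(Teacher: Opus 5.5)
Your proof is correct and is essentially the paper's argument: both use the ``ice-cream cone'' $\mathrm{conv}(\{x\}\cup\cB(x^\star,R_0))\subseteq K$, a homothety centered at $x$, and the bound $\|x-x^\star\|\le\mathsf{diam}(K)$ to make the estimate uniform over $x\in\partial K$. The only difference is how the volume is measured. The paper cuts this set with a half-space and rescales the resulting cone over a $(d-1)$-dimensional disk of radius $\delta_K$, tacitly treating the cone as lying within distance $\|x-v_K\|$ of $x$; that holds only up to a factor $\sqrt{2}$, which costs a harmless $2^{-d/2}$ in $\rho_K$. Your inscribed shrunken ball, with the explicit bound $\theta(\|x^\star-x\|+R_0)\le r$, handles the containment in $\cB(x,r)$ cleanly.
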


\begin{proof}[Proof of Lemma \ref{lemma:volume-lower-bound}]
    We prove Lemma \ref{lemma:volume-lower-bound} in Appendix \ref{proof:lemma:volume-lower-bound}. 
\end{proof}

\begin{lem}[Projection is to close to a tilted mean]
    \label{le:cm}
    Let $K \subseteq \R^d$ be a convex and compact set that contains at least one interior point. 
    For any $z \notin K$, define 
    \begin{align*}
        & m_{\sigma}(z) = \int_{K} x \exp \Big(- \frac{\norm{x - z}^2}{2 \sigma^2}\Big) \frac{\D x}{p_{\sigma}(z)},
    \end{align*}
    where $p_{\sigma}(z)$ is the normalizing coefficient defined as 
    \begin{equation*}
        p_{\sigma}(z) =  \int_{K} \exp \Big(- \frac{\norm{x - z}^2}{2 \sigma^2}\Big) \D x.
    \end{equation*}
    Then, there exists a constant $C_K > 0$ that depends only on $K$,
    such that for all $\sigma \in (0,1 / 2)$
    \begin{equation*}
        \dist \left(m_{\sigma}(z), \Proj_K(z) \right)
        \le C_K \, \sigma \log \sigma^{-1}.
    \end{equation*}
\end{lem}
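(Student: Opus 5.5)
The plan is to prove that the tilted measure concentrates near $y := \Proj_K(z)$. We do this by comparing it with its own image under a contraction toward $y$; convexity of $K$ makes this comparison available. Let
\[
\mu(\dd x) \propto \mathbf 1_K(x)\exp\bigl(-\|x-z\|^2/(2\sigma^2)\bigr)\,\dd x,
\]
so that $m_\sigma(z)=\E_\mu[X]$. Since $m_\sigma(z)$ is an average of points, Jensen's inequality gives $\|m_\sigma(z)-y\|\le \E_\mu\|X-y\|$. It therefore suffices to show $\E_\mu\|X-y\| \le C_K\sigma\log\sigma^{-1}$. We will in fact aim for the stronger bound $C_K\sigma\sqrt{\log\sigma^{-1}}$.

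\textbf{Step 1 (rewrite the weight).} Set $D=\|z-y\|>0$ and $u=(z-y)/D$. For $x\in K$ put $s(x)=\langle y-x,u\rangle$. The variational characterization of the projection onto a convex set gives $s(x)\ge 0$ on $K$. Expanding the square,
\[
\|x-z\|^2=\|x-y\|^2+D^2+2D\,s(x),
\]
so up to a constant factor the density of $\mu$ is
\[
w(x)=\exp\Bigl(-\frac{\|x-y\|^2}{2\sigma^2}-\frac{D}{\sigma^2}s(x)\Bigr)\quad\text{on } K.
\]
Both terms in the exponent are nonnegative and penalize the distance from $y$. The point is that \emph{no lower bound on $D$ is needed} and the possibly huge tilt $D/\sigma^2$ only helps.

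\textbf{Step 2 (contraction argument).} Fix $\lambda=1/2$ and let $\varphi(x)=y+\lambda(x-y)$. Convexity and $y\in K$ give $\varphi(K)\subseteq K$, and $\varphi$ is injective with Jacobian $\lambda^d$. Also $s(\varphi(x))=\lambda s(x)\le s(x)$ and $\|\varphi(x)-y\|^2=\lambda^2\|x-y\|^2$. Hence, for every $x\in K$ with $\|x-y\|>R$,
\[
w(\varphi(x))\ge w(x)\exp\bigl((1-\lambda^2)R^2/(2\sigma^2)\bigr).
\]
Therefore
\[
\int_K w \;\ge\; \int_{\varphi(K\cap\{\|x-y\|>R\})} w \;=\; \lambda^d\int_{K\cap\{\|x-y\|>R\}} w(\varphi(x))\,\dd x \;\ge\; \lambda^d e^{3R^2/(8\sigma^2)}\int_{K\cap\{\|x-y\|>R\}} w .
\]
This yields the tail bound
\[
\mu\bigl(\|X-y\|>R\bigr)\le 2^d e^{-3R^2/(8\sigma^2)} .
\]

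\textbf{Step 3 (conclude).} Let $\mathrm{diam}(K)$ denote the diameter of $K$. Since $\|X-y\|\le \mathrm{diam}(K)$ on $K$,
\[
\E_\mu\|X-y\|\le R+\mathrm{diam}(K)\,2^d e^{-3R^2/(8\sigma^2)} .
\]
Choose $R=\sigma\sqrt{(8/3)\log\sigma^{-1}}$. Then the second term is $2^d\mathrm{diam}(K)\,\sigma$. For $\sigma<1/2$ both $\sigma$ and $\sigma\sqrt{\log\sigma^{-1}}$ are at most a constant times $\sigma\log\sigma^{-1}$, because $\log\sigma^{-1}\ge\log 2$. This gives the claim with $C_K$ depending only on $d$ and $\mathrm{diam}(K)$.

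The step I expected to be the obstacle was lower-bounding the normalizer $p_\sigma(z)$ uniformly in $z$. A naive bound via Lemma~\ref{lemma:volume-lower-bound} on $\cB(y,r)\cap K$ fails when $D$ is large, because the linear tilt $e^{-Ds/\sigma^2}$ kills most of that ball. The self-similarity trick in Step 2 sidesteps this: it compares the tail mass with the contracted mass directly, and it uses only that $K$ is convex and contains $y$. The interior-point assumption is not even needed for this estimate.
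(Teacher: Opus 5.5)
Your proof is correct, and it takes a genuinely different route from the paper's.

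\textbf{How the paper argues.} Your $s(x)$ is exactly the paper's $d_H(x)=\langle x-p,\nu_p\rangle$. The paper uses it to split $K$ into the slab $K_{1,r}=\{d_H\le r\}$ and the remainder $K_{2,r}$, and handles the two pieces separately.
\begin{itemize}
\item On the slab it works in spherical coordinates about $p$. It uses two monotonicity facts: the radial mean decreases as the linear tilt grows, and it increases with the radial extent $\theta(u)$. Together these give a bound of $r+\sqrt d\,\sigma$.
\item On the remainder, where $\|x-z\|\ge r+\delta$, it compares the mass with that of $\mathcal{B}(p,r/2)\cap K$. That mass is bounded below via Lemma~\ref{lemma:volume-lower-bound}. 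This is where the interior point is used quantitatively, and where $C_K$ picks up $\rho_K$ and $\mathrm{vol}(K)$.
\item It finishes with a case split on whether $\delta>\sigma$ to choose $r$.
\end{itemize}
This decomposition is how the paper avoids the obstacle you anticipated. Tangential far-away points, which break a naive comparison between a Euclidean tail and a ball, lie in the slab and are controlled radially. The region compared against the ball is cut out by hyperplane distance, so there the linear tilt only helps.

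\textbf{What your route buys.} Your dilation $x\mapsto y+\tfrac12(x-y)$ handles both regimes at once. In one step it gives a uniform sub-Gaussian tail for $\|X-y\|$, with no volume lower bound, no slab decomposition and no case split. The constant depends only on $d$ and $\mathrm{diam}(K)$. You can even drop $\mathrm{diam}(K)$ by integrating the tail:
\[
\mathbb{E}_\mu\|X-y\|\le\int_0^\infty\min\{1,\,2^d e^{-3R^2/(8\sigma^2)}\}\,\mathrm{d}R .
\]
The right-hand side is at most a universal constant times $\sqrt d\,\sigma$. That matches the order of the paper's slab bound and is sharper than the stated $\sigma\log\sigma^{-1}$. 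The paper's route does reuse Lemma~\ref{lemma:volume-lower-bound}, which it needs anyway for the $\zeta$ estimates. For this lemma, though, your argument is simpler and proves more.

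\textbf{One imprecise remark.} The interior-point assumption is not dispensable. A convex set with empty interior is Lebesgue-null, so without it $p_\sigma(z)=0$ and $\mu$ and $m_\sigma(z)$ are undefined. What your argument avoids is only the quantitative use of that assumption through Lemma~\ref{lemma:volume-lower-bound}.
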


\begin{proof}[Proof of Lemma \ref{le:cm}]
    We prove Lemma \ref{le:cm} in Appendix \ref{proof:le:cm}. 
\end{proof}

We then define the Hausdorff distance. 

\begin{defn}[Hausdorff distance]
\label{def:hausdorff}
    Let $X, Y \subseteq \R^d$ be two compact sets.
    Then, their Hausdorff distance, denoted by $\dH(X, Y)$, is given by 
    \begin{equation*}
        \dH(X, Y) 
        = \max \left\{ \sup_{x \in X} \dist(x, Y),\, \sup_{y \in Y} \dist(y, X) \right\}.
    \end{equation*}  
\end{defn}

\begin{rem}
    $\dH(\cdot, \cdot)$ defines a metric on the space of non-empty compact subsets in $\R^d$.
\end{rem}

\begin{rem}
   It is well-known that convex sets can be approximated arbitrarily well by convex sets with algebraic (and hence smooth) surfaces \citep{firey1974approximating}.
   More precisely, for any $\eps > 0$ and any compact convex set $K\subseteq \R^d$, there exists another compact convex set $\tilde{K}$ with smooth boundary, such that $\dH(\tilde{K}, K) \le \eps$.
\end{rem}

\begin{lem}\label{lem:dist_laplacian}
    Let $K \subseteq \R^n$ be a compact, convex set with $C^k$-boundary, where $k \ge 2$. 
    Then, the function $x \mapsto \dist(x, K)$ is $C^k$ on $\R^n \setminus K$. 
    Furthermore, the Laplacian of this function is given by 
    \begin{equation*}
        \Delta \dist(x, K) = \sum_{i=1}^{n-1} \frac{\kappa_i(\Proj_K(x))}{1 + \kappa_i(\Proj_K(x)) \dist(x, K)},
    \end{equation*}
    where $\left\{\kappa_i \right\}_{1\le i \le n - 1}$ denote the principal curvatures of $\partial K$.
    In particular, since $\kappa_i \ge 0$ and $\partial K$ is compact, 
    \begin{equation*}
        \Delta \dist(x, K) \le \mathcal{C}_K =  n  \sup_{y \in \partial K} \max_{1 \le i \le n-1} \kappa_i(y).
    \end{equation*} 
\end{lem}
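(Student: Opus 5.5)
The plan is to work with the nearest-point map and the normal geometry of the outer parallel sets $K_r=\{x:\dist(x,K)\le r\}$. Write $f(x)=\dist(x,K)$ and $\pi(x)=\Proj_K(x)$. Since $K$ is convex, $\pi$ is single valued and $1$-Lipschitz on $\R^n$.

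\textbf{Smoothness of $f$.} Outside $K$ we have $f(x)=\|x-\pi(x)\|$, and the outward unit normal $\nu(\pi(x))$ equals $(x-\pi(x))/f(x)$. I will use the normal-coordinate map
\[
\Phi:\partial K\times(0,\infty)\to\R^n\setminus K,\qquad \Phi(y,s)=y+s\,\nu(y).
\]
Convexity gives that $\Phi$ is a bijection, with inverse $x\mapsto(\pi(x),f(x))$. When $\partial K$ is $C^k$, the normal field $\nu$ is $C^{k-1}$, so $\Phi$ is $C^{k-1}$. Its differential is $d\Phi(v,\tau)=(I+s\,d\nu_y)v+\tau\nu$, and the shape operator $d\nu_y$ has eigenvalues $\kappa_i\ge0$. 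Hence $d\Phi$ is invertible for every $s>0$, and the inverse function theorem shows that $\pi$ and $f$ are $C^{k-1}$.

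To gain the missing derivative, I will use the standard identity $\nabla f(x)=\nu(\pi(x))$, which follows from the first variation of $\|x-y\|$ at the minimizer. Its right-hand side is a composition of $C^{k-1}$ maps, so $\nabla f\in C^{k-1}$ and therefore $f\in C^k$. The regularity bookkeeping at this step is the main obstacle. If the counting fails, I will fall back on the known fact that the signed distance to a $C^k$ hypersurface is $C^k$ in a tubular neighbourhood (Gilbarg--Trudinger, Lemma 14.16). Convexity makes that neighbourhood global on the exterior, because exterior normals never cross.

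\textbf{Laplacian.} Since $\nabla f=\nu\circ\pi$, we get $\nabla^2 f(x)=d\nu_{\pi(x)}\circ D\pi(x)$. To compute $D\pi$, differentiate $x=\pi(x)+f(x)\,\nu(\pi(x))$ along tangent directions. Choose a principal frame $e_i$ at $y=\pi(x)$, so that $d\nu_y e_i=\kappa_i e_i$, and move $x$ by $h e_i$. This gives $e_i=(1+f\kappa_i)\,D\pi(x)e_i$, so $D\pi(x)e_i=e_i/(1+f\kappa_i)$, while $D\pi(x)\nu=0$. Consequently $\nabla^2 f(x)$ is diagonal in the basis $(e_1,\dots,e_{n-1},\nu)$ with entries $\kappa_i/(1+\kappa_i f)$ and $0$. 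Taking the trace yields the stated formula.

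\textbf{Bound.} Since $\kappa_i\ge0$ by convexity and $f>0$, each term satisfies $\kappa_i/(1+\kappa_i f)\le\kappa_i\le\max_i\kappa_i(y)$. The principal curvatures are continuous on the compact set $\partial K$, so this maximum is bounded uniformly in $y$. Summing at most $n$ such terms gives $\Delta f\le n\sup_{y\in\partial K}\max_i\kappa_i(y)=\mathcal{C}_K$.
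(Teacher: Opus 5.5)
Your proposal is correct and follows essentially the same route as the paper's proof, which is adapted from Gilbarg--Trudinger. Both arguments invert the normal-coordinate map $(y,s)\mapsto y+s\,\nu(y)$, whose differential $\mathrm{diag}(1+s\kappa_1,\dots,1+s\kappa_{n-1},1)$ is nonsingular because $\kappa_i\ge 0$, to get $\Proj_K\in C^{k-1}$; they then use $\nabla\dist(\cdot,K)=\nu\circ\Proj_K$ to gain the extra derivative and read off the Hessian $\mathrm{diag}\big(\kappa_i/(1+\kappa_i\dist(x,K)),\,0\big)$ in a principal frame. Your explicit point that convexity makes this map a global bijection onto $\R^n\setminus K$, so the local inverse really is $(\Proj_K,\dist)$, is left implicit in the paper, and the regularity count you flagged as the main obstacle goes through exactly as you wrote it.
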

\begin{proof}[Proof of Lemma \ref{lem:dist_laplacian}]

We prove Lemma \ref{lem:dist_laplacian} in Appendix \ref{proof:lem:dist_laplacian}. 
	
\end{proof}

\subsection{Proof of Lemma \ref{lemma:volume-lower-bound} }
\label{proof:lemma:volume-lower-bound}
The lemma is trivial when $d = 1$. In what follows, we consider $d \geq 2$. 

Let $v_K \in \R^d$ be an interior point of $K$, then there exists $\delta_K > 0$ such that $\ball(v_K, \delta_K) \subseteq K$. 
Define a mapping $F_K \colon \partial K \mapsto \mathbb{R}_{\geq 0}$ as follows: 
\begin{align*}
    F_K(x) = \inf_{0 < r < r_K}\frac{\vol \big( \cB(x, r) \cap K \big)}{r^d},  
\end{align*}
where $r_K = \inf_{x \in \partial K} \norm{v_K - x} > 0$. 
To prove the lemma, it suffices to show $\inf_{x \in \partial K} F_K(x) > 0$.

For all $x \in \partial K$, define 
\begin{align*}
    C_x &= \big\{ \alpha x + (1 - \alpha) z:  z \in \ball(v_K, \delta_K), \, \alpha \in [0, 1] \big\} \\
    H_x &= \left\{ y \in \R^d: \inprod{y - v_K, x - v_K} \ge 0 \right\} \\
    \tilde{C}_x &= C_x \cap H_x.
\end{align*}
By convexity, we have $C_x \subseteq K$ for every $x \in \partial K$. 
Note that for all $r \in (0, r_K)$, it holds that 
\begin{align*}
    \vol \big( C_x \cap \ball(x, r) \big) 
    \geq \vol \big(\tilde{C}_x \cap \ball(x, r) \big)  \geq  \left( \frac{r}{\norm{x - v_K}} \right)^d \cdot \vol \big( \tilde{C}_x \big).
\end{align*}
Note that $\tilde{C}_x$ is a cone with base being a $(d - 1)$-dimensional ball that has radius $\delta_K$, hence 
\begin{align*}
    \vol \left( \tilde{C}_x \right)
    = \frac{\norm{x - v_K} \cdot \omega_{d - 1} \cdot \delta_K^{d - 1}}{d},
\end{align*}
where $\omega_{d  - 1}$ denotes the volume of a $(d - 1)$-dimensional ball.
Therefore, 
\begin{align*}
   \vol \left( C_x \cap \ball(x, r) \right) 
   &\ge \left( \frac{r}{\norm{x - v_K}} \right)^d \cdot \frac{\norm{x - v_K} \cdot \omega_{d - 1} \cdot \delta_K^{d - 1}}{d} \\
    &= \frac{\omega_{d - 1} \cdot \delta_K^{d - 1}}{d \norm{x - v_K}^{d - 1}} \cdot r^d.
\end{align*}
Since $\norm{x - v_K} \le \mathsf{diam}(K)$, we see that for all $r \in (0, r_K)$, 
\begin{align*}
   \vol \left( C_x \cap \ball(x, r) \right) 
    &\ge \frac{\omega_{d - 1} \cdot \delta_K^{d - 1}}{d \cdot \mathsf{diam}(K)^{d - 1}} \cdot r^d.
\end{align*}
The lemma follows by setting $\rho_K = \frac{\omega_{d - 1} \cdot \delta_K^{d - 1}}{d \cdot \mathsf{diam}(K)^{d - 1}}$.

\subsection{Proof of Lemma \ref{le:cm}}
\label{proof:le:cm}

Let $p = \Proj_K(z) \in \R^d$ and $\delta = \dist(z, K) = \norm{z - p}$. 
By convexity of $K$, the hyperplane 
\begin{equation*}
    H = \left\{x \in \R^d: \inprod{x - p,\, \nu_p} = 0\right\}, \qquad
       \nu_p =  \frac{p - z}{\norm{p - z}}
\end{equation*}
is a supporting hyperplane for $K$ at $p \in \partial K$. 
Note that $K \subseteq \left\{x: \left\langle x - p, \nu_p \right\rangle \ge 0\right\}$.
For $x \in \R^d$, define 
\begin{equation*}
    d_H(x) = \inprod{x - p, \nu_p}.
\end{equation*}
By the supporting hyperplane property, $d_H(x) \ge 0$ for all $x \in K$. 
We then partition $K$ according to its distance to $H$. 
Specifically, for $r > 0$, define
\begin{align*}
    K_{1, r} = \{x \in K: d_H(x) \le r\}, \qquad  K_{2, r} = \{x \in K: d_H(x) > r\}.
\end{align*}
We can then upper bound $\norm{m_{\sigma}(z) - p}$ using the following two terms: 
\begin{align*}
    \norm{m_{\sigma}(z) - p} \leq \underbrace{\int_{K_{1, r}} \norm{x - p} \exp \Big(- \frac{\norm{x - z}^2}{2 \sigma^2}\Big) \frac{\D x}{p_{\sigma}(z)}}_{\mathrm{I}} + \underbrace{\int_{K_{2, r}} \norm{x - p} \exp \Big(- \frac{\norm{x - z}^2}{2 \sigma^2}\Big) \frac{\D x}{p_{\sigma}(z)}}_{\mathrm{II}}. 
\end{align*}
We then control terms $\mathrm{I}$ and $\mathrm{II}$ separately below. 

\subsubsection*{Bounding term I}
For notational simplicity, define $w_{\sigma}(x) = \exp(- \norm{x - z}^2/(2\sigma^2))$.
We can then upper bound term $\mathrm{I}$ as 
\begin{equation*}
    \mathrm{I} = \frac{\int_{K_{1, r}} \norm{x - p}\, w_{\sigma}(x)\,\D x}{\int_K w_{\sigma}(x)\,\D x}
       \le
       \frac{\int_{K_{1, r}} \norm{x - p}\, w_{\sigma}(x)\,\D x}{\int_{K_{1, r}} w_{\sigma}(x)\,\D x}
       = \mathbb{E}_{X \sim \mu_{1, r}}\big[\norm{X - p}\big],
\end{equation*}
where $\mu_{1, r}(\dd x) \propto \ind_{K_{1, r}} w_{\sigma}(x) \D x$.
We next prove that
\begin{equation}
\label{eq:stepI-target}
    \mathbb{E}_{X \sim \mu_{1, r}}\big[\norm{X - p}\big] \le r + \sqrt{d}\,\sigma.
\end{equation}
Without loss of generality, via translation and rotation, 
we will assume that $p = 0$ and $\nu_p = e_d$ (the $d$-th standard basis in $\R^d$).
In this case, for $x = (x_1,x_2, \cdots, x_d)^{\top}$, it holds that $d_H(x) = x_{d}$ and $K_{1, r} \subseteq \{x \in \R^d:\, x_{d} \in [0,r]\}$.
Therefore, 
\begin{equation}
\label{eq:13.5}
     \mathbb{E}_{X \sim \mu_{1, r}}\big[\norm{X - p}\big]
       \le  \mathbb{E}_{X \sim \mu_{1, r}}\big[|X_d|\big] +  \mathbb{E}_{X \sim \mu_{1, r}}\big[\norm{X_{1:(d - 1)}}\big]
       \le r + \mathbb{E}_{X \sim \mu_{1, r}}\big[\norm{X_{1:(d - 1)}}\big].
\end{equation}
We then upper bound $\mathbb{E}_{X \sim \mu_{1, r}}\big[\norm{X_{1:(d - 1)}}\big]$.
For $u \in \mathbb{S}^{d - 1}$, we define $\theta(u) = \sup\{\theta \geq 0: \, \theta u \in K_{1, r}\}$. 
Rewriting this expectation in spherical coordinates, we obtain 
\begin{align}
\label{eq:1:d-1-upper}
    \mathbb{E}_{X \sim \mu_{1, r}}\big[\norm{X_{1:(d - 1)}}\big]
    &\leq  \frac{\int_{\mathbb{S}^{d - 1}} \int_0^{\theta(u)} r^{d} e^{-(r + \alpha(u) \delta)^2 / (2\sigma^2)} \D r \D u}{\int_{\mathbb{S}^{d - 1}} \int_0^{\theta(u)} r^{d - 1} e^{-(r + \alpha(u) \delta)^2 / (2\sigma^2)} \D r \D u} \\
    &\leq \sup_{u \in \mathbb{S}^{d - 1}} \frac{\int_0^{\theta(u)} r^{d} e^{-(r + \alpha(u) \delta)^2 / (2\sigma^2)} \D r}{\int_0^{\theta(u)} r^{d - 1} e^{-(r + \alpha(u) \delta)^2 / (2\sigma^2)} \D r}, 
\end{align}
where $\alpha(u) = \norm{u_{1:(d - 1)}}$, and recall $\delta = \dist(z, H)$. 
We then upper bound the right-hand side of \cref{eq:1:d-1-upper}. 
To this end, for $w \geq 0$ and $u \in \mathbb{S}^{d - 1}$, we define 
\begin{align*}
    f_u(w) = \frac{\int_0^{\theta(u)} r^{d} e^{-(r + w)^2 / (2\sigma^2)} \D r}{\int_0^{\theta(u)} r^{d - 1} e^{-(r + w)^2 / (2\sigma^2)} \D r}. 
\end{align*}
Let $R_{u, w}$ be a random variable with density function $p_{u, w}(r) \propto \ind_{r \in [0, \theta(u)]} r^{d - 1} e^{-(r + w)^2 / 2}$. 
Then $f_u(w) = \E[R_{u, w}]$. 
Taking the derivative of $f_u(w)$ with respect to $w$, we get 
\begin{align*}
    f_u'(w) = - \frac{1}{\sigma^2} \mathsf{Var}[R_{u, w}] \leq 0. 
\end{align*}
Therefore, $f_u(w) \leq f_u(0)$ for all $u \in \mathbb{S}^{d - 1}$ and $w \geq 0$. 
Substituting this upper bound into \cref{eq:1:d-1-upper}, we obtain 
\begin{align}
\label{eq:mu1r-upper}
    \mathbb{E}_{X \sim \mu_{1, r}}\big[\norm{X_{1:(d - 1)}}\big] \leq \sup_{u \in \mathbb{S}^{d - 1}}  \frac{\int_0^{\theta(u)} r^{d} e^{-r^2 / (2\sigma^2)} \D r}{\int_0^{\theta(u)} r^{d - 1} e^{-r^2 / (2\sigma^2)} \D r}.
\end{align}
We claim that the function 
\begin{equation}
     y \mapsto \frac{\int_0^{y} r^{d} e^{-r^2 / (2\sigma^2)} \D r}{\int_0^{y} r^{d - 1} e^{-r^2 / (2\sigma^2)} \D r}
\end{equation}
is increasing on $[0, \infty)$.
Indeed, differentiating the function with respect to $y$, we get
\begin{align*}
    &\frac{\left(y^d e^{-y^2/ (2\sigma^2)}\right) \cdot \left( \int_0^{y} r^{d - 1} e^{-r^2 / (2\sigma^2)} \D r\right) - \left(\int_0^y r^d e^{-r^2/(2\sigma^2)} \D r \right) \cdot \left(y^{d - 1} e^{-y^2 / (2\sigma^2)}\right)}{\left(\int_0^{y} r^{d - 1} e^{-r^2 / (2\sigma^2)} \D r \right)^2} \\
    &= \frac{e^{-y^2 / (2\sigma^2)}}{\left(\int_0^{y} r^{d - 1} e^{-r^2 / (2\sigma^2)} \D r\right)^2} \cdot \int_0^y r^{d - 1} e^{-r^2 / (2\sigma^2)} (y - r) \D r \geq 0.
\end{align*}
Therefore, 
\begin{equation*}
   \E_{X \sim \mu_{1, r}}\big[\norm{X_{1:(d - 1)}}\big] 
   \le \frac{\int_0^{\infty} r^{d} e^{-r^2 / (2\sigma^2)} \D r}{\int_0^{\infty} r^{d - 1} e^{-r^2 / (2\sigma^2)} \D r}
   = \frac{\sqrt{2} \sigma \Gamma((d + 1) / 2)}{\Gamma(d / 2)} \leq \sigma \sqrt{d},
\end{equation*}
where $\Gamma(\cdot)$ denotes the Gamma function. 
Combining the above upper bound with \cref{eq:13.5,eq:mu1r-upper} completes the proof of \cref{eq:stepI-target}.

\subsubsection*{Bounding term II}

Note that 
\begin{align}
\label{eq:upper-bound-m}
    \mathrm{II} = \int_{K_{2, r}} \norm{x - p} \exp \Big(- \frac{\norm{x - z}^2}{2 \sigma^2}\Big) \frac{\D x}{p_{\sigma}(z)} \leq \frac{\mbox{diam}(K)}{1 + \underbrace{\frac{\int_{K_{1, r}} w_{\sigma}(x) \dd x}{\int_{K_{2, r}} w_{\sigma}(x) \dd x }}_{_{\diamondsuit}}}, 
\end{align}
where $\mbox{diam}(K) = \sup\{\|x - y\|: \, x, y \in K\}$. 
We then lower bound term ${\diamondsuit}$. 
Observe that $K \cap \cB(p, r / 2) \subseteq  K_{1, r}$. 
For $x \in K \cap \cB(p, r / 2)$, it holds that
\begin{equation*}
    \norm{x - z} \le \norm{x - p} + \norm{z - p} \le \frac{r}{2} + \delta.
\end{equation*}
Therefore,
\begin{align}
\begin{split}
    \int_{K_{1, r}} \exp  \Big(- \frac{\norm{x - z}^2}{2 \sigma^2}\Big) \D x
    &\ge 
    \int_{K_{1, r / 2} \cap \ball(p, r / 2)} \exp   \Big(- \frac{\norm{x - z}^2}{2 \sigma^2}\Big) \D x  \\
    &\ge \vol \left( K \cap \ball(p, r / 2) \right) \cdot \exp \Big( - \frac{(r / 2 + \delta)^2 }{2 \sigma^2} \Big)  \\
    &\ge \rho_K \left(\frac{r}{2}\right)^d \exp \Big( - \frac{(r / 2 + \delta)^2 }{2 \sigma^2} \Big), \label{eq:diamond1}
\end{split}
\end{align}
where the last inequality follows from  Lemma~\ref{lemma:volume-lower-bound}.
On the other hand, $d_H(z) = - \norm{z - p} = - \delta < 0$. 
Hence, $x \in K$ and $z$ fall on different sides of $H$, and the segment connecting them intersects $H$ at a point $h \in \mathbb{R}^d$.
Using this, we see that for $x \in K_{2, r}$, 
\begin{align*}
    \norm{x - z} = \norm{x - h} + \norm{z - h} \geq d_H(x) + \norm{z - p} \geq r + \delta, 
\end{align*}
which further implies that 
\begin{align}
    \int_{K_{2, r}} \exp  \Big(- \frac{\norm{x - z}^2}{2 \sigma^2}\Big) \D x \leq \vol(K) \exp \Big( - \frac{(r + \delta)^2}{2 \sigma^2} \Big). \label{eq:diamond2}
\end{align}
Combining \cref{eq:diamond1,eq:diamond2}, we obtain the following lower bound for $\diamondsuit$: 
\begin{align}
\label{eq:upper-bound-diamond}
\begin{split}
    \diamondsuit
    &\ge \frac{\rho_K (r/2)^d}{\vol(K)}
    \exp \Big( \,\frac{(r + \delta)^2 - (r / 2 + \delta)^2}{2 \sigma^2} \, \Big) \\
    &= \frac{\rho_K}{2^d\vol(K)}
    \cdot r^d
    \cdot  
    \exp \left( \frac{3r^2/4  + r\delta}{2 \sigma^2} \right).
    \end{split}
   \end{align}

\subsubsection*{Proof of the lemma}

We then separately discuss two cases depending on the value of $\delta$.
\begin{itemize}
    \item If $\delta > \sigma$, then we let $r = (2d    + 2) \sigma \log \sigma^{-1}$. Plugging this choice and \cref{eq:upper-bound-diamond} into \cref{eq:upper-bound-m}, and combining this with the upper bound from Step I, we obtain that 
\begin{align*}
    \norm{m_{\sigma}(z) - \Proj_K(z)} \leq &\, (2d + 2) \sigma \log \sigma^{-1} + \sqrt{d} \sigma + C_1 r^{-d} \exp \Big( - \frac{r \delta + 3r^2/4 }{2 \sigma^2} \Big) \\
    \leq & \, (2d + 2) \sigma \log \sigma^{-1} + \sqrt{d} \sigma + C_1 r^{-d} \exp \Big( - \frac{r \delta  }{2 \sigma^2} \Big) \\
    \leq & C_2\, \sigma \log \sigma^{-1},
\end{align*}
where $C_1, C_2$ are positive constants that depend only on $K$. 
\item If $0 < \delta \leq \sigma$.
Let $r = 2\sqrt{d + 1}\, \sigma \sqrt{\log \sigma^{-1}}$. 
Then we have 
\begin{align*}
    \norm{m_{\sigma}(z) - \Proj_K(z)} \leq & 2\sqrt{d + 1}\, \sigma \sqrt{\log \sigma^{-1}} + \sqrt{d} \sigma + C_3 r^{-d}  \exp \Big( - \frac{ 3 r^2}{8 \sigma^2} \Big) \\
    \leq & C_3 \sigma \log \sigma^{-1}, 
\end{align*}
where $C_3, C_4$ are positive constants that depend only on $K$. 

\end{itemize}

The proof is done. 

\subsection{Proof of Lemma \ref{lem:dist_laplacian}}
\label{proof:lem:dist_laplacian}
The following proof is adapted from \cite[Theorem 14.16]{gilbarg1977elliptic}.
For notational convenience, we write $d_K(x) = \dist(x, K)$ and denote the inward-pointing unit normal vector field on $\partial K$ by $\nu_{\partial K}$.
Since $\nabla d_K(x) = - \nu_{\partial K}(\Proj_K(x))$ for all $x \in \R^d \setminus K$, 
    it suffices to show $\nabla d_K$ is $C^{k-1}$ on $\R^d \setminus K$.

As $K$ is convex, for each point $x \not\in K$, $y = \Proj_K(x) \in \partial K$ satisfies $\norm{y - x} = d_K(x)$.
    Hence, the points $x$ and $y$ satisfy the following equation
    \begin{equation*}
    x = y - \nu_{\partial K} (y) d_K(x),
    \end{equation*}
For a fixed point $x_0 \in \R^d \setminus K$, 
let $y_0 = \Proj_K(x_0)$, and choose a principal coordinate system at $y_0$, 
where $\nu_{\partial K}(y_0) = e_n$ and $y_0 = 0_d$ under this coordinate system, and the principal curvatures at $y_0$ are given by $\kappa_1, \dots, \kappa_{n-1}$.
We will work in this principal coordinate system throughout the remainder of the proof.

We define a mapping $g = (g_1, \dots, g_n)$ from $\mathcal{U} = \Pi(\mathcal{N}(y_0) \cap T(y_0)) \times \R$ into $\R^n$ by
    \begin{equation*}
        g \colon (y', t) \mapsto \Phi(y') + t \cdot \nu_{\partial K} \circ \Phi(y') , \quad \Phi(y') = (y', \phi(y')),
    \end{equation*}
where $T(y_0) = \{x \in \R^d: \langle x, e_n\rangle = 0\}$ is the tangent space at $y_0$,  $\mathcal{N}(y_0)$ is a neighborhood of $y_0$
    such that $\partial K \cap \mathcal{N}(y_0)$ can be represented as the graph of a $C^k$ convex function $\phi \colon \Pi( T(y_0) \cap \mathcal{N}(y_0)) \to \R$, and $\Pi: \R^n \to \R^{n - 1}$ denotes the projection onto the first $n - 1$ coordinates.

Since $\nu_{\partial K} \in C^{k - 1}$ and $\Phi \in C^k$, we see that $g \in C^{k-1}(\mathcal{U})$.
For $y' \in \Pi(\mathcal{N}(y_0) \cap T(y_0))$ and $x \notin K$, the total derivative matrix of $g$ at $(y', - d_K(x))$ (with respect to the principal coordinate system) is given by
\begin{equation}\label{eqn:map_principal_directions}
 \nabla g (y', -d_K(x)) = 
 \diag
\begin{pmatrix} 
1 + d_K(x) \kappa_1,\, 1 + d_K(x) \kappa_2,\, \dots,\,  1 + d_K(x) \kappa_{n-1},\, 1
\end{pmatrix}.
\end{equation}
See \citep[Section 14.16]{gilbarg1977elliptic} for more details.
Then the Jacobian of $g$ at $(0_{d - 1}, -d_K(x_0))$ is given by
\begin{equation*}
\det \left( \nabla g (0_{d - 1}, -d_K(x_0)) \right) 
= \prod_{i=1}^{n-1} (1 + d_K(x_0) \kappa_i) \neq 0
\end{equation*}
 since $\kappa_i \ge 0$.
 Note that $g(0_{d - 1}, -d_K(x_0)) = x_0$.
Then, the inverse mapping theorem implies that for some neighborhood $\mathcal{A}(x_0)$ of $x_0$, the mapping $g$ is a $C^{k-1}$-diffeomorphism between $\mathcal{U}$ and $\mathcal{A}(x_0)$.
Therefore, the mapping 
\begin{equation*}
    \Proj_{K}(x) = \big( \,\Pi \circ g^{-1}(x), \, \phi(\Pi \circ g^{-1}(x))\, \big)
\end{equation*}
is $C^{k-1}$ on $\mathcal{A}(x_0)$.
Finally, since $\nabla d_K(x) = - \nu_{\partial K} \circ \Proj_{K}(x)$, 
we see that $\nabla d_K$ is $C^{k-1}$ on $\mathcal{A}(x_0)$. 
Hence, $d_K \in C^k(\mathcal{A}(x_0))$.
Since $x_0$ is arbitrary, we conclude that $d_K \in C^k(\R^d \setminus K)$.

Next, we compute $\Delta d_K$.
We claim that the Hessian matrix $\nabla^2 d_K(x_0)$ with respect to the principal coordinate system at $y_0$ is given by 
\begin{equation*}
  \nabla^2 d_K(x_0)
  = \diag 
    \begin{pmatrix}
    \frac{\kappa_1}{1 + \kappa_1 d_K(x_0)},\, \frac{\kappa_2}{1 + \kappa_2 d_K(x_0)},\, \dots,\, \frac{\kappa_{n-1}}{1 + \kappa_{n-1} d_K(x_0)},\, 0
    \end{pmatrix}.
\end{equation*}
For any $i, j \in [n - 1]$, inverting the matrix given in \cref{eqn:map_principal_directions}, we have
\begin{align*}
    \partial_{ij} d_K(x_0)
    &= - \partial_i \left( \nu^j_{\partial K} \circ g^{-1}(x_0) \right)  \\
    &= - \langle \nabla \nu_{\partial K}^j(y_0),  \partial_i g^{-1}(x_0) \rangle  \\
    &= \frac{\kappa_i}{1 + \kappa_i d_K(x_0)} \cdot \ind_{i = j}.
\end{align*}
Furthermore, for any $i \in [n]$, we have 
\begin{align*}
    \partial_{n, i} d_K(x_0)
    = - \partial_n \left( \nu^i_{\partial K} \circ g^{-1}(x_0) \right) = 0,
\end{align*}  
since $\nu_{\partial K}$ is constant in the $e_n = \nu_{\partial K}(y_0)$ direction.
Therefore, 
\begin{align*}
    \Delta d_K(x_0) 
    &= \sum_{i=1}^{n-1} \frac{\kappa_i}{1 + \kappa_i d_K(x_0)}
\end{align*}
as desired.

\section{Proofs for guided DDIM}

\subsection{Proof of Theorem \ref{thm:continuous-DDIM}}
\label{proof:continuous-DDIM}

To prove the theorem, we first establish the following lemma, which controls the derivative of $\dist(z_t, K_{\eta_0})$ for $t$ close to $T$. 

\begin{lem}\label{le:dist_deriv_general}
    Fix  $\eps \in (0, 1)$. 
    Under the assumptions of Theorem~\ref{thm:continuous-DDIM}, there exists $0 < \tau_\varepsilon < T$ that depends solely on $(\eps, p_0, \gamma)$, such that for all $z_t$ that satisfies $\dist(z_t, K_{\eta_0}) > \eps$ and $t \in [\tau_\varepsilon, T)$, it holds that 
    \begin{align*}
        \frac{\dd}{\dd t} \dist(z_t, K_{\eta_0}) \leq - \frac{1}{4\sigma_{T - t}^2} \dist(z_t, K_{\eta_0}). 
    \end{align*}
\end{lem}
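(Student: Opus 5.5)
The plan is to differentiate $D(z) = \dist(z, K_{\eta_0})$ along the ODE \eqref{eqn:ode_general}. Since $K_{\eta_0}$ is convex, $D$ is differentiable off $K_{\eta_0}$ with $\nabla D(z) = (z - p)/D(z)$, where $p = \Proj_{K_{\eta_0}}(z)$; write $u = (z-p)/D(z)$. Regrouping the drift gives
\begin{align*}
\sigma_{T-t}^2 \frac{\dd}{\dd t} D(z_t) = \big(\gamma - (\gamma-1)\zeta_{\eta_0}\big)\langle u, F_{\eta_0}\rangle - (\gamma-1)\sum_{\eta\ne\eta_0}\zeta_{\eta}\langle u, F_{\eta}\rangle .
\end{align*}
Here I abbreviate $\zeta_\eta = \zeta_{\eta,T-t}(z_t)$ and $F_\eta = F_{\eta,T-t}(z_t)$. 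I would bound the two pieces separately and show that the first gives roughly $-D$ while the second is negligible.

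\textbf{Attracting term.} Since $m_{\eta_0}\in K_{\eta_0}$, the supporting hyperplane at $p$ gives $\langle u, m_{\eta_0} - p\rangle \le 0$. Hence
\begin{align*}
\langle u, F_{\eta_0}\rangle = \langle u, m_{\eta_0}-p\rangle - D \le -D .
\end{align*}
Because $\gamma\ge 1$ and $\zeta_{\eta_0}\in[0,1]$, the coefficient satisfies $\gamma-(\gamma-1)\zeta_{\eta_0}\ge 1$. The first term is therefore at most $-D$. This step needs neither Lemma \ref{le:cm} nor any lower bound on the density.

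\textbf{Repelling terms.} I would show $(\gamma-1)\sum_{\eta\ne\eta_0}\zeta_\eta |\langle u, F_\eta\rangle| \le \tfrac34 D$ once $t\ge\tau_\eps$ and $D>\eps$. Two regimes are needed.
\begin{itemize}
\item \emph{Far field.} When $\|z\|$ is large, $|\langle u,F_\eta\rangle|$ can grow like $\|z\|$, comparable to $D$, so I cannot rely on smallness of $F_\eta$. Instead I want $\zeta_\eta$ tiny. The ratio $\zeta_\eta/\zeta_{\eta_0}$ compares tilted masses $\int p_\eta e^{-\lambda^2\|x-z\|^2/2\sigma^2}$. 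Using $\inf p_{\eta_0}>0$ and Lemma \ref{lemma:volume-lower-bound} near $p$, the $\eta_0$ mass is at least $c\,r^d e^{-\lambda^2(D+r)^2/2\sigma^2}$. Thus $\zeta_\eta \lesssim r^{-d}\exp\!\big(-\lambda^2[\dist(z,K_\eta)^2-(D+r)^2]/2\sigma^2\big)$, which is exponentially small in $1/\sigma^2$ whenever $\dist(z,K_\eta) > D+$const.
\item \emph{Near other components.} When $z$ is comparably close to, or closer to, some $K_\eta$ with $\eta\ne\eta_0$, $\zeta_\eta$ need not be small. Here I use the sign of the term. The contribution is $-(\gamma-1)\zeta_\eta\langle u, m_\eta - z\rangle$, and the bad case is $\langle u, m_\eta - z\rangle<0$, i.e. $m_\eta$ lies "behind" $z$ relative to $K_{\eta_0}$. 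By Lemma \ref{le:cm} with effective bandwidth $\sigma_{T-t}/\lambda_{T-t}\to0$, $m_\eta$ is within $C\sigma\log\sigma^{-1}$ of $\Proj_{K_\eta}(z)$.
\end{itemize}

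\textbf{Main obstacle.} The repelling terms are where I expect the real difficulty: with several components and $\gamma$ large, showing that the positive pull $-D$ dominates needs a quantitative partition of space. Disjointness of the compact convex sets gives a positive separation, hence a separating margin. I would first handle the region where $\dist(z,K_\eta)\ge D+c_0$, using exponential smallness of $\zeta_\eta$ uniformly in $\|z\|$ with $D>\eps$. The remaining region, where some other component is at least as close, is the step I am least sure closes. There the repelling force pushes $z$ away from $K_\eta$, and I would try to show via the separating hyperplane between $K_\eta$ and $K_{\eta_0}$ that this push has nonpositive component along $u$ up to $O(\sigma\log\sigma^{-1})$ errors. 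Finally, I would choose $\tau_\eps$ so that all error terms are below $\tfrac34\eps$. That yields $\frac{\dd}{\dd t}D\le -\frac{1}{4\sigma^2}D$.
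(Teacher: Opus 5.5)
\textbf{There is a genuine gap.} It is introduced in your attracting-term step, not in the step you flagged. When you replace the coefficient $\gamma-(\gamma-1)\zeta_{\eta_0}$ by its lower bound $1$, you throw away
\begin{align*}
\big(\gamma-(\gamma-1)\zeta_{\eta_0}-1\big)D=(\gamma-1)\sum_{\eta\neq\eta_0}\zeta_{\eta}\,D .
\end{align*}
Without this piece, your remaining goal $(\gamma-1)\sum_{\eta\neq\eta_0}\zeta_\eta|\langle u,F_\eta\rangle|\le\frac34 D$ is false. So is the sign claim you hoped to get from a separating hyperplane.

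Take $d=1$, uniform components with equal weights on $K_{\eta_0}=[-2,-1]$ and $K_{\eta_1}=[0,1]$, and $z=2$. Then $u=1$ and $D=3$. As $t\to T$ one has $\zeta_{\eta_1}\to1$ and $m_{\eta_1}\to1$, so $-(\gamma-1)\zeta_{\eta_1}\langle u,F_{\eta_1}\rangle\to\gamma-1>0$. This exceeds $\frac94$ once $\gamma>13/4$. Because $z$ lies beyond $K_{\eta_1}$, the repulsion from $K_{\eta_1}$ really does push $z$ away from $K_{\eta_0}$. The same happens, for example, when $z$ is equidistant from two components but not between them. The lemma survives only because the discarded term, here about $3(\gamma-1)$, dominates.

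\textbf{The missing idea.} This is how the paper argues: pair each $(\gamma-1)\zeta_\eta D$ with the repulsion of component $\eta$. Use $\langle u,z-m_\eta\rangle\le\dist(z,K_\eta)+\|\Proj_{K_\eta}(z)-m_\eta\|$ together with your bound $\langle u,F_{\eta_0}\rangle\le -D$. That bound is correct, and slightly cleaner than the paper's use of Lemma \ref{le:cm} for $\eta_0$. You then get
\begin{align*}
\sigma_{T-t}^2\,\frac{\dd}{\dd t}D &\le -D+(\gamma-1)\sum_{\eta\neq\eta_0}\zeta_\eta\big[\dist(z,K_\eta)-D\big] \\
&\quad +(\gamma-1)\sum_{\eta\neq\eta_0}\zeta_\eta\,\|\Proj_{K_\eta}(z)-m_\eta\|,
\end{align*}
and no geometry beyond distances is needed. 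The terms split three ways:
\begin{itemize}
\item Terms with $\dist(z,K_\eta)<D$ are nonpositive and can be dropped.
\item Terms with $0\le\dist(z,K_\eta)-D<2\Xi_{T-t}$ contribute at most $2(\gamma-1)\Xi_{T-t}$.
\item For the remaining terms, your tilted-mass estimate with $r=\Xi_{T-t}$ gives $\zeta_\eta\lesssim\Xi_{T-t}^{-d}\exp(-3\Xi_{T-t}^2/(2\tilde\sigma_{T-t}^2))$. The bracket is at most $D_0=\sup_{y,y'\in\cup_\eta K_\eta}\|y-y'\|$ uniformly in $z$, which also removes your large-$\|z\|$ concern.
\end{itemize}

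\textbf{The margin must be small.} Your fixed margin $c_0$ leaves a band $D\le\dist(z,K_\eta)<D+c_0$ that neither of your cases covers, and even after pairing it costs $(\gamma-1)c_0$. The paper uses the shrinking margin $2\Xi_{T-t}$ with $\Xi_{T-t}=\sqrt{\tilde\sigma_{T-t}}$ and $\tilde\sigma_{T-t}=\sigma_{T-t}/\lambda_{T-t}$, so that $\Xi\to0$ while $\Xi^2/\tilde\sigma^2\to\infty$. A margin of order $\varepsilon/\gamma$ would also work, since $\tau_\varepsilon$ may depend on $\varepsilon$ and $\gamma$.
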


\begin{proof}[Proof of Lemma \ref{le:dist_deriv_general}]
    The proof is given in Appendix \ref{proof:le:dist_deriv_general}. 
\end{proof}

To prove the theorem, it suffices to show for any fixed $\eps > 0$, 
\begin{equation*}
    \limsup_{t \to T} \dist(z_t, K_{\eta_0}) \le \eps.
\end{equation*}
Let $\tau_\eps$ be the time lower bound in Lemma~\ref{le:dist_deriv_general}.
First, we show that there exists $\tau_\eps < \tau' < T$, such that
\begin{equation*}
    \dist(z_{\tau'}, K_{\eta_0})  \le \eps.
\end{equation*}
Suppose that $\dist(z_{\tau'}, K_{\eta_0}) > \eps$ for all 
$\tau_\eps < \tau' < T$.
Then, applying Gr\"{o}nwall's inequality and Lemma~\ref{le:dist_deriv_general}, 
we see that for all $\tau_\eps < \tau < T$, 
\begin{align*}
    \dist(z_{\tau}, K_{\eta_0})
    \le \dist(z_{\tau_\eps}, K_{\eta_0}) \exp \left( - \int_{\tau_\eps}^{\tau} \frac{1}{4 \sigma_{T - \tau}^2} \D \tau  \right),
\end{align*}
implying that $\dist(z_{\tau}, K_{\eta_0}) \to 0$ as $\tau \to T$. 
A contradiction. 
Therefore, there exists $\tau_\eps < \tau' < T$, such that $\dist(z_{\tau'}, K_{\eta_0}) \le \eps$.
Now, since the map $t \mapsto \dist(z_t, K_{\eta_0})$ is decreasing whenever $\dist(z_t, K_{\eta_0}) \ge \eps$ and $\tau_\eps < t < t$, 
we see that $\dist(z_t, K_{\eta_0}) \le \eps$ for all $\tau' \le t < T$.
The proof is completed by mapping the problem back to the $x$-space and noting that $\varepsilon$ is arbitrary.

\subsection{Proof of Theorem \ref{thm:guided_discretized_ode_qualitative}}
\label{proof:thm:guided_discretized_ode_qualitative}

Recall that $(\hat x_t)_{0 \leq t < T}$ is defined in \cref{eq:discretized-ODE}. 
Define $\hat z_t = \lambda_{T - t_k}^{-1} \hat x_t$ for $t \in [t_k, t_{k + 1})$, then on the same interval, 
\begin{align}
\label{eq:26}
    \dd \hat z_t = & \, \hat z_t \, \dd t +  \left\{ \frac{\gamma}{\sigma_{T - t_k}^2} \left( m_{\eta_0, T - t_k}(\hat z_{t_k}) - \hat z_{t_k} \right) - \frac{\gamma - 1}{\sigma_{T - t_k}^2} \sum_{\eta \in \cI} \zeta_{\eta, T - t_k}(\hat z_{t_k}) \left( m_{\eta, T - t_k}(\hat z_{t_k}) - \hat z_{t_k} \right) \right\} \dd t. 
\end{align}
The above equation implies that for all $t \in [t_k, t_{k + 1})$,  
\begin{align}
\label{eq:zt-ztk}
\begin{split}
    & \hat z_t - \hat z_{t_k} \\
    & = (e^{t - t_k} - 1) \Big( \hat z_{t_k} + \frac{\gamma}{\sigma_{T - t_k}^2} \left( m_{\eta_0, T - t_k}(\hat z_{t_k}) - \hat z_{t_k} \right) - \frac{\gamma - 1}{\sigma_{T - t_k}^2} \sum_{\eta \in \cI} \zeta_{\eta, T - t_k}(\hat z_{t_k}) \left( m_{\eta, T - t_k}(\hat z_{t_k}) - \hat z_{t_k} \right) \Big). 
\end{split}
\end{align}
To prove the theorem, we first establish the following lemma. 
\begin{lem}
\label{lemma:discretized_ddim} 
Define $d_0 = \inf_{y \in K_{\eta},\, y' \in K_{\eta'}, \, \eta \neq \eta'} \norm{y - y'}$.
Under the assumptions of Theorem \ref{thm:guided_discretized_ode_qualitative}, with a sufficiently small $\kappa_0 > 0$ that depends only on $(p_0, \gamma)$, there exists $0 < \tau_0 < T$ that also depends only on $(p_0, \gamma)$, such that for all $\hat z_{t_k}$ that satisfies $\dist(\hat z_{t_k}, K_{\eta_0}) \geq  d_0 / 3$ and $t_k \in [\tau_0, T)$, for all $t \in [t_k, t_{k + 1})$ we have
\begin{align*}
     \frac{\dd }{\dd t} \norm{\hat z_t - \Proj_{K_{\eta_0}}(\hat z_{t_k})}^2 \leq - \frac{2}{5\sigma_{T - t_k}^2} \dist(\hat z_{t_k}, K_{\eta_0})^2.
\end{align*}

\end{lem}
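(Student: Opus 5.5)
The plan is to use the fact that on $[t_k,t_{k+1})$ the drift term in \eqref{eq:26} is frozen. Write $s = t - t_k$, $\sigma = \sigma_{T-t_k}$, $p = \Proj_{K_{\eta_0}}(\hat z_{t_k})$, $D = \dist(\hat z_{t_k},K_{\eta_0})$, and let $G$ denote the frozen drift. Solving the linear ODE gives $\hat z_t + G = e^{s}(\hat z_{t_k}+G)$. Set $V = \hat z_{t_k}+G$, so that \eqref{eq:zt-ztk} reads $\hat z_t - \hat z_{t_k} = (e^s-1)V$. Then
\begin{align*}
\frac{\dd}{\dd t}\norm{\hat z_t - p}^2 = 2e^{s}\Big(\inprod{\hat z_{t_k}-p,\, V} + (e^s-1)\norm{V}^2\Big).
\end{align*}
Since $e^s \le e^{\kappa}$ is close to $1$, it suffices to prove two bounds:
\begin{align*}
\inprod{\hat z_{t_k}-p, G} &\le -\tfrac{c}{\sigma^2}D^2 \quad\text{with } c \text{ close to } 1/2 \text{ or better},\\
(e^s-1)\norm{V}^2 + \abs{\inprod{\hat z_{t_k}-p,\hat z_{t_k}}} &\ll D^2/\sigma^2.
\end{align*}
The second bound will follow from $D\ge d_0/3$, which gives $\norm{p}+\mathrm{diam} \lesssim D$, together with $\sigma^2 \to 0$ as $\tau_0\to T$. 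It will also use Assumption~\ref{assumption:discretization}, namely $e^s-1\lesssim \kappa(T-t_{k+1})\lesssim \kappa\sigma^2$, combined with $\norm{G}\lesssim D/\sigma^2$. Choosing $\kappa_0$ small then makes this term at most a small multiple of $D^2/\sigma^2$.

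For the main drift term, I rewrite
\begin{align*}
\sigma^2 G = (m_{\eta_0}-\hat z_{t_k}) + (\gamma-1)\sum_{\eta\ne\eta_0}\zeta_\eta\,(m_{\eta_0}-m_\eta),
\end{align*}
with all quantities evaluated at $(T-t_k,\hat z_{t_k})$. The first term is handled by Lemma~\ref{le:cm}, applied with effective bandwidth $\sigma/\lambda$ and adapted to a density bounded below by Assumption~\ref{assumption:target}(3) (and bounded above, which I will need to check or assume). It gives $\norm{m_{\eta_0}-p}\le C\sigma\log\sigma^{-1}$, hence $\inprod{\hat z_{t_k}-p,m_{\eta_0}-\hat z_{t_k}} \le -D^2 + CD\sigma\log\sigma^{-1}$.

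For each repulsive term, let $y=m_\eta\in K_\eta$; convexity of $K_\eta$ gives this. The polarization identity gives
\begin{align*}
\inprod{\hat z_{t_k}-p,\,p-y} = \tfrac12\big(\norm{\hat z_{t_k}-y}^2 - D^2 - \norm{p-y}^2\big), \qquad \norm{p-y}\ge d_0.
\end{align*}
I will split into two cases according to whether $\dist(\hat z_{t_k},K_\eta)^2 \le D^2 + d_0^2/4$.
\begin{itemize}
\item[(a)] If this holds, then Lemma~\ref{le:cm} applied to $K_\eta$ keeps $m_\eta$ within $O(\sigma\log\sigma^{-1})$ of $\Proj_{K_\eta}(\hat z_{t_k})$. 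The inner product is then at most $-d_0^2/4 + o(1) < 0$, so this term helps.
\item[(b)] If it fails, I will show that $\zeta_\eta \le \zeta_{\eta}/\zeta_{\eta_0}$ is at most $\exp(-\lambda^2(\dist_\eta^2 - D^2)/(2\sigma^2))$ times polynomial factors in $\sigma^{-1}$. The denominator is lower bounded via Lemma~\ref{lemma:volume-lower-bound} and the density lower bound, exactly as in the proof of Lemma~\ref{le:cm}. The inner product in this case is at most $\tfrac12(\dist_\eta + \mathrm{diam})^2$. Since $\dist_\eta^2-D^2\ge d_0^2/4$, and the exponent also grows like $\dist_\eta^2$, the product is bounded by $O(\sigma^{10})\cdot(1+D^2)$, which is negligible.
\end{itemize}
Summing the contributions gives $\inprod{\hat z_{t_k}-p,G}\le -(1-o(1))D^2/\sigma^2$ as $\tau_0\to T$. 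Combined with the error terms, this yields the stated constant $2/5$ after choosing $\tau_0$ close to $T$ and $\kappa_0$ small, both depending only on $(p_0,\gamma)$.

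I expect the main obstacle to be case (b): making the exponential smallness of $\zeta_\eta$ uniform over unbounded $\hat z_{t_k}$. This requires lower-bounding the $\eta_0$ Gaussian mass using $p$, and upper-bounding the $\eta$ mass using $\dist(\hat z_{t_k},K_\eta)$, while keeping the polynomial prefactors from the volume lower bound under control. A secondary point is transferring Lemma~\ref{le:cm} from uniform weights to $p_{\eta_0}$ with a density lower bound. There I would simply redo its proof, bounding the density ratio by $\sup p_{\eta_0}/\inf p_{\eta_0}$ on $K_{\eta_0}$.
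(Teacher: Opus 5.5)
\textbf{Gap in case (b).} Your frozen-drift identity and the bound on $(e^s-1)\norm{V}^2$ are exactly the paper's $\mathsf{F}_1,\mathsf{F}_2$ argument. The difference is in $\inprod{\hat z_{t_k}-p, G}$, and there case (b) does not work as written. The problem is uniformity in $D=\dist(\hat z_{t_k},K_{\eta_0})$, which is unbounded in the lemma.

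The uniform exponential smallness of $\zeta_\eta$, which you flag as the main obstacle, is actually false. Lower-bounding the $\eta_0$ mass on $K_{\eta_0}\cap\cB(p,r)$ costs a factor $e^{-(2Dr+r^2)/(2\tilde\sigma^2)}$ (with $\tilde\sigma=\sigma/\lambda$). This forces $r\lesssim\tilde\sigma^2/D$ and a prefactor of order $(D/\tilde\sigma^2)^d$. When $\dist_\eta^2-D^2$ stays bounded as $D\to\infty$ (a near-tie of support functions), such geometric factors genuinely decide $\zeta_\eta$.

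Here is a concrete case in $\R^2$:
\begin{itemize}
\item $K_{\eta_0}=\mathrm{conv}\{(0,0),(-1,\pm1)\}$, and $K_\eta$ is the disk of radius $1/2$ centred at $(-1/2,3)$, both with uniform densities;
\item take $\hat z_{t_k}=(R,0)$;
\item then $\dist_\eta^2-D^2\to 9>d_0^2/4$, so you are in case (b);
\item yet $\zeta_\eta/\zeta_{\eta_0}\asymp e^{-9/(2\tilde\sigma^2)}\sqrt{R}/\tilde\sigma\to\infty$ at fixed $t_k$, because a smooth cap captures far more Gaussian mass than a vertex.
\end{itemize}
With $\zeta_\eta\approx 1$, your bound $\tfrac12(\dist_\eta+\mathrm{diam})^2\approx D^2/2$ on the inner product only yields $-D^2+(\gamma-1)D^2/2$. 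This is nonnegative for $\gamma\ge 3$, so the argument gives nothing there.

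\textbf{Repair.} The fix is already inside your decomposition; you discarded the cancellation.
\begin{itemize}
\item By Cauchy--Schwarz, $\inprod{\hat z_{t_k}-p,\, p-m_\eta}\le D\norm{p-m_\eta}\le D D_0$, which is only linear in $D$.
\item By the projection inequality for the convex set $K_{\eta_0}$, $\inprod{\hat z_{t_k}-p,\, m_{\eta_0}-p}\le 0$. So you need no version of Lemma~\ref{le:cm} for $m_{\eta_0}$ at all, and no upper bound on $p_{\eta_0}$.
\item Hence $\sigma^2\inprod{\hat z_{t_k}-p, G}\le -D^2+(\gamma-1)DD_0\le -D^2/2$ whenever $D\ge 2(\gamma-1)D_0$, for every $t_k$.
\end{itemize}
Your two-case split is then only needed on the bounded range $d_0/3\le D\le 2(\gamma-1)D_0$. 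There the polynomial prefactors are harmless, and so is the $O(\dist_\eta\,\sigma\log\sigma^{-1})$ error in case (a), which is not $o(1)$ uniformly in $D$.

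\textbf{How the paper avoids it.} The paper takes Cauchy--Schwarz against the unit vector $(\hat z_{t_k}-p)/D$. This turns each repulsive term into the linear gap $\dist(\hat z_{t_k},K_\eta)-D\le D_0$, which it thresholds at $2\Xi_{T-t_k}$. The resulting mass comparison $(D+2\Xi)^2-(D+\Xi)^2\ge 3\Xi^2$ is uniform in $D$. In the example above, $\eta$ simply lands in $\mathcal{I}_t^{\approx}$ and contributes at most $2(\gamma-1)\Xi$.

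\textbf{Side remark.} Case (a) invokes Lemma~\ref{le:cm} for $p_\eta$ with $\eta\neq\eta_0$. Assumption~\ref{assumption:target} gives no density lower bound for these components. The paper's \eqref{eq:Delta-upper-bound} relies on the same step, so this is not specific to your route.
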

\begin{proof}[Proof of Lemma \ref{lemma:discretized_ddim}]
    We prove Lemma \ref{lemma:discretized_ddim} in Appendix \ref{proof:lemma:discretized_ddim}. 
\end{proof}

Lemma \ref{lemma:discretized_ddim} implies that $\hat z_t$ will eventually enter a region closer to $K_{\eta_0}$ than to any other component. We then show that $\hat z_t$ subsequently approaches $K_{\eta_0}$.

\begin{lem}
\label{lemma:discretized_ddim2}

Under the assumptions of Theorem \ref{thm:guided_discretized_ode_qualitative},
with a sufficiently small $\kappa_0 > 0$ that depends only on $(p_0, \gamma)$,
for any $\varepsilon \in (0, \min\{1, d_0 / 3\})$, there exists $0 < \tau_{\varepsilon} < T$ that depends solely on $(p_0, \varepsilon,\gamma)$, such that for all $\hat z_{t_k}$ that satisfies $\dist(\hat z_{t_k}, K_{\eta_0}) \geq  \varepsilon$ and $t_k \in [\tau_\varepsilon, T)$, for all $t \in [t_k, t_{k + 1})$ we have
\begin{align*}
    \frac{\dd }{\dd t} \norm{\hat z_t - \Proj_{K_{\eta_0}}(\hat z_{t_k})}^2 \leq - \frac{2}{5\sigma_{T - t_k}^2} \dist(\hat z_{t_k}, K_{\eta_0})^2. 
\end{align*}
\end{lem}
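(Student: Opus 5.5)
Write $p = \Proj_{K_{\eta_0}}(\hat z_{t_k})$, $D = \dist(\hat z_{t_k}, K_{\eta_0}) \ge \varepsilon$, $\sigma = \sigma_{T-t_k}$, and $s = t - t_k \in [0, \Delta_k)$. The proof differentiates the explicit expression \eqref{eq:zt-ztk}. With $V$ denoting the bracket in \eqref{eq:zt-ztk}, we have $\hat z_t = \hat z_{t_k} + (e^{s}-1)V$ and $\frac{\dd}{\dd t}\hat z_t = e^{s} V$. Hence
\[
\frac{\dd}{\dd t}\norm{\hat z_t - p}^2 = 2e^{s}\inprod{\hat z_{t_k}-p, V} + 2e^{s}(e^{s}-1)\norm{V}^2 .
\]
The plan is to show that the first term is at most $-c D^2/\sigma^2$ with $c$ close to $2\gamma - 2(\gamma-1)\zeta_{\eta_0} \ge 2$. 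The second term is then absorbed using the step-size condition. Together these give the constant $\tfrac{2}{5}$.

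\textbf{Main inner product.} Split $V = \hat z_{t_k} + \sigma^{-2}\big[(\gamma - (\gamma-1)\zeta_{\eta_0})F_{\eta_0} - (\gamma-1)\sum_{\eta\ne\eta_0}\zeta_\eta F_\eta\big]$, with all quantities evaluated at $(\hat z_{t_k}, T-t_k)$.

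First, $m_{\eta_0,T-t_k}(\hat z_{t_k})$ is the tilted mean of $p_{\eta_0}$. Its tilt is Gaussian with variance $\sigma^2/\lambda^2$, and the density is bounded below on $K_{\eta_0}$ by Assumption \ref{assumption:target}. An adaptation of Lemma \ref{le:cm} therefore gives $\norm{m_{\eta_0} - p} \le C\,(\sigma/\lambda)\log(\lambda/\sigma)$; the adaptation absorbs the density ratio into the constants, which works because the lower bound on the numerator in that proof only needs a density lower bound. This yields $\inprod{\hat z_{t_k}-p, F_{\eta_0}} \le -D^2 + D\cdot o(1)$ as $t_k \to T$. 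Since $D \ge \varepsilon$, this is at most $-(1-o(1))D^2$.

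Second, we need the repulsive terms. Because $\varepsilon < d_0/3$ and we are in the regime following Lemma \ref{lemma:discretized_ddim}, I would split into two cases. If $\hat z_{t_k}$ is within $d_0/3$ of $K_{\eta_0}$, then $\dist(\hat z_{t_k}, K_\eta) \ge D + d_0/3$ for every $\eta \ne \eta_0$, roughly. Comparing Gaussian tilts, $\zeta_\eta/\zeta_{\eta_0}$ decays like $\exp(-c\lambda^2 d_0 D/\sigma^2)$ times polynomial factors. Here I again use the density lower bound on $K_{\eta_0}$ and the volume bound of Lemma \ref{lemma:volume-lower-bound}. The $F_\eta$ are bounded by $D + \mathrm{diam}$, so $(\gamma-1)\sum_{\eta\ne\eta_0}\zeta_\eta\norm{F_\eta}\,D = o(D^2)$ uniformly for $D \ge \varepsilon$. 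If instead $D \ge d_0/3$, the bound is exactly Lemma \ref{lemma:discretized_ddim}, so that case is already done.

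Third, the leftover term $\inprod{\hat z_{t_k}-p, \hat z_{t_k}}$ is $O(D^2 + D)$ without the $\sigma^{-2}$ factor. It is therefore negligible once $\tau_\varepsilon$ is chosen close enough to $T$.

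\textbf{Discretization term.} We have $\norm{V} \le C(D+1)/\sigma^2$, and by Assumption \ref{assumption:discretization}, $e^{s}-1 \le 2\kappa\min\{1,T-t_{k+1}\}$. Since $\sigma^2_{T-t_k} \asymp \min\{1, T-t_k\}$ and $T-t_k \le (1+\kappa)(T-t_{k+1})$, the second term is at most $C\kappa (D+1)^2/\sigma^2 \le C'\kappa D^2/(\varepsilon^2\sigma^2)$.

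The bound must hold for all small $\varepsilon$, while $\kappa_0$ may depend only on $(p_0,\gamma)$, so the $\varepsilon^{-2}$ factor cannot be left in place. The remedy is to use the sharper estimate $\norm{V} \le C D/\sigma^2 + o(1)/\sigma^2$, which follows from the same tilted-mean estimates, because $F_{\eta_0} \approx p - \hat z_{t_k}$ has norm about $D$. With this, the second term is at most $C\kappa D^2/\sigma^2$, uniformly in $\varepsilon$.

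Choosing $\kappa_0$ small (depending on $\gamma$ through $C$) and $\tau_\varepsilon$ close to $T$ makes the total at most $-\tfrac{2}{5}D^2/\sigma^2$. \textbf{I expect the main obstacle to be the uniform control of the repulsive terms $\zeta_\eta F_\eta$ for $\eta \ne \eta_0$, uniformly over the unbounded set $\{D \ge \varepsilon\}$.} Exponential smallness of $\zeta_\eta$ near $K_{\eta_0}$ must be shown to beat the linear growth of $F_\eta$, with constants independent of $\varepsilon$ wherever possible.
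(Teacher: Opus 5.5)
Your proposal is correct and follows the paper's proof. The case $\dist(\hat z_{t_k},K_{\eta_0})\ge d_0/3$ is delegated to Lemma~\ref{lemma:discretized_ddim}, and for $D\in[\varepsilon,d_0/3)$ you use the exponential smallness of $\zeta_\eta$ for $\eta\neq\eta_0$, the tilted-mean estimate for $F_{\eta_0}$, and a small $\kappa$ to absorb the $(e^{s}-1)\|V\|^2$ term; your sharpened bound on $\|V\|$ also makes explicit why $\kappa_0$ need not depend on $\varepsilon$, which the paper leaves implicit. One soft spot: extending Lemma~\ref{le:cm} to a non-uniform $p_{\eta_0}$ is not just a matter of absorbing constants, since its tangential-moment step uses uniformity, but in Case I you only need it for $D<d_0/3$, where splitting $K_{\eta_0}$ by $\|x-p\|\le r$, using $\|x-\hat z_{t_k}\|^2\ge\|x-p\|^2+D^2$ on $K_{\eta_0}$, and the density lower bound suffice, and for the inner product itself convexity already gives $\langle \hat z_{t_k}-p,\, m_{\eta_0}-p\rangle\le 0$.
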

\begin{proof}[Proof of Lemma \ref{lemma:discretized_ddim2}]
    We prove Lemma \ref{lemma:discretized_ddim2} in Appendix \ref{proof:lemma:discretized_ddim2}. 
\end{proof}

\begin{lem}
\label{lemma:discretized_ddim3}
	Under the assumptions of Theorem \ref{thm:guided_discretized_ode_qualitative}, with a sufficiently small $\kappa_0 > 0$ that depends only on $(p_0, \gamma)$, for any $\delta \in (0, 1)$, there exist $\varepsilon_\delta, \bar\tau_\delta > 0$ that depend only on $(\delta, p_0, \gamma)$, such that:
	\begin{enumerate}
		\item For all $\hat z_{t_k}$ that satisfies $\dist(\hat z_{t_k}, K_{\eta_0}) \leq  \varepsilon_\delta$ and $t_k \in [\bar\tau_\delta, T)$, for all $t \in [t_k, t_{k + 1})$ we have
		\begin{align*}
			\dist (\hat z_t, K_{\eta_0}) \leq \dist (\hat z_{t_k}, K_{\eta_0}) + \delta. 
		\end{align*}
		\item For all $\delta \in (0, 1)$ we have $\varepsilon_\delta < d_0 / 3$. In addition,  $\eps_\delta \to 0$ as $\delta \to 0$. 
	\end{enumerate}
\end{lem}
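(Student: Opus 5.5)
The plan is to control the one-step displacement $\hat z_t - \hat z_{t_k}$ using the closed form \eqref{eq:zt-ztk}. Since $\dist(\cdot, K_{\eta_0})$ is $1$-Lipschitz, we have $\dist(\hat z_t, K_{\eta_0}) \le \dist(\hat z_{t_k}, K_{\eta_0}) + \norm{\hat z_t - \hat z_{t_k}}$. It therefore suffices to show that $\norm{\hat z_t - \hat z_{t_k}} \le \delta$ whenever $\hat z_{t_k}$ lies in the $\eps_\delta$-neighborhood of $K_{\eta_0}$ and $t_k \ge \bar\tau_\delta$.

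First, Assumption \ref{assumption:discretization} gives $e^{t - t_k} - 1 \le e^{\Delta_k} - 1 \le 2\Delta_k \le 2\kappa (T - t_{k+1})$, and hence also $\le 2\kappa(T - t_k)$. Near $t_k \to T$ we have $\sigma^2_{T - t_k} \asymp 2(T - t_k)$, so the prefactor $(e^{t - t_k} - 1)/\sigma^2_{T - t_k}$ is bounded by $C\kappa$, a constant times $\kappa$.

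The displacement \eqref{eq:zt-ztk} consists of three pieces, which I will bound separately.
\begin{enumerate}
\item The term $(e^{t-t_k}-1)\hat z_{t_k}$ is $O(T - t_k)$, because $\hat z_{t_k}$ lies within distance $1$ of the compact set $K_{\eta_0}$. It is therefore small once $\bar\tau_\delta$ is close to $T$.
\item The target force $\gamma F_{\eta_0}$ has norm $\norm{m_{\eta_0, T-t_k}(\hat z_{t_k}) - \hat z_{t_k}}$. By Lemma \ref{le:cm}, applied with effective width $\sigma = \sigma_{T-t_k}/\lambda_{T-t_k}$ and the lower bound on $p_{\eta_0}$ to compare the tilted mean with the uniform-density case, this is at most $\dist(\hat z_{t_k}, K_{\eta_0}) + C\sigma\log\sigma^{-1}$. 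The difference between a general density and the uniform one needs some care; the simplest fix may be to rerun the term I/II argument of Lemma \ref{le:cm} with weights $p_{\eta_0}$ bounded above and below. Multiplying by the $C\kappa$ prefactor gives $C\gamma\kappa(\eps_\delta + o(1))$.
\item Each repelling term $\zeta_\eta F_\eta$ with $\eta \ne \eta_0$ is where the main obstacle lies. Here $\norm{F_\eta} \le \mathrm{diam}$ plus distance, which is $O(1)$, but it is multiplied by $1/\sigma^2$. So I must show that $\zeta_{\eta, T-t_k}(\hat z_{t_k})$ is exponentially small.
\end{enumerate}

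To handle the repelling terms, note that $\hat z_{t_k}$ is within $\eps_\delta < d_0/3$ of $K_{\eta_0}$ and hence at distance at least $2d_0/3$ from every $K_\eta$ with $\eta \ne \eta_0$. The numerator of $\zeta_\eta$ is then at most $\exp(-\lambda^2 (2d_0/3)^2/(2\sigma^2))$. For the denominator, Lemma \ref{lemma:volume-lower-bound} together with $\inf p_{\eta_0} > 0$ gives the lower bound $w_{\eta_0}\rho\, r^d \exp(-\lambda^2(\eps_\delta + r)^2/(2\sigma^2))$, and I will choose $r$ of order $\sigma$. Taking $\eps_\delta \le d_0/6$ makes the ratio $\exp(-c/\sigma^2)$ up to polynomial factors, so the repelling contribution is $(\gamma - 1)\kappa\, O(1)\, e^{-c/\sigma^2}$, which tends to $0$ as $t_k \to T$. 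The $\eta = \eta_0$ summand of the $(\gamma-1)$ sum is bounded exactly like item 2.

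Finally, to assemble the bound, choose $\eps_\delta = \min\{\delta/(4C\gamma\kappa_0 + 4),\, d_0/6\}$. This satisfies $\eps_\delta < d_0/3$ and $\eps_\delta \to 0$ as $\delta \to 0$, which is part 2 of the lemma. Then choose $\bar\tau_\delta$ close enough to $T$ that the remaining $o(1)$ terms together total at most $\delta/2$. With these choices all three pieces sum to at most $\delta$, and part 1 follows from the Lipschitz bound above.
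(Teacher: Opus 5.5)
Your proposal is correct and follows the paper's proof essentially step for step. The paper also bounds the one-step change by $(e^{t-t_k}-1)$ times the norm of the frozen drift vector; it does so by integrating $\bigl|\frac{\dd}{\dd s}\norm{\hat z_s - \Proj_{K_{\eta_0}}(\hat z_{t_k})}\bigr|$, which gives the same bound as your Lipschitz argument with \eqref{eq:zt-ztk}. It then controls $\norm{F_{\eta_0,T-t_k}(\hat z_{t_k})}$ via Lemma~\ref{le:cm}, removes the repelling terms through the exponentially small weights $\zeta_{\eta,T-t_k}(\hat z_{t_k})$ for $\eta\neq\eta_0$, and chooses $\eps_\delta$ small compared with $\delta$ and $\bar\tau_\delta$ close to $T$.

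Your caveat that Lemma~\ref{le:cm} is stated only for the uniform density on $K$ (and for $z\notin K$) is a point the paper passes over without comment. It is harmless, and no upper bound on $p_{\eta_0}$ is needed; none is assumed. Compare the tilted mass of $\{x:\norm{x-\hat z_{t_k}}\ge \dist(\hat z_{t_k},K_{\eta_0})+r\}$, which is at most $e^{-(\dist(\hat z_{t_k},K_{\eta_0})+r)^2/(2\tilde\sigma_{T-t_k}^2)}$ because $\int p_{\eta_0}=1$, with the tilted mass of $K_{\eta_0}\cap\cB(\Proj_{K_{\eta_0}}(\hat z_{t_k}),\tilde\sigma_{T-t_k})$, bounded below as in \eqref{eq:ratio-C0-lower}. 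This already gives $\norm{F_{\eta_0,T-t_k}(\hat z_{t_k})}\le \dist(\hat z_{t_k},K_{\eta_0})+O(\tilde\sigma_{T-t_k}\sqrt{\log\tilde\sigma_{T-t_k}^{-1}})$, including when $\hat z_{t_k}\in K_{\eta_0}$.
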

\begin{proof}[Proof of Lemma \ref{lemma:discretized_ddim3}]
	We prove Lemma \ref{lemma:discretized_ddim3} in Appendix \ref{proof:discretized_ddim3}. 
\end{proof}

We now prove the theorem using the above lemmas.
Choose $\kappa_0$ sufficiently small so that the conditions of Lemmas \ref{lemma:discretized_ddim}--\ref{lemma:discretized_ddim3} are satisfied.
For any $\delta > 0$, 
define $\tilde \tau_\delta = \max \{\tau_0, \tau_{\eps_\delta}, \bar \tau_\delta \}$.
Choose $k_\delta \in \mathbb{N}+$ such that $T - t_{k_\delta} > \tilde \tau_\delta$.
By Lemma \ref{lemma:discretized_ddim2}, there exists $k' \ge k_\delta$ such that $\dist(\hat z_{t_{k'}}, K_{\eta_0}) < \eps_\delta$.
Applying Lemma \ref{lemma:discretized_ddim3}, we conclude that for any $k \ge k'$, 
\begin{equation*}
   \dist(\hat z_{t_{k'}}, K_{\eta_0}) \le \eps_\delta + \delta.
\end{equation*}
The statememt of the theorem then follows by letting $\delta \to 0$.

\subsection{Proof of Lemma \ref{le:dist_deriv_general}}
\label{proof:le:dist_deriv_general}

Since the set $K_{\eta_0}$ is convex, the distance function $\dist(\cdot, K_{\eta_0})$ is continuously differentiable on $\R^d \setminus K_{\eta_0}$ \citep[Theorem 4.8 Part(5)]{federer1959curvature}. 
Therefore, by the chain rule and the ODE reformulation \eqref{eqn:ode_general}, we have
\begin{align}
\begin{split}
& \frac{\dd}{\dd t} \dist(z_t, K_{\eta_0}) = \left\langle 
              \frac{z_t - \Proj_{K_{\eta_0}}(z_t)}{\norm{z_t - \Proj_{K_{\eta_0}}(z_t)}}, \,
              \frac{\dd}{\dd t}{z}_t\right\rangle \label{eqn:dist_deriv} \\
         &= \frac{1}{\sigma_{T - t}^2}\left\langle \frac{z_t - \mathrm{Proj}_{K_{\eta_0}}(z_t)}{\norm{z_t - \mathrm{Proj}_{K_{\eta_0}}(z_t)}}, \,\gamma F_{\eta_0, T - t}(z_t) - (\gamma - 1) \sum_{\eta \in \cI} \zeta_{\eta, T - t}(z_t) F_{\eta, T - t}(z_t)\right\rangle,  
\end{split}
\end{align}
where we recall that $F_{\eta, T - t}$ is defined in \cref{eq:F-def}. 
For $z \in \R^d$, $\eta \in \cI$ and $0 < t \leq T$, define
\begin{align*}
    \Delta_{\eta, t}(z) = \Proj_{K_{\eta}}(z) - m_{\eta, t}(z)
\end{align*}
as the difference between the projection $\Proj_{K_{\eta}}(z)$ and the tilted mean $m_{\eta, t}(z)$ defined in \cref{eq:zeta-and-m}.
By the Cauchy-Schwarz inequality, we see that 
\begin{align}
\begin{split}
        & \text{Equation~\eqref{eqn:dist_deriv}} \\
        &\le \frac{\gamma - (\gamma - 1)\zeta_{\eta_0, T - t}(z_t)}{\sigma_{T - t}^2}\cdot \Big[ - \dist(z_t, K_{\eta_0})
        + \norm{\Delta_{\eta_0, T - t}(z_t)} \Big]  \label{eqn:dist_derive_II}\\
        &\qquad + \frac{\gamma - 1}{\sigma_{T - t}^2} 
        \sum_{\eta \neq \eta_0} \zeta_{\eta, T - t}(z_t) \cdot 
        \Big[\dist(z_t, K_{\eta}) + \norm{\Delta_{\eta, T - t}(z_t) }\Big] \\
        &= \frac{\gamma - 1}{\sigma_{T - t}^2} 
        \sum_{\eta \neq \eta_0} \zeta_{\eta, T - t}(z_t) \cdot 
        \Big[\dist(z_t, K_{\eta}) - \dist(z_t, K_{\eta_0}) \Big] - \frac{1}{\sigma_{T - t}^2} \dist(z_t, K_{\eta_0}) \\
        &\quad  
        + \frac{1}{\sigma_{T - t}^2}\norm{\Delta_{\eta_0, T - t}(z_t)}
        + \frac{\gamma - 1}{\sigma_{T - t}^2} 
        \sum_{\eta \neq \eta_0} \zeta_{\eta, T - t}(z_t) \Big(\norm{\Delta_{\eta, T - t}(z_t)} + \norm{\Delta_{\eta_0, T - t}(z_t)} \Big). 
\end{split}
\end{align}
For each $\eta \neq \eta_0$, we analyze \cref{eqn:dist_derive_II} by dividing the index set $\cI$ into three subsets.
Specifically, for $0 \leq t < T$, we define the following three time-dependent sets: 
\begin{align}
\label{eq:three-sets}
\begin{split}
    \mathcal{I}_t^{>} 
    &= 
        \left\{\eta \in \mathcal{I}: \dist(z_t, K_{\eta}) - \dist(z_t, K_{\eta_0}) \ge 2 \cdot \Xi_{T - t} \right\}, \\
        \mathcal{I}_t^{\approx}
    &= 
        \left\{\eta \in \mathcal{I}: 
        0 \le \dist(z_t, K_{\eta}) - \dist(z_t, K_{\eta_0}) < 2 \cdot \Xi_{T - t} \right\}, \\
        \mathcal{I}_t^{<}
    &= 
        \left\{\eta \in \mathcal{I}: \dist(z_t, K_{\eta}) - \dist(z_t, K_{\eta_0}) < 0 \right\},
\end{split}
\end{align}
where $\Xi_{T - t} = \sqrt{\tilde{\sigma}_{T - t}}$, and $\tilde \sigma_{T - t}^2 = \lambda_{T - t}^{-2}\sigma_{T - t}^2 $. 
Note that the above three sets do not overlap, $\cI = \mathcal{I}_t^{>} \sqcup \mathcal{I}_t^{\approx} \sqcup \mathcal{I}_t^{<}$, and $\eta_0 \in \mathcal{I}_t^{\approx}$. 
Using the above notations, we have
\begin{align}
\begin{split}
        & \mbox{The last line of Eq. \eqref{eqn:dist_derive_II}} \\
        &\le \underbrace{\frac{\gamma - 1}{\sigma_{T - t}^2} 
        \sum_{\eta \in \mathcal{I}_{t}^{>}} \zeta_{\eta, T - t}(z_t) 
        \Big[\dist(z_t, K_{\eta}) - \dist(z_t, K_{\eta_0}) \Big]}_{\rm I} \\
        &\quad + \underbrace{\frac{\gamma - 1}{\sigma_{T - t}^2} 
        \sum_{\eta \in \mathcal{I}_{t}^{\approx}} \zeta_{\eta, T - t}(z_t)
        \Big[\dist(z_t, K_{\eta}) - \dist(z_t, K_{\eta_0}) \Big] }_{\rm II} - \frac{1}{\sigma_{T - t}^2} \dist(z_t, K_{\eta_0}) \\
        &\quad 
        + \frac{1}{\sigma_{T - t}^2}\norm{\Delta_{\eta_0, T - t}(z_t)}
        + \frac{\gamma - 1}{\sigma_{T - t}^2} 
        \sum_{\eta \neq \eta_0} \zeta_{\eta, T - t}(z_t) \Big(\norm{\Delta_{\eta, T - t}(z_t)} + \norm{\Delta_{\eta_0, T - t}(z_t)} \Big).
        \label{eqn:dist_deriv_III}
\end{split}
\end{align}
To bound term $\rm I$, observe that 
    \begin{align*}
        & \dist(z_t, K_{\eta}) - \dist(z_t, K_{\eta_0})
        = \inf_{y \in K_{\eta}} \norm{z_t - y}
        - \inf_{y \in K_{0}} \norm{z_t - y}
        \le \sup_{y \in K_{\eta}, y^{\prime} \in K_{\eta_0}}
        \norm{y - y^{\prime}}
        \le D_0, \\
        & D_0 = \sup_{y, y' \in \cup_{\eta \in \cI} K_{\eta}} \norm{y - y'}. 
\end{align*}
We then upper bound $\sum_{\eta \in \mathcal{I}_t^{>}} \zeta_{\eta, T - t}(z_t)$. Note that 
\begin{align}
\label{eq:sum-zeta-upper1}
\begin{split}
      &  \sum_{\eta \in \mathcal{I}_t^{>}} \zeta_{\eta, T - t}(z_t) =  \, \frac{\sum_{\eta \in \mathcal{I}_t^{>}}  w_{\eta} \int p_{\eta}(x) e^{- \norm{x - z_t}^2 / (2\tilde{\sigma}_{T - t}^2) } \dd x}{\sum_{\eta \in \cI} w_{\eta} \int p_{\eta}(x) e^{- \norm{x - z_t}^2 / (2\tilde{\sigma}_{T - t}^2) } \dd x} \\
        &\leq  \,
        \Bigg({1 +  \frac{w_{\eta_0} \int_{K_{\eta_0}} p_{\eta_0}(x) e^{- \norm{x - z_t}^2 / (2\tilde{\sigma}_{T - t}^2) } \D x}{ \sum\limits_{\eta \in \mathcal{I}_t^{>}} w_{\eta} \int_{K_{\eta}} p_{\eta}(x) e^{- \norm{x - z_t}^2 / (2\tilde{\sigma}_{T - t}^2) } \D x}}\Bigg)^{-1}.
\end{split}
\end{align}
By the second point of Assumption \ref{assumption:target}, we know that
\begin{align}
\label{eq:sum-zeta-upper2}
& \frac{w_{\eta_0} \int_{K_{\eta_0}} p_{\eta_0}(x) e^{- \norm{x - z_t}^2 / (2\tilde{\sigma}_{T - t}^2) } \D x}{ \sum\limits_{\eta \in \mathcal{I}_t^{>}} w_{\eta} \int_{K_{\eta}} p_{\eta}(x) e^{- \norm{x - z_t}^2 / (2\tilde{\sigma}_{T - t}^2) } \D x} \ge w_{\eta_0} \cdot \frac{\int_{K_{\eta_0}} p_{\eta_0}(x) e^{- \norm{x - z_t}^2 / (2\tilde{\sigma}_{T - t}^2) } \D x}{ \sum\limits_{\eta \in \mathcal{I}_t^{>}}  \int_{K_{\eta}} p_{\eta}(x) e^{- \norm{x - z_t}^2 / (2\tilde{\sigma}_{T - t}^2) } \dd x}.
\end{align}
By Lemma~\ref{lemma:volume-lower-bound}, we know that there exists a constant $\rho_{{\eta_0}} > 0$ that depends only on $K_{\eta_0}$, such that
\begin{align}
\label{eq:ratio-C0-lower}
\begin{split}
       & \int_{K_{\eta_0}} p_{\eta_0}(x) e^{- \norm{x - z_t}^2 / (2 \tilde{\sigma}_{T - t}^2 )} \,  \D x \\
       & \geq  \inf_{x \in \R^d} p_{\eta_0}(x) \cdot  \int_{K_{\eta_0} \cap\,  \cB(\Proj_{K_{\eta_0}}(z_t), \, \Xi_{T - t})
       } e^{- \norm{x - z_t}^2 / (2\tilde{\sigma}_{T - t}^2) } \,  \D x \\
       & \geq \inf_{x \in \R^d} p_{\eta_0}(x) \cdot  \vol \Big(K_{\eta_0} \cap\,  \cB\big(\Proj_{K_{\eta_0}}(z_t), \, \Xi_{T - t} \big) \Big) \cdot  e^{-(\dist(z_t, K_{\eta_0}) + \Xi_{T - t})^2 / (2\tilde{\sigma}_{T - t}^2)} \\
       & \geq  \, \inf_{x \in \R^d} p_{\eta_0}(x) \cdot  \rho_{\eta_0}\, \Xi_{T - t}^{d} \cdot  e^{-(\dist(z_t, K_{\eta_0}) + \Xi_{T - t})^2 / (2\tilde{\sigma}_{T - t}^2)}.
\end{split}
\end{align}
On the other hand, for all $\eta \in \mathcal{I}_t^{>}$, it holds that
\begin{align}
\label{eq:ratio-C0-upper}
    \int_{K_\eta} p_{\eta}(x)\,  e^{- \norm{x - z_t}^2 / (2\tilde{\sigma}_{T - t}^2) } \D x \le e^{-\dist(z_t, K_{\eta})^2 / (2\tilde{\sigma}_{T - t}^2)} \leq  e^{-(\dist(z_t, K_{\eta_0}) + 2 \Xi_{T - t})^2 / (2\tilde{\sigma}_{T - t}^2)}. 
\end{align}
Combining \cref{eq:ratio-C0-lower,eq:ratio-C0-upper}, we conclude that 
\begin{align}
\label{eq:sum-zeta-upper3}
\begin{split}
    & \frac{\int_{K_{\eta_0}} p_{\eta_0}(x) \, e^{- \norm{x - z_t}^2 / (2\tilde{\sigma}_{T - t}^2) } \D x}
    { \sum\limits_{\eta \in \mathcal{I}_t^{>}}  \int_{K_{\eta}} p_{\eta}(x) \, e^{- \norm{x - z_t}^2 / (2\tilde{\sigma}_{T - t}^2) } \dd x} \\
    &\ge \frac{\inf_{x \in \R^d} p_{\eta_0}(x) \cdot \rho_{\eta_0} \Xi_{T - t}^{d}}{\abs{\mathcal{I}}} \cdot
        \exp \Big( \frac{(\dist(z_t, K_{\eta_0}) + 2 \Xi_{T - t})^2 - (\dist(z_t, K_{\eta_0}) + \Xi_{T - t})^2}{2 \tilde{\sigma}_{T - t}^2 } \Big).
\end{split}
\end{align}
Putting together \cref{eq:sum-zeta-upper1,eq:sum-zeta-upper2,eq:sum-zeta-upper3},
we see that
    \begin{align}
    \label{sum:zeta-upper}
    \begin{split}
        & \sum_{\eta \in \mathcal{I}_t^{>}} \zeta_{\eta, T - t}(z_t) \\
        &\le \frac{\abs{\cI}}{w_{\eta_0} \inf_{x \in \R^d} p_{\eta_0}(x) \cdot \rho_{\eta_0} \Xi_{T - t}^{d}} \cdot
        \exp \Big( -\frac{(\dist(z_t, K_{\eta_0}) + 2 \Xi_{T - t})^2 - (\dist(z_t, K_{\eta_0}) + \Xi_{T - t})^2}{2 \tilde{\sigma}_{T - t}^2 } \Big) \\
        &\le \frac{\abs{\cI}}{w_{\eta_0} \inf_{x \in \R^d} p_{\eta_0}(x) \cdot \rho_{\eta_0} \Xi_{T - t}^{d}} \cdot
        \exp \left( -\frac{3 \Xi_{T - t}^2}{2 \tilde{\sigma}_{T - t}^2 }\right).
    \end{split}
    \end{align}
In summary, we have
    \begin{align}\label{eqn:bound_I}
    \begin{split}
         & \frac{\gamma - 1}{\sigma_{T - t}^2} 
        {\sum_{\eta \in \mathcal{I}_{t}^{>}} \zeta_{\eta, T - t}(z_t) 
        \Big[\dist(z_t, K_{\eta}) - \dist(z_t, K_{\eta_0}) \Big]} \\
        & \le \frac{D_0(\gamma - 1) \abs{\cI}}{w_{\eta_0} \inf_{x \in \R^d} p_{\eta_0}(x) \cdot \rho_{\eta_0} \Xi_{T - t}^{d}\sigma^2_{T - t}}  \cdot
        \exp \left( -\frac{3 \Xi_{T - t}^2}{2 \tilde{\sigma}_{T - t}^2 }\right), 
    \end{split}
    \end{align}
which upper bounds term $\rm I$. 
To bound term $\rm II$, simply observe that $\sum_{\eta \in \mathcal{I}_{t}^{\approx}}\zeta_{\eta, T - t}(z_t) \le 1$, and
\begin{equation}\label{eqn:bound_II}
        \frac{\gamma - 1}{\sigma_{T - t}^2} 
        \sum_{\eta \in \mathcal{I}_{t}^{\approx}} \zeta_{\eta, T - t}(z_t)
        \Big[\dist(z_t, K_{\eta}) - \dist(z_t, K_{\eta_0}) \Big]
        \le  \frac{2(\gamma - 1) \Xi_{T - t}}{\sigma_{T - t}^2}.
\end{equation}
By Lemma~\ref{le:cm}, we know that 
\begin{equation}
\label{eq:Delta-upper-bound}
        \norm{\Delta_{\eta, T - t}(z_t)} \le C_{K_{\eta}}\Xi_{T - t}^2 \log (\Xi_{T - t}^{-2})
\end{equation}
for some positive constant $C_{K_\eta}$ that depends only on $K_{\eta}$. 
    Thus, combining the upper bounds in \cref{eqn:dist_deriv_III,eqn:bound_I,eqn:bound_II,eq:Delta-upper-bound}, 
    we see that 
\begin{align}
\label{eqn:dist_deriv_final_bound}
\begin{split}
    & \frac{\dd}{\dd t} \dist(z_t, K_{\eta_0}) \leq \frac{D_0(\gamma - 1) \abs{\cI}}{w_{\eta_0} \inf_{x \in \R^d} p_{\eta_0}(x) \cdot \rho_{\eta_0} \Xi_{T - t}^{d}\sigma^2_{T - t}}  \cdot
        \exp \left( -\frac{3 \Xi_{T - t}^2}{2 \tilde{\sigma}_{T - t}^2 }\right) + \frac{2(\gamma - 1) \Xi_{T - t}}{\sigma_{T - t}^2} \\
        & \qquad \qquad \qquad \qquad  - \frac{1}{\sigma_{T - t}^2} \dist(z_t, K_{\eta_0}) + \frac{2\gamma \sup_{\eta \in \cI} C_{K_{\eta}}\Xi_{T - t}^2 \log (\Xi_{T - t}^{-2})}{\sigma_{T - t}^2}. 
\end{split}
\end{align}
We then can choose $\tau_\varepsilon$ sufficiently close to $T$, such that for all $\tau_\varepsilon \leq t \leq T$, if $\dist(z_t, K_{\eta_0}) \geq \epsilon$, then 
\begin{align*}
    \frac{\dd}{\dd t} \dist(z_t, K_{\eta_0}) \leq - \frac{1}{4\sigma_{T - t}^2} \dist(z_t, K_{\eta_0}).
\end{align*}
The proof is done.

\subsection{Proof of Lemma \ref{lemma:discretized_ddim}}
\label{proof:lemma:discretized_ddim}

For $t \in [t_k, t_{k + 1})$, differentiating $\norm{\hat z_t - \Proj_{K_{\eta_0}}(\hat z_{t_k})}$ with respect to $t$ yields
\begin{align}
\label{eq:d-zt-projk}
\begin{split}
    & \frac{\dd }{\dd t} \norm{\hat z_t - \Proj_{K_{\eta_0}}(\hat z_{t_k})} = \left\langle 
              \frac{\hat z_t -  \Proj_{K_{\eta_0}}(\hat z_{t_k}) }{ \norm{\hat z_t - \Proj_{K_{\eta_0}}(\hat z_{t_k})}}, \,
              \frac{\dd}{\dd t}\hat {z}_t\right\rangle \\
    & = \left\langle \frac{\hat z_t -  \Proj_{K_{\eta_0}}(\hat z_{t_k})}{ \norm{\hat z_t - \Proj_{K_{\eta_0}}(\hat z_{t_k})}}, \, \hat z_t +  \frac{\gamma}{\sigma_{T - t_k}^2} F_{\eta_0, T - t_k}(\hat z_{t_k}) - \frac{\gamma - 1}{\sigma_{T - t_k}^2} \sum_{\eta \in \cI} \zeta_{\eta, T - t_k}(\hat z_{t_k}) F_{\eta, T - t_k}(\hat z_{t_k})\right\rangle \\
    & = e^{t - t_k} \left\langle \frac{\hat z_{t} -  \Proj_{K_{\eta_0}}(\hat z_{t_k})}{ \norm{\hat z_t - \Proj_{K_{\eta_0}}(\hat z_{t_k})}}, \, \hat z_{t_k} + \frac{\gamma}{\sigma_{T - t_k}^2} F_{\eta_0, T - t_k}(\hat z_{t_k}) - \frac{\gamma - 1}{\sigma_{T - t_k}^2} \sum_{\eta \in \cI} \zeta_{\eta, T - t_k}(\hat z_{t_k}) F_{\eta, T - t_k}(\hat z_{t_k})\right\rangle, 
\end{split}
\end{align}
where the second to last equality is by \cref{eq:26}, and the last equality is by \cref{eq:zt-ztk}. 
Decomposing $\hat z_t - \Proj_{K_{\eta_0}}(\hat z_{t_k})$ as the sum of $\hat z_t - \hat z_{t_k}$ and $\hat z_{t_k} - \Proj_{K_{\eta_0}}(\hat z_{t_k})$, we obtain that
\begin{align}
\label{eq:dist-zt-ztk-proj}
    \frac{\dd }{\dd t} \norm{\hat z_t - \Proj_{K_{\eta_0}}(\hat z_{t_k})} = \frac{e^{t - t_k}}{\norm{\hat z_t -\Proj_{K_{\eta_0}}(\hat z_{t_k})}} \cdot \big( \mathsf{F}_1(t) +  \mathsf{F}_2(t)\big), 
\end{align}
where 
\begin{align}
\label{eq:F1-F2}
\begin{split}
   & \mathsf{F}_1(t) = \left\langle \hat z_t - \hat z_{t_k}, \, \hat z_{t_k} + \frac{\gamma}{\sigma_{T - t_k}^2} F_{\eta_0, T - t_k}(\hat z_{t_k}) - \frac{\gamma - 1}{\sigma_{T - t_k}^2} \sum_{\eta \in \cI} \zeta_{\eta, T - t_k}(\hat z_{t_k}) F_{\eta, T - t_k}(\hat z_{t_k}) \right\rangle, \\
   & \mathsf{F}_2(t) = \left\langle \hat z_{t_k} -  \Proj_{K_{\eta_0}}(\hat z_{t_k}),\, \hat z_{t_k} + \frac{\gamma}{\sigma_{T - t_k}^2} F_{\eta_0, T - t_k}(\hat z_{t_k}) - \frac{\gamma - 1}{\sigma_{T - t_k}^2} \sum_{\eta \in \cI} \zeta_{\eta, T - t_k}(\hat z_{t_k}) F_{\eta, T - t_k}(\hat z_{t_k}) \right\rangle. 
\end{split}
\end{align}
Define $D_0 = \sup_{y, y' \in \cup_{\eta \in \cI} K_{\eta}}\norm{y - y'}$ and $R_0 = \sup_{y \in \cup_{\eta \in \cI} K_{\eta}} \norm{y}$. 
By \cref{eq:zt-ztk}, we know that 
\begin{align}
\label{eq:F1-upper}
\begin{split}
    \mathsf{F}_1(t) =& (e^{t - t_k} - 1) \cdot \norm{\hat z_{t_k} + \frac{\gamma}{\sigma_{T - t_k}^2} F_{\eta_0, T - t_k}(\hat z_{t_k}) - \frac{\gamma - 1}{\sigma_{T - t_k}^2} \sum_{\eta \in \cI} \zeta_{\eta, T - t_k}(\hat z_{t_k}) F_{\eta, T - t_k}(\hat z_{t_k})}^2 \\
    \leq & \, (e^{t - t_k} - 1) \cdot \Big( \norm{\hat z_{t_k}} + \frac{\gamma - 1}{\sigma_{T - t_k}^2} \sum_{\eta \in \cI} \zeta_{\eta, T - t_k} (\hat z_{t_k}) \norm{m_{\eta_0, T - t_k} (\hat z_{t_k}) - m_{\eta, T - t_k}(\hat z_{t_k})} \\
    &  + \frac{1}{\sigma_{T - t_k}^2} \dist(\hat z_{t_k}, K_{\eta_0}) + \frac{1}{\sigma_{T - t_k}^2} \norm{\Proj_{K_{\eta_0}}(\hat z_{t_k}) - m_{\eta_0, T - t_k} (\hat z_{t_k})} \Big)^2 \\
    \leq & \, (e^{t - t_k} - 1) \Big(\frac{\sigma_{T - t_k}^2 + 1}{\sigma_{T - t_k}^2} \dist(\hat z_{t_k}, K_{\eta_0}) + R_0 + \frac{\gamma D_0}{\sigma_{T - t_k}^2}  \Big)^2 \\
    \leq & \frac{3(e^{t - t_k} - 1)(\sigma_{T - t_k}^2 + 1)^2\, \dist(\hat z_{t_k}, K_{\eta_0})^2}{\sigma_{T - t_k}^4} + 3 (e^{t - t_k} - 1) R_0^2 + \frac{3(e^{t - t_k} - 1)\gamma^2 D_0^2}{\sigma_{T - t_k}^4}.
\end{split}
\end{align}
By Assumption \ref{assumption:discretization}, we see that for all $0 < t - t_k \leq 1$ we have  $\sigma_{T - t_k}^{-2}(e^{t - t_k} - 1) \lesssim \kappa$, where ``$\lesssim$" hides a positive numerical constant. 

As for $\mathsf{F}_2(t)$, its upper bound is similar to the proof of Lemma \ref{le:dist_deriv_general}. Specifically, by the upper bound in \cref{eqn:dist_deriv_final_bound}, we have (recall $\Xi_{T - t} = \sqrt{\tilde{\sigma}_{T - t}}$ and $\tilde \sigma_{T - t}^2 = \lambda_{T - t}^{-2}\sigma_{T - t}^2 $) 
\begin{align*}
	\begin{split}
    & \frac{1}{\sigma_{T - t_k}^2}\left\langle \frac{\hat z_{t_k} - \mathrm{Proj}_{K_{\eta_0}}(\hat z_{t_k})}{\norm{\hat z_{t_k} - \mathrm{Proj}_{K_{\eta_0}}(\hat z_{t_k})}}, \,\gamma F_{\eta_0, T - t}(\hat z_{t_k}) - (\gamma - 1) \sum_{\eta \in \cI} \zeta_{\eta, T - t}(\hat z_{t_k}) F_{\eta, T - t}(\hat z_{t_k})\right\rangle \\
    & \leq \frac{D_0(\gamma - 1) \abs{\cI}}{w_{\eta_0} \inf_{x \in \R^d} p_{\eta_0}(x) \cdot \rho_{\eta_0} \Xi_{T - {t_k}}^{d}\sigma^2_{T - {t_k}}}  \cdot
        \exp \left( -\frac{3 \Xi_{T - {t_k}}^2}{2 \tilde{\sigma}_{T - {t_k}}^2 }\right) + \frac{2(\gamma - 1) \Xi_{T - {t_k}}}{\sigma_{T - {t_k}}^2}   - \frac{1}{\sigma_{T - {t_k}}^2} \dist(\hat z_{t_k}, K_{\eta_0}) \\
        & \qquad  + \frac{2\gamma \sup_{\eta \in \cI} C_{K_{\eta}}\Xi_{T - {t_k}}^2 \log (\Xi_{T - {t_k}}^{-2})}{\sigma_{T - {t_k}}^2}. 
\end{split}
\end{align*}
The above upper bound implies the following bound for $\mathsf{F}_2(t)$:
\begin{align}
\label{eq:F2-three-lines}
\begin{split}
	\mathsf{F}_2(t) \leq & \frac{D_0(\gamma - 1) \abs{\cI} \dist(\hat z_{t_k}, K_{\eta_0}) }{w_{\eta_0} \inf_{x \in \R^d} p_{\eta_0}(x) \cdot \rho_{\eta_0} \Xi_{T - {t_k}}^{d}\sigma^2_{T - {t_k}}}  \cdot
        \exp \left( -\frac{3 \Xi_{T - {t_k}}^2}{2 \tilde{\sigma}_{T - {t_k}}^2 }\right) + \frac{2(\gamma - 1) \Xi_{T - {t_k}}\dist(\hat z_{t_k}, K_{\eta_0}) }{\sigma_{T - {t_k}}^2}    \\
        & \qquad - \frac{1}{\sigma_{T - {t_k}}^2} \dist(\hat z_{t_k}, K_{\eta_0})^2 + \frac{2\gamma \sup_{\eta \in \cI} C_{K_{\eta}}\Xi_{T - {t_k}}^2 \log (\Xi_{T - {t_k}}^{-2}) \dist(\hat z_{t_k}, K_{\eta_0}) }{\sigma_{T - {t_k}}^2} \\
        & \qquad + \dist(\hat z_{t_k}, K_{\eta_0}) \big(\dist(\hat z_{t_k}, K_{\eta_0}) + R_0 \big). 
\end{split} 
\end{align}
Therefore, when $\dist(\hat z_{t_k}, K_{\eta_0}) \geq d_0 / 3$, there exists $\tau_1 > 0$ that depends only on $(p_0, \gamma)$, such that for all $t_k \in [\tau_1, T)$ and $t \in [t_k, t_{k + 1})$, it holds that
\begin{align}
\label{eq:F2-upper}
    \mathsf{F}_2(t) \leq -\frac{1}{4\sigma_{T - t_k}^2} \dist (\hat z_{t_k}, K_{\eta_0})^2 + \dist (\hat z_{t_k}, K_{\eta_0}) \big( \dist (\hat z_{t_k}, K_{\eta_0}) + R_0\big). 
\end{align}
Combining \cref{eq:dist-zt-ztk-proj,eq:F1-upper,eq:F2-upper}, we conclude that under 
%
%
Assumption \ref{assumption:discretization}, there exist $\tau_0, \kappa_0 > 0$, such that for all $\tau_0 \leq t < T$ and $\kappa \in (0, \kappa_0]$,
\begin{align}
\label{eq:B2-2}
\begin{split}
\frac{1}{2} \frac{\dd }{\dd t} \norm{\hat z_t - \Proj_{K_{\eta_0}}(\hat z_{t_k})}^2 \leq - \frac{1}{5\sigma_{T - t_k}^2} \dist(\hat z_{t_k}, K_{\eta_0})^2,
\end{split}
\end{align}
%
%
completing the proof.

\subsection{Proof of Lemma \ref{lemma:discretized_ddim2}}
\label{proof:lemma:discretized_ddim2}

For any $z \in \R^d$ that satisfies $\dist(z, K_{\eta_0}) \leq d_0 / 3$, it holds that (recall $\tilde \sigma_{T - t}^2 = \lambda_{T - t}^{-2} \sigma_{T - t}^2$)
\begin{align*}
    \sum_{\eta \neq \eta_0} \zeta_{\eta, T - t} (z) = & \, \frac{\sum_{\eta \neq \eta_0}  w_{\eta} \int p_{\eta}(x) e^{- \norm{x - z}^2 / (2\tilde{\sigma}_{T - t}^2) } \dd x}{\sum_{\eta \in \cI} w_{\eta} \int p_{\eta}(x) e^{- \norm{x - z}^2 / (2\tilde{\sigma}_{T - t}^2) } \dd x} \\
    = &  \,
        \Bigg({1 +  \frac{w_{\eta_0} \int_{K_{\eta_0}} p_{\eta_0}(x) e^{- \norm{x - z}^2 / (2\tilde{\sigma}_{T - t}^2) } \D x}{ \sum\limits_{\eta \neq \eta_0} w_{\eta} \int_{K_{\eta}} p_{\eta}(x) e^{- \norm{x - z}^2 / (2\tilde{\sigma}_{T - t}^2) } \D x}}\Bigg)^{-1}
\end{align*}
We next divide the proof into two parts based on the distance to $K_{\eta_0}$.

\subsubsection*{Case I: $\dist(\hat z_{t_k}, K_{\eta_0}) \in[\eps, d_0 / 3)$}

Recall that the three index sets are defined in \cref{eq:three-sets}. 
When $\dist(\hat z_{t_k}, K_{\eta_0}) < d_0 / 3$ and $\tau_{\eps}$ is sufficiently close to $T$ (threshold depending only on $d_0$), we have 
\begin{equation*}
   \dist(z_{t_k}, K_{\eta}) - \dist(\hat{z}_{t_k}, K_{\eta_0}) \ge \frac{2d_0}{3}
\end{equation*}
for all $\eta \neq \eta_0$.
Therefore, via a similar calculation leading to \cref{sum:zeta-upper}, we have
\begin{align}
\label{eq:sum-zeta-upper-2}
    \begin{split}
    \sum_{\eta \neq \eta_0} \zeta_{\eta, T - t} (z) 
    &\le \frac{\abs{\cI}}{w_{\eta_0} \inf_{x \in \R^d} p_{\eta_0}(x) \cdot \rho_{\eta_0} \Xi_{T - t}^{d}} \cdot
        \exp \Big( -\frac{(\dist(z_t, K_{\eta_0}) + 2/3 \cdot d_0 )^2 - (\dist(z_t, K_{\eta_0}) + \Xi_{T - t})^2}{2 \tilde{\sigma}_{T - t}^2 } \Big) \\
    &\le 
    \frac{\abs{\cI}}{w_{\eta_0} \inf_{x \in \R^d} p_{\eta_0}(x) \cdot \rho_{\eta_0} (d_0 / 6)^{d}} \cdot
        \exp \left( -\frac{ d_0^2}{24  \tilde{\sigma}_{T - t}^2 }\right). 
    \end{split}
\end{align}
For all $t \in [t_{k}, t_{k + 1})$, recall that 
\begin{align*}
    \frac{\dd }{\dd t} \norm{\hat z_t - \Proj_{K_{\eta_0}}(\hat z_{t_k})} = \frac{e^{t - t_k}}{\norm{\hat z_t -\Proj_{K_{\eta_0}}(\hat z_{t_k})}} \cdot \big( \mathsf{F}_1(t) +  \mathsf{F}_2(t)\big), 
\end{align*}
where $\mathsf{F}_1(t)$ and $\mathsf{F}_2(t)$ are defined in \cref{eq:F1-F2}. 
By Lemma \ref{le:cm}, 
\begin{align}
\label{eq:F-upper-dist-Proj-m}
\begin{split}
     \norm{F_{\eta_0, T - t_k}(\hat z_{t_k})} \leq &\, \dist(\hat z_{t_k}, K_{\eta_0}) + \norm{\Proj_{K_{\eta_0}}(\hat z_{t_k}) - m_{\eta_0, T - t_k}(\hat z_{t_k})} \\
     \leq &\, \dist(\hat z_{t_k}, K_{\eta_0}) + C_{K_{\eta_0}} \tilde \sigma_{T - t_k} \log \tilde \sigma_{T - t_k}^{-1}, 
\end{split}
\end{align}
where $C_{K_{\eta_0}} > 0$ is a constant that depends only on $K_{\eta_0}$. 
Recall that $d_0 = \inf_{y \in K_{\eta},\, y' \in K_{\eta'}, \, \eta \neq \eta'} \norm{y - y'}$, $D_0 = \sup_{y, y' \in \cup_{\eta \in \cI} K_{\eta}}\norm{y - y'}$ and $R_0 = \sup_{y \in \cup_{\eta \in \cI} K_{\eta}} \norm{y}$.
By \cref{eq:sum-zeta-upper-2,eq:F-upper-dist-Proj-m} and the triangle inequality, we see that  
\begin{align}
\label{eq:F1(t)}
\begin{split}
    \mathsf{F}_1(t) = &(e^{t - t_k} - 1) \cdot \norm{\hat z_{t_k} + \frac{\gamma}{\sigma_{T - t_k}^2} F_{\eta_0, T - t_k}(\hat z_{t_k}) - \frac{\gamma - 1}{\sigma_{T - t_k}^2} \sum_{\eta \in \cI} \zeta_{\eta, T - t_k}(\hat z_{t_k}) F_{\eta, T - t_k}(\hat z_{t_k})}^2 \\
    &\leq  (e^{t - t_k} - 1) \cdot \Big( \norm{\hat z_{t_k}} + \frac{\gamma\big(\dist(\hat z_{t_k}, K_{\eta_0}) + C_{K_{\eta_0}} \tilde \sigma_{T - t_k} \log \tilde \sigma_{T - t_k}^{-1} \big) }{\sigma_{T -t_k}^2}  \\
    & \qquad +  \frac{(\gamma - 1)\abs{\cI} e^{ -{ d_0^2} /  (24  \tilde{\sigma}_{T - t}^2 )} (\dist(\hat z_{t_k}, K_{\eta_0}) + D_0) }{\sigma_{T - t_k}^2w_{\eta_0} \inf_{x \in \R^d} p_{\eta_0}(x) \cdot \rho_{\eta_0} (d_0 / 6)^{d}} 
          \Big)^2.
\end{split}
\end{align}
As for $\mathsf{F}_2(t)$, by the upper bound in \cref{eq:F2-three-lines}, there exists $\bar \tau_{1, \varepsilon} > 0$ that depends only on $(p_0, \varepsilon,\gamma)$, such that for all $t_k \in (\bar \tau_{1, \varepsilon}, T)$ and $t \in [t_k, t_{k + 1})$,  
\begin{align}
\label{eq:F2(t)}
    \mathsf{F}_2(t) \leq -\frac{1}{4\sigma_{T - t_k}^2} \dist (\hat z_{t_k}, K_{\eta_0})^2 + \dist (\hat z_{t_k}, K_{\eta_0}) \big( \dist (\hat z_{t_k}, K_{\eta_0}) + R_0\big). 
\end{align}
Putting together \cref{eq:F1(t),eq:F2(t)}, we conclude that there exists $\tau_{1, \varepsilon} > 0$ that depends only on $(p_0, \varepsilon,\gamma)$, such that for all $t_k \in (\tau_{1, \eps}, T)$ and $t \in [t_k, t_{k + 1})$, 
\begin{align*}
    \frac{1}{2} \frac{\dd }{\dd t} \norm{\hat z_t - \Proj_{K_{\eta_0}}(\hat z_{t_k})}^2 \leq -\frac{1}{5\sigma_{T - t_k}^2} \dist (\hat z_{t_k}, K_{\eta_0})^2. 
\end{align*}

\subsubsection*{Case II: $\dist(\hat z_{t_k}, K_{\eta_0}) \in[d_0 / 3, \infty)$}

This part follows immediately from Lemma \ref{lemma:discretized_ddim}. 
We complete the proof by putting together results in these two cases.

\subsection{Proof of Lemma \ref{lemma:discretized_ddim3}}
\label{proof:discretized_ddim3}

Recall that $d_0 = \inf_{y \in K_{\eta},\, y' \in K_{\eta'}, \, \eta \neq \eta'} \norm{y - y'}$, $R_0 = \sup_{y \in \cup_{\eta \in \cI} K_{\eta}} \norm{y}$, $D_0 = \sup_{y, y' \in \cup_{\eta \in \cI} K_{\eta}}\norm{y - y'}$ and $\tilde \sigma_{T - t}^2 = \lambda_{T - t}^{-2} \sigma_{T - t}^2$. 
We choose $\eps_{\delta} < d_0 / 3$.
Note that over $[t_k, t_{k + 1})$, by \cref{eq:d-zt-projk} we have 
\begin{align*}
	& \Big|\frac{\dd }{\dd t} \norm{\hat z_t - \Proj_{K_{\eta_0}}(\hat z_{t_k})}\Big| \\
	& = \left|e^{t - t_k} \left\langle \frac{\hat z_{t} -  \Proj_{K_{\eta_0}}(\hat z_{t_k})}{ \norm{\hat z_t - \Proj_{K_{\eta_0}}(\hat z_{t_k})}}, \, \hat z_{t_k} + \frac{\gamma}{\sigma_{T - t_k}^2} F_{\eta_0, T - t_k}(\hat z_{t_k}) - \frac{\gamma - 1}{\sigma_{T - t_k}^2} \sum_{\eta \in \cI} \zeta_{\eta, T - t_k}(\hat z_{t_k}) F_{\eta, T - t_k}(\hat z_{t_k})\right\rangle  \right| \\
	& \leq e^{t - t_k} \Big( \norm{\hat z_{t_k}} + \frac{\gamma}{\sigma_{T - t_k}^2} \norm{F_{\eta_0, T - t_k}(\hat z_{t_k})} + \frac{\gamma - 1}{\sigma_{T - t_k}^2}\sum_{\eta \neq \eta_0} \zeta_{\eta, T - t_k}(\hat z_{t_k}) \norm{F_{\eta, T - t_k}(\hat z_{t_k})}  \Big) \\
	& \leq e^{t - t_k} \Big( \dist(\hat z_{t_k}, K_{\eta_0}) + R_0 + \frac{\gamma\big(\dist(\hat z_{t_k}, K_{\eta_0}) + C_{K_{\eta_0}} \tilde \sigma_{T - t_k} \log \tilde \sigma_{T - t_k}^{-1} \big) }{\sigma_{T -t_k}^2} \\
	&\qquad \qquad  + \frac{\gamma - 1}{\sigma_{T - t_k}^2}\sum_{\eta \neq \eta_0} \zeta_{\eta, T - t_k}(\hat z_{t_k}) \big( \dist(\hat z_{t_k}, K_{\eta_0}) + D_0 \big)  \Big), 
\end{align*}
where the last inequality follows from triangle inequality and \cref{eq:F-upper-dist-Proj-m}. By assumption we know that $\dist(\hat z_{t_k}, K_{\eta_0}) \leq \eps_\delta < d_0 / 3$. 
In addition, similar to the derivation of \cref{eq:sum-zeta-upper-2} we know that 
\begin{align*}
	\sum_{\eta \neq \eta_0} \zeta_{\eta, T - t_k} (\hat z_{t_k}) \leq \frac{\abs{\cI}}{w_{\eta_0} \inf_{x \in \R^d} p_{\eta_0}(x) \cdot \rho_{\eta_0} (d_0 / 6)^{d}} \cdot
        \exp \left( -\frac{ d_0^2}{24  \tilde{\sigma}_{T - t_k}^2 }\right).
\end{align*}
Leveraging these upper bounds, we further conclude that 
\begin{align*}
	 & \Big|\frac{\dd }{\dd t} \norm{\hat z_t - \Proj_{K_{\eta_0}}(\hat z_{t_k})}\Big| \\
	 & \leq e^{t - t_k} \Big( \eps_{\delta} + R_0 + \frac{\gamma\big(\eps_{\delta} + C_{K_{\eta_0}} \tilde \sigma_{T - t_k} \log \tilde \sigma_{T - t_k}^{-1} \big) }{\sigma_{T -t_k}^2} + \frac{(\gamma - 1)\abs{\cI}( \eps_{\delta} + D_0 ) e^{ -{ d_0^2} / (24  \tilde{\sigma}_{T - t_k}^2 )} }{\sigma_{T - t_k}^2w_{\eta_0} \inf_{x \in \R^d} p_{\eta_0}(x) \cdot \rho_{\eta_0} (d_0 / 6)^{d}} 
         \Big). 
\end{align*}
Under Assumption \ref{assumption:discretization}, we conclude that for any $\kappa_0 \in (0, 1)$, there exist $\bar \tau_\delta > 0$ and $\eps_\delta \in (0, \delta)$ that depend only on $(\delta, p_0, \gamma)$, such that whenever $t_k \in [\bar\tau_\delta, T)$, 
\begin{align*}
	& \dist(\hat z_t, K_{\eta_0}) - \dist(\hat z_{t_k}, K_{\eta_0}) \leq \int_{t_k}^t \Big|\frac{\dd }{\dd s} \norm{\hat z_s - \Proj_{K_{\eta_0}}(\hat z_{t_k})}\Big| \dd s \\
	& \leq (e^{t - t_k} - 1) \Big( \eps_{\delta} + R_0 + \frac{\gamma\big(\eps_{\delta} + C_{K_{\eta_0}} \tilde \sigma_{T - t_k} \log \tilde \sigma_{T - t_k}^{-1} \big) }{\sigma_{T -t_k}^2} + \frac{(\gamma - 1)\abs{\cI}( \eps_{\delta} + D_0 ) e^{ -{ d_0^2} / (24  \tilde{\sigma}_{T - t_k}^2 )} }{\sigma_{T - t_k}^2w_{\eta_0} \inf_{x \in \R^d} p_{\eta_0}(x) \cdot \rho_{\eta_0} (d_0 / 6)^{d}} 
         \Big) \\
         & \leq \delta. 
\end{align*}

The proof is done.

\section{Proofs for guided DDPM}

\subsection{Proof of Theorem \ref{thm:guided_sde_qualitative}}
\label{proof:thm:guided_sde_qualitative}

To address technical issues in applying It\^{o}'s lemma, we first establish that the trajectory $\{{z}_t\}_{t = 0}^T$ defined in Eq.~\eqref{eqn:guided_sde} eventually enters an $\epsilon$-neighborhood of a smooth approximation to $K_{\eta_0}$.

\begin{thm}
\label{thm:guided_sde_qualitative_smooth}
  Under the assumptions of Theorem \ref{thm:guided_sde_qualitative}, 
  assume that $\tilde{K}_{\eta_0} \subseteq \R^d$ is a convex set with smooth boundary and $\dH(\tilde{K}_{\eta_0}, K_{\eta_0}) \le \eps$, 
  where $\dH(\cdot, \cdot)$ denotes the Hausdorff distance (see Definition \ref{def:hausdorff}).
  Then, almost surely,
  \begin{equation*}
        \limsup_{t \to T^-} \dist(z_t, \tilde{K}_{\eta_0}) \le (4 \gamma + 6) \varepsilon.
  \end{equation*}
\end{thm}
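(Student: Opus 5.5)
Set $\tilde d_t = \dist(z_t, \tilde K_{\eta_0})$. By Lemma \ref{lem:dist_laplacian} this is $C^2$ off $\tilde K_{\eta_0}$ with Laplacian bounded by $\mathcal{C}_{\tilde K}$, so I can apply It\^{o}'s formula to it along the SDE \eqref{eqn:guided_sde} on any time interval where $z_t$ stays outside $\tilde K_{\eta_0}$. With unit normal $\tilde n_t = \nabla \tilde d_t = (z_t - \Proj_{\tilde K_{\eta_0}}(z_t))/\tilde d_t$, this gives
\begin{align*}
\dd \tilde d_t = \frac{2}{\sigma_{T-t}^2}\Big\langle \tilde n_t,\, \gamma F_{\eta_0,T-t}(z_t) - (\gamma-1)\sum_{\eta}\zeta_{\eta,T-t}(z_t)F_{\eta,T-t}(z_t)\Big\rangle \dd t + e^{2(T-t)}\Delta\tilde d_t\,\dd t + \sqrt2 e^{T-t}\langle \tilde n_t,\dd B_t\rangle .
\end{align*}
Everything below is done for fixed $\varepsilon$ and fixed $\tilde K_{\eta_0}$.

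\textbf{Drift bound.} I would redo the argument of Lemma \ref{le:dist_deriv_general} with $\tilde n_t$ in place of the true normal. Since $\dH(\tilde K_{\eta_0},K_{\eta_0})\le\varepsilon$, the quantities $|\dist(z,K_{\eta_0}) - \dist(z,\tilde K_{\eta_0})|$ and $\|\Proj_{\tilde K}(z) - \Proj_K(z)\|$ are controlled by $\varepsilon$, at least in the directional sense needed for inner products with $\tilde n_t$. Concretely, $\langle \tilde n_t, y - z_t\rangle \le -\tilde d_t$ for $y\in \tilde K$, so $\langle \tilde n_t, y - z_t\rangle \le -\tilde d_t + \varepsilon$ for $y \in K_{\eta_0}$. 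The repulsive terms are handled as in \eqref{eqn:dist_derive_II}, bounding $\langle \tilde n_t, -(m_\eta - z_t)\rangle \le \dist(z_t,K_\eta)+\|\Delta_\eta\|$, and the comparison $\dist(z_t,K_\eta)-\dist(z_t,K_{\eta_0})$ now picks up about $2\varepsilon$ of slack. Collecting the $\varepsilon$-losses, which carry coefficients of order $\gamma$, I expect that for $t$ past some $\tau$ and whenever $\tilde d_t \ge (4\gamma+6)\varepsilon/2$, the drift is at most $-c\,\tilde d_t/\sigma_{T-t}^2$ with $c>0$. The Laplacian term is bounded because $e^{2(T-t)}\mathcal{C}_{\tilde K}$ stays bounded as $t\to T$, so the $1/\sigma_{T-t}^2\to\infty$ factor dominates it.

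\textbf{Stochastic comparison.} Take a level $a = (4\gamma+6)\varepsilon/2$ and the upcrossing level $b = (4\gamma+6)\varepsilon$. During any excursion above $a$ after time $\tau$, $\tilde d_t$ is dominated (via the SDE comparison theorem \citep{ikeda2014stochastic}, or directly through a time change) by a one-dimensional process $\dd Y = -\frac{c a}{\sigma_{T-t}^2}\dd t + \sqrt2 e^{T-t}\dd W$. The martingale part has bounded quadratic variation on $[\tau,T)$, so by the martingale convergence theorem it converges a.s. Meanwhile $\int^T \sigma_{T-t}^{-2}\dd t = \infty$ because $\sigma_{T-t}^2 \sim 2(T-t)$. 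Together these give two things.

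First, $\tilde d_t$ cannot stay above $a$ on all of $[\tau',T)$: otherwise $\tilde d_t \to -\infty$, which is a contradiction. So $\tilde d_t$ a.s. returns below $a$ infinitely often as $t\to T$.

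Second, to pass from $a$ to $b$ the process would need an increment of $b-a$ on an excursion interval where the drift is negative. That requires the martingale oscillation over $[s,T)$ to exceed $b-a$, and since the martingale converges, this oscillation tends to $0$ a.s. as $s\to T$. Hence only finitely many upcrossings of $[a,b]$ occur, and $\limsup_{t\to T}\tilde d_t\le b$.

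\textbf{Main obstacle.} I expect the hardest step to be the drift bound with the mismatched normal $\tilde n_t$: making sure every $\varepsilon$-error enters linearly with constants matching $4\gamma+6$, including the $\mathcal{I}^{\approx}$-type terms, and handling the case where $z_t$ lies in $K_{\eta_0}$'s neighborhood but not in $\tilde K_{\eta_0}$. Applying It\^{o}'s formula only on excursions outside $\tilde K_{\eta_0}$, which is fine since $a>0$, removes the regularity issue.
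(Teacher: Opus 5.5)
Your proposal is correct, and its skeleton matches the paper's. You apply It\^o's formula to $\tilde d_t=\dist(z_t,\tilde K_{\eta_0})$ via Lemma~\ref{lem:dist_laplacian}, and you get a drift bound by rerunning the proof of Lemma~\ref{le:dist_deriv_general} with $O(\gamma\varepsilon)$ losses. You then use the same two levels $a=(2\gamma+3)\varepsilon$ and $2a=(4\gamma+6)\varepsilon$: first a return below $a$, then finitely many upcrossings from $a$ to $2a$.

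Your $\varepsilon$-bookkeeping does close. Write $\gamma-(\gamma-1)\zeta_{\eta_0}=1+(\gamma-1)\sum_{\eta\neq\eta_0}\zeta_\eta$ and use $\dist(z,K_\eta)-\tilde d\le\dist(z,K_\eta)-\dist(z,K_{\eta_0})+\varepsilon$. The total loss is then $(2\gamma-1)\varepsilon$. So for $\tilde d_t\ge(2\gamma+3)\varepsilon$ and $t$ close to $T$, the drift is at most a negative multiple of $\varepsilon/\sigma_{T-t}^2$.

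Your one-sided estimate $\langle\tilde n_t,y-z_t\rangle\le-\tilde d_t+\varepsilon$ for $y\in K_{\eta_0}$ is also slightly cleaner than the paper's route. You apply it directly to $y=m_{\eta_0,T-t}(z_t)\in K_{\eta_0}$. The paper instead passes through $\Proj_{K_{\eta_0}}$, which needs Lemma~\ref{le:cm} for $\eta_0$ as well. It also asserts $\|\Proj_{K_{\eta_0}}(z)-\Proj_{\tilde K_{\eta_0}}(z)\|\le\varepsilon$, which can fail for $z$ far from the sets. Only the inner product of that difference with $\nabla\dist(z_t,\tilde K_{\eta_0})$ is actually used, and that is exactly your one-sided bound.

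The genuine difference is the probabilistic step. The paper proves the return below $a$ by comparing $\tilde d_t$ with $\dd X_t=-\varepsilon\sigma_{T-t}^{-2}\,\dd t+e^{2(T-t)}\,\dd W_t$ via the Ikeda--Watanabe comparison theorem. It proves finiteness of upcrossings by splitting $[\mathsf{Hit}(a),T)$ into dyadic intervals $I_m$, bounding the probability of an upcrossing inside $I_m$ with Bernstein's maximal inequality, and applying Borel--Cantelli.

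You replace both with one pathwise fact. The martingale part has quadratic variation $\int_\tau^T 2e^{2(T-s)}\,\dd s<\infty$, so it converges almost surely and its oscillation on $[s,T)$ tends to $0$. Combined with $\int^T\sigma_{T-t}^{-2}\,\dd t=\infty$ and the sign of the drift on excursions above $a$, this gives both steps at once. You need no comparison theorem and no restarting of comparison processes at random times.

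The only care needed is to make the martingale globally defined. For example, take $g=\phi(\tilde d)$ with $\phi$ smooth, $\phi(r)=r$ for $r\ge a/2$, and $\phi$ constant near $0$. Then use $N_t=\int_0^t\sqrt2 e^{T-s}\langle\nabla g(z_s),\dd B_s\rangle$, which has bounded integrand. The It\^o identity $g(z_t)-g(z_s)=\int_s^t(\cdots)\,\dd u+N_t-N_s$ then holds simultaneously for all $s<t$. This covers excursion intervals whose left endpoint, the last visit to $a$, is not a stopping time.
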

\begin{proof}[Proof of Theorem \ref{thm:guided_sde_qualitative_smooth}]
	We prove Theorem \ref{thm:guided_sde_qualitative_smooth} in Appendix \ref{proof:thm:guided_sde_qualitative_smooth}. 
\end{proof}

Fix any $0 < \eps < 1$.
By the main theorem in \cite{firey1974approximating}, there exists a smooth, convex, compact domain $\tilde{K}_{\eta_0}$ such that $\dH(\tilde{K}_{\eta_0}, K_{\eta_0}) < \eps$.
Applying \cref{thm:guided_sde_qualitative_smooth}, we then have that
 \begin{align*}
    \limsup_{t\to T} \dist({z}_t, K_{\eta_0})
    \le \limsup_{t \to T} \dist({z}_t, \tilde{K}_{\eta_0}) + \dH(K_{\eta_0}, \tilde{K}_{\eta_0})\le (4 \gamma + 7) \eps.
 \end{align*}
 Since $\eps > 0$ is arbitrary, the conclusion follows.

\subsection{Proof of Theorem \ref{thm:discretized-ddpm}}
\label{proof:thm:discretized-ddpm}

Note that SDE \eqref{eq:discretized-SDE} can be reformulated as follows: for $t \in [t_k, t_{k + 1})$, 
\begin{align*}
    & \dd \hat x_t - \hat x_t \D t -  \sqrt{2}\D B_t = \\
    & \Big\{\,
    \frac{2 \gamma \lambda_{T - t_k}}{\sigma_{T - t_k}^2} \Big(  \bar{m}_{\eta_0, T  - t_k} (\hat x_{t_k}) 
    -  \frac{\hat x_{t_k}}{\lambda_{T - t_k}} \Big)  
    - \frac{ 2(\gamma - 1) \lambda_{T - t_k}}{\sigma_{T - t_k}^2} \sum_{\eta \in \cI} \bar{\zeta}_{\eta, T - t_k}(\hat x_{t_k}) \Big( \bar{m}_{\eta, T - t_k}(\hat x_{t_k}) - \frac{\hat x_{t_k}}{\lambda_{T - t_k}}  \Big) \Big\} \dd t. 
\end{align*}
Define $\hat z_t = \lambda_{T - t_k}^{-1} \hat x_t$ for $t \in [t_k, t_{k + 1})$. 
Then for $t \in [t_k, t_{k + 1})$, we have 
\begin{align}
\label{eqn:change_of_var_guided_sde_discretized}
\begin{split}
    &\dd \hat z_t -  \hat{z}_t \D t - \sqrt{2} e^{T - t_k} \D B_t \\
    &= 
    \Big\{\frac{2 \gamma}{\sigma_{T - t_k}^2} \big( m_{\eta_0, T - t_k} ({\hat z_{t_k}}) - {\hat z_{t_k}} \big) - \frac{ 2(\gamma - 1)}{\sigma_{T - t_k}^2} \sum_{\eta \in \cI} {\zeta}_{\eta, T - t_k} (\hat z_{t_k}) 
  \big( m_{\eta, T - t_k}(\hat z_{t_k}) - \hat z_{t_k} \big)   \Big\}\D t. 
\end{split}
\end{align}
The above equation further implies that for $t \in [t_k, t_{k + 1})$, 
\begin{align}
\label{eq:ddpm-zt-ztk}
\begin{split}
	& \hat z_t - \hat z_{t_k} - \sqrt{2} e^{t - t_k}\int_{t_k}^t e^{T - s} \dd B_s  \\
	&= (e^{t - t_k} - 1) \Big( \hat z_{t_k} + \frac{2\gamma}{\sigma_{T - t_k}^2} \left( m_{\eta_0, T - t_k}(\hat z_{t_k}) - \hat z_{t_k} \right) - \frac{2(\gamma - 1)}{\sigma_{T - t_k}^2} \sum_{\eta \in \cI} \zeta_{\eta, T - t_k}(\hat z_{t_k}) \left( m_{\eta, T - t_k}(\hat z_{t_k}) - \hat z_{t_k} \right) \Big).
\end{split}
\end{align}
For notational simplicity, we define 
\begin{align}
\label{eq:def-vk}
	v_k = \hat z_{t_k} + \frac{2\gamma}{\sigma_{T - t_k}^2} \left( m_{\eta_0, T - t_k}(\hat z_{t_k}) - \hat z_{t_k} \right) - \frac{2(\gamma - 1)}{\sigma_{T - t_k}^2} \sum_{\eta \in \cI} \zeta_{\eta, T - t_k}(\hat z_{t_k}) \left( m_{\eta, T - t_k}(\hat z_{t_k}) - \hat z_{t_k} \right). 
\end{align}
Substituting this notation into \cref{eq:ddpm-zt-ztk,eqn:change_of_var_guided_sde_discretized}, we see that 
\begin{align}
\label{eq:52-53}
\begin{split}
	& \hat z_t - \hat z_{t_k} = \sqrt{2} e^{t - t_k} \int_{t_k}^t e^{T - s} \dd B_s + (e^{t - t_k} - 1) v_k, \\
	& \dd \hat z_t = \sqrt{2} e^{T - t_k} \dd B_t + e^{t - t_k} v_k \dd t  +  \sqrt{2} e^{t - t_k} \int_{t_k}^t e^{T - s} \dd B_s   \dd t. 
\end{split}
\end{align}
Applying It\^o's formula to the differential of $\norm{\hat z_t - \Proj_{K_{\eta_0}}(\hat z_{t_k})}^2$ and using \cref{eq:52-53}, we obtain
\begin{align}
\label{eq:53}
\begin{split}
	& \dd \|\hat z_t - \Proj_{K_{\eta_0}}(\hat z_{t_k})\|^2 = 2\, \Big\langle \hat z_t - \Proj_{K_{\eta_0}}(\hat z_{t_k}), \, \dd \hat z_t \Big \rangle + 2d e^{2T - 2t_k } \dd t  \\
	& = 2\, \Big\langle \hat z_t - \hat z_{t_k}, \, \dd \hat z_t \Big \rangle + 2\, \Big\langle \hat z_{t_k} - \Proj_{K_{\eta_0}}(\hat z_{t_k}), \, \dd \hat z_t \Big \rangle + 2d e^{2T - 2t_k } \dd t \\
	& = 2\, \left\langle \sqrt{2} e^{t - t_k} \int_{t_k}^t e^{T - s} \dd B_s + (e^{t - t_k} - 1) v_k, \,  \sqrt{2} e^{T - t_k} \dd B_t + e^{t - t_k} v_k \dd t  +  \sqrt{2} e^{t - t_k} \int_{t_k}^t e^{T - s} \dd B_s   \dd t \right \rangle   \\
	& \,\,\,\,\,\,\, + 2d e^{2T - 2t_k } \dd t + 
    2\, \left\langle \hat z_{t_k} - \Proj_{K_{\eta_0}}(\hat z_{t_k}), \, \sqrt{2} e^{T - t_k} \dd B_t + e^{t - t_k} v_k \dd t  +  \sqrt{2} e^{t - t_k} \int_{t_k}^t e^{T - s} \dd B_s   \dd t \right \rangle. 
\end{split} 
\end{align}
We next analyze the above terms in different phases depending on the value of $\|\hat z_t - \Proj_{K_{\eta_0}}(\hat z_{t_k})\|$. 
\begin{lem}
\label{lemma:discretized-ddpm-1}
Define $d_0 = \inf_{y \in K_{\eta},\, y' \in K_{\eta'}, \, \eta \neq \eta'} \|y - y'\|$.
Under the assumptions of Theorem \ref{thm:discretized-ddpm}, 
with a sufficiently small $\kappa_0 > 0$ depending only on $(p_0, \gamma, T)$, there exists $0 < \tau_0 < T$ that also depends only on $(p_0, \gamma, T)$, 
such that for all $\hat z_{t_k}$ that satisfies $\dist(\hat z_{t_k}, K_{\eta_0}) \geq  d_0 / 3$ and $t_k \in [\tau_0, T)$, for all $t \in [t_k, t_{k + 1})$ we have
\begin{align*}
	\frac{\dd}{\dd t} \E \big[ \|\hat z_t - \Proj_{K_{\eta_0}}(\hat z_{t_k})\|^2 \mid \hat z_{t_k} \big] \leq - \frac{1}{5 \sigma_{T - t_k}^2} \dist\big( \hat z_{t_k}, K_{\eta_0} \big)^2. 
\end{align*}	
\end{lem}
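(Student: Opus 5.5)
The plan is to follow the proof of Lemma \ref{lemma:discretized_ddim} almost verbatim. We take conditional expectations given $\hat z_{t_k}$ in the It\^o expansion \eqref{eq:53}. Conditioned on $\hat z_{t_k}$, the vector $v_k$ in \eqref{eq:def-vk} is deterministic, and the stochastic integral $M_t := \sqrt{2} e^{t - t_k}\int_{t_k}^t e^{T-s}\,\dd B_s$ is a centered Gaussian vector. Its covariance is $2 e^{2(t-t_k)}\int_{t_k}^t e^{2(T-s)}\dd s\,\id_d$, so $\E\|M_t\|^2 \le 2d e^{2T}(t-t_k)e^{2(t-t_k)}$. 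All $\dd B_t$ terms are martingale increments and vanish in expectation. Cross terms $\langle (e^{t-t_k}-1)v_k, M_t\rangle$ and $\langle \hat z_{t_k}-\Proj_{K_{\eta_0}}(\hat z_{t_k}), M_t\rangle$ also have zero mean. We are then left with
\begin{align*}
\frac{\dd}{\dd t}\E\big[\|\hat z_t - \Proj_{K_{\eta_0}}(\hat z_{t_k})\|^2 \mid \hat z_{t_k}\big]
= 2e^{t-t_k}(e^{t-t_k}-1)\|v_k\|^2 + 2e^{t-t_k}\E\|M_t\|^2 + 2de^{2T-2t_k} + 2e^{t-t_k}\mathsf{G}_2,
\end{align*}
where $\mathsf{G}_2 = \langle \hat z_{t_k}-\Proj_{K_{\eta_0}}(\hat z_{t_k}), v_k\rangle$. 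Up to the factor $2$ in the drift, $\mathsf{G}_2$ is exactly the quantity $\mathsf F_2$ of \eqref{eq:F1-F2}, and $(e^{t-t_k}-1)\|v_k\|^2$ is the analogue of $\mathsf F_1$.

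Next we bound the two deterministic pieces as before. For $\mathsf G_2$, we rerun the argument leading to \eqref{eq:F2-three-lines}: split $\cI$ into the three sets \eqref{eq:three-sets}, control $\sum_{\eta\in\cI^>_t}\zeta_\eta$ via \eqref{sum:zeta-upper}, and control $\Delta_\eta$ via Lemma \ref{le:cm}. With the doubled drift, for $\dist(\hat z_{t_k},K_{\eta_0})\ge d_0/3$ and $t_k\ge\tau_1$ this gives
\[
\mathsf G_2 \le -\tfrac{1}{2\sigma^2_{T-t_k}}\dist^2 + \dist(\dist+R_0).
\]
For $\|v_k\|^2$ we use \eqref{eq:F1-upper}, with constants doubled. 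Assumption \ref{assumption:discretization} gives $(e^{t-t_k}-1)\sigma^{-2}_{T-t_k}\lesssim\kappa$, so this term is at most $C\kappa\,\sigma^{-2}_{T-t_k}(\dist^2 + \gamma^2D_0^2) + C\kappa R_0^2$. Since $\dist\ge d_0/3$, the constant $\gamma^2 D_0^2$ can be absorbed into $\dist^2$ at the cost of a factor depending on $(p_0,\gamma)$. Taking $\kappa_0$ small then makes this piece at most a small multiple of $\sigma^{-2}_{T-t_k}\dist^2$.

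The main obstacle is the noise contribution $2de^{2T-2t_k} + 2e^{t-t_k}\E\|M_t\|^2$. The It\^o correction is $O(de^{2T})$ and does not vanish as $t_k\to T$; it is bounded only by a constant depending on $T$. This is why $\kappa_0$ and $\tau_0$ depend on $T$. We handle it using $\dist\ge d_0/3$:
\[
de^{2T} \le \tfrac{9de^{2T}}{d_0^2}\,\sigma^{2}_{T-t_k}\cdot\sigma^{-2}_{T-t_k}\dist^2 .
\]
Because $\sigma_{T-t_k}\to0$ as $t_k\to T$, choosing $\tau_0$ close to $T$ (depending on $d,T,d_0$) makes this at most $\tfrac{1}{100}\sigma^{-2}_{T-t_k}\dist^2$. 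The same absorption handles the $\dist(\dist+R_0)$ remainder from $\mathsf G_2$, and $e^{t-t_k}\le e^{\kappa}$ is harmless. Collecting terms, the leading coefficient is $-1/\sigma^2_{T-t_k}$ up to factors $e^{\kappa}$ and small absorbed errors, which is comfortably below $-\tfrac15\sigma^{-2}_{T-t_k}\dist^2$. This yields the claim.
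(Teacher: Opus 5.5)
Your proposal is correct and follows essentially the same route as the paper. Both take conditional expectations in \eqref{eq:53}, bound $\langle \hat z_{t_k}-\Proj_{K_{\eta_0}}(\hat z_{t_k}), v_k\rangle$ with the estimate behind \eqref{eqn:dist_deriv_final_bound}, shrink $(e^{t-t_k}-1)\|v_k\|^2$ to a small multiple of $\sigma_{T-t_k}^{-2}\dist(\hat z_{t_k},K_{\eta_0})^2$ by taking $\kappa$ small, and absorb the $T$-dependent noise contribution (which equals $2de^{2T+2t-4t_k}$ exactly) using $\dist(\hat z_{t_k},K_{\eta_0})\ge d_0/3$ and $\sigma_{T-t_k}\to 0$.

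One harmless slip: the drift of $\hat z_t$ contains $M_t$, not $e^{t-t_k}M_t$, so the exact coefficient of $\E\|M_t\|^2$ is $2$ rather than $2e^{t-t_k}$. Since that term is nonnegative, your expression is still a valid upper bound and nothing downstream changes.
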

\begin{proof}[Proof of Lemma \ref{lemma:discretized-ddpm-1}]

We prove Lemma \ref{lemma:discretized-ddpm-1} in Appendix \ref{proof:lemma:discretized-ddpm-1}.  
	
\end{proof}

\cref{lemma:discretized-ddpm-1} shows that $\E [ \|\hat z_t - \Proj_{K_{\eta_0}}(\hat z_{t_k})\|^2 \mid \hat z_{t_k} ]$ decreases on the interval $[t_k, t_{k + 1})$, provided that $\kappa \in (0, \kappa_0]$, $t_k \in [\tau_0, T)$ and $\dist(\hat z_{t_k}, K_{\eta_0}) \geq  d_0 / 3$.
Since $\norm{\hat z_{t_{k + 1}} - \Proj_{K_{\eta_0}}(\hat z_{t_k})} \geq \norm{\hat z_{t_{k + 1}} - \Proj_{K_{\eta_0}}(\hat z_{t_{k + 1}})}$, 
we conclude that as long as $\dist(\hat z_{t_k}, K_{\eta_0}) \geq  d_0 / 3$, 
\begin{align}\label{eqn:expected_dist_upper_bd} 
    \int_{t_{k}}^{t_{k + 1}} \frac{\dd}{\dd t} \E \big[ \norm{\hat z_t - \Proj_{K_{\eta_0}}(\hat z_{t_k})}^2 \mid \hat z_{t_k} \big] \, \dd t
    \le 
     -\frac{t_{k + 1} - t_k}{5 \sigma_{T - t_k}^2} \cdot \dist\big( \hat z_{t_k}, K_{\eta_0} \big)^2 \notag \\
    \implies
	\E \big[ \dist(\hat z_{t_{k + 1}}, K_{\eta_0})^2 \mid \hat z_{t_k} \big] \leq 
    \Big( 1 - \frac{t_{k + 1} - t_k}{5 \sigma_{T - t_k}^2} \Big) \cdot \dist\big( \hat z_{t_k}, K_{\eta_0} \big)^2. 
\end{align}
Define $\tilde \tau = \inf\{k: \dist(\hat z_{t_k}, K_{\eta_0}) < d_0 / 3\}$.
Consider a sequence of random variables $\{Y_k\}_{k \ge 0}$ defined by 
\begin{align*}
	Y_{k + 1} = \left\{  \begin{array}{ll}
		\dist(\hat z_{t_{k + 1}}, K_{\eta_0})^2, & k <  \tilde \tau, \\
		\Big( 1 - \frac{t_{k + 1} - t_{k}}{5 \sigma_{T - t_{k}}^2} \Big) \cdot  Y_{k} & k \geq \tilde \tau. 
	\end{array} \right.
\end{align*}
By the definition of $\left\{ Y_k \right\}_{k \ge 1}$ and \cref{eqn:expected_dist_upper_bd},
for all $k \in \mathbb{N}$, 
\begin{align}
\label{eq:Yk-Yk+1}
	\E\big[Y_{k + 1} \mid Y_k \big] \leq \Big( 1 - \frac{t_{k + 1} - t_k}{5 \sigma_{T - t_k}^2} \Big) \, Y_k. 
\end{align} 
%
Now, \cref{assumption:discretization} implies that $\Delta_k \le \frac{\kappa}{1 + \kappa} (T - t_k)$ and $1/\sigma^2_{T - t_k} \ge 1/[2(T - t_k)]$ for all $k$ large enough. 
Consequently, the series $\sum_{k\ge 1} \Delta_k / \sigma^2_{T - t_k} = \infty$.
Hence, by a variant of the Robbins-Siegmund Theorem \citep[Lemma 10, Chapter 2]{polyak2021introduction}, we see that $Y_k$ converges almost surely to $0$.
As a consequence, almost surely there exists $k_1 \in \mathbb{N}_+$, such that $\dist(\hat z_{t_{k_1}}, K_{\eta_0}) < d_0 / 3$. 

\begin{lem}
\label{lemma:discretized-ddpm-2}
	Under the assumptions of Theorem \ref{thm:discretized-ddpm},
with a sufficiently small $\kappa_0 > 0$ that depends only on $(p_0, \gamma, T)$,
for any $\varepsilon \in (0, \min\{1, d_0 / 3\})$, there exists $0 < \tau_{\varepsilon} < T$ that depends solely on $(p_0, \varepsilon,\gamma, T)$, such that for all $\hat z_{t_k}$ that satisfies $d_0 / 3 > \dist(\hat z_{t_k}, K_{\eta_0}) \geq  \varepsilon$ and $t_k \in [\tau_\varepsilon, T)$, for all $t \in [t_k, t_{k + 1})$ we have
\begin{align*}
	\frac{\dd}{\dd t} \E \big[ \norm{\hat z_t - \Proj_{K_{\eta_0}}(\hat z_{t_k})}^2 \mid \hat z_{t_k} \big] \leq - \frac{1}{5 \sigma_{T - t_k}^2} \dist\big( \hat z_{t_k}, K_{\eta_0} \big)^2. 
\end{align*}
\end{lem}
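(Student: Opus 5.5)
We take conditional expectations in the Itô expansion \eqref{eq:53}, given $\hat z_{t_k}$, and reduce the claim to a deterministic bound of the same form as in Case I of the proof of Lemma \ref{lemma:discretized_ddim2}.

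\textbf{Step 1: the expected drift.} The $\dd B_t$ terms are martingale increments, so they have zero conditional mean. The cross term $\langle \sqrt{2}e^{t-t_k}\int_{t_k}^t e^{T-s}\dd B_s,\, v_k\rangle$ also has zero mean, and the quadratic Itô integral has mean $2e^{2(t-t_k)}\int_{t_k}^t e^{2(T-s)}\dd s \cdot d$. This gives
\begin{align*}
\frac{\dd}{\dd t}\E\big[\|\hat z_t-\Proj_{K_{\eta_0}}(\hat z_{t_k})\|^2\mid \hat z_{t_k}\big]
&= 2e^{t-t_k}(e^{t-t_k}-1)\|v_k\|^2 + 2e^{t-t_k}\big\langle \hat z_{t_k}-\Proj_{K_{\eta_0}}(\hat z_{t_k}),\, v_k\big\rangle \\
&\quad + 2d e^{2T-2t_k} + 4d e^{2(t-t_k)}\int_{t_k}^t e^{2(T-s)}\dd s .
\end{align*}
The last two terms are bounded by $C d e^{2T}$. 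This constant is where the dependence on $T$ enters.

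\textbf{Step 2: the inner-product term.} Apply the argument for $\mathsf F_2$ with drift $2\times$ that of the ODE, exactly as in \cref{eq:F2-three-lines}. Because $\varepsilon\le \dist(\hat z_{t_k},K_{\eta_0})<d_0/3$, every $\eta\neq\eta_0$ lies at distance at least $2d_0/3$ from $\hat z_{t_k}$ for $t_k$ near $T$. Then \cref{eq:sum-zeta-upper-2} gives $\sum_{\eta\ne\eta_0}\zeta_\eta\lesssim e^{-d_0^2/(24\tilde\sigma^2)}$, and Lemma \ref{le:cm} bounds $\|\Delta_{\eta_0}\|\lesssim \tilde\sigma\log\tilde\sigma^{-1}$. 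Together these yield
\[
\langle \hat z_{t_k}-\Proj,\,v_k\rangle \le -\frac{2}{\sigma^2_{T-t_k}}\dist^2+\frac{o(1)}{\sigma^2_{T-t_k}}\dist+\dist(\dist+R_0).
\]
Since $\dist\ge\varepsilon$, the $o(1)$ term can be absorbed into a fraction of $-\dist^2/\sigma^2$ once $t_k\ge\tau_\varepsilon$.

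\textbf{Step 3: the discretization term.} As in \cref{eq:F1(t)}, we have $\|v_k\|\lesssim (\dist+R_0)+\gamma(\dist+o(1))/\sigma^2_{T-t_k}$. Assumption \ref{assumption:discretization} gives $(e^{t-t_k}-1)/\sigma^2_{T-t_k}\lesssim\kappa$, so
\[
(e^{t-t_k}-1)\|v_k\|^2\lesssim \kappa\gamma^2\dist^2/\sigma^2 + \text{lower order}.
\]
Choosing $\kappa_0$ small relative to $\gamma$ makes this at most $\dist^2/(20\sigma^2)$.

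\textbf{Step 4: the noise term, which is the main obstacle.} The noise contributes $O(de^{2T})$, which does not blow up, while the contracting term is at least $\varepsilon^2/\sigma^2_{T-t_k}\to\infty$. We therefore enlarge $\tau_\varepsilon$ (depending on $\varepsilon,d,T$) so that the noise, the $R_0$ terms, and the lower-order terms are all at most $\dist^2/(20\sigma^2)$. Collecting terms then gives the rate $-\frac{1}{5\sigma^2_{T-t_k}}\dist^2$. The delicate part is the bookkeeping: the $o(1)$ errors must be uniform over $\dist$ in $[\varepsilon,d_0/3)$, and $\kappa_0$ must be chosen independent of $\varepsilon$ so that it depends only on $(p_0,\gamma,T)$. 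We achieve this by fixing $\kappa_0$ from Step 3, which uses only $\gamma$, before choosing $\tau_\varepsilon$.
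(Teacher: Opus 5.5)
Your proposal is correct and follows essentially the same route as the paper's proof: take conditional expectations in the It\^o expansion \eqref{eq:53}, bound $\langle \hat z_{t_k}-\Proj_{K_{\eta_0}}(\hat z_{t_k}), v_k\rangle$ using the exponentially small weights $\sum_{\eta\neq\eta_0}\zeta_{\eta,T-t_k}$ from \cref{eq:sum-zeta-upper-2} together with Lemma~\ref{le:cm}, control $(e^{t-t_k}-1)\|v_k\|^2$ through Assumption~\ref{assumption:discretization} with $\kappa_0$ depending only on $\gamma$, and absorb the bounded noise contribution by taking $\tau_\varepsilon$ close to $T$. Your bookkeeping is slightly cleaner than the paper's: you compute the It\^o correction exactly, use $\|\hat z_{t_k}\|\le \dist(\hat z_{t_k},K_{\eta_0})+R_0$ rather than $\|\hat z_{t_k}\|\le R_0$, and explicitly fix $\kappa_0$ before choosing $\tau_\varepsilon$.
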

\begin{proof}[Proof of Lemma \ref{lemma:discretized-ddpm-2}]
	We prove Lemma \ref{lemma:discretized-ddpm-2} in Appendix \ref{proof:lemma:discretized-ddpm-2}. 
\end{proof}


We complete the proof with the next lemma.  
\begin{lem}
\label{lemma:C3}
	Under the assumptions of Theorem \ref{thm:discretized-ddpm},
with a sufficiently small $\kappa_0 > 0$ that depends only on $(p_0, \gamma, T)$, almost surely 
\begin{align*}
	\limsup_{k \to \infty} \,\dist \big( \hat z_{t_k}, K_{\eta_0} \big) \leq \eps. 
\end{align*}
\end{lem}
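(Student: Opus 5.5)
Fix $\eps \in (0, \min\{1, d_0/3\})$. The plan is to combine the almost-sure entrance into the region $\{\dist(\cdot,K_{\eta_0}) < d_0/3\}$ established just above with a supermartingale-type argument built on Lemmas \ref{lemma:discretized-ddpm-1} and \ref{lemma:discretized-ddpm-2}. Together with a control on how far the process can escape once close, this gives the $\limsup$ bound. Concretely, set $\tau^\ast = \max\{\tau_0, \tau_\eps\}$ and choose $k_\ast$ with $t_{k_\ast} \ge \tau^\ast$. For $k \ge k_\ast$, Lemmas \ref{lemma:discretized-ddpm-1} and \ref{lemma:discretized-ddpm-2} cover every $\hat z_{t_k}$ with $\dist(\hat z_{t_k},K_{\eta_0}) \ge \eps$. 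Integrating over $[t_k,t_{k+1})$ and using $\|\hat z_{t_{k+1}} - \Proj_{K_{\eta_0}}(\hat z_{t_k})\| \ge \dist(\hat z_{t_{k+1}},K_{\eta_0})$ as in \eqref{eqn:expected_dist_upper_bd}, we obtain
\begin{align*}
\E\big[\dist(\hat z_{t_{k+1}},K_{\eta_0})^2 \mid \hat z_{t_k}\big] \le \Big(1 - \frac{\Delta_k}{5\sigma_{T-t_k}^2}\Big)\dist(\hat z_{t_k},K_{\eta_0})^2 \quad \text{whenever } \dist(\hat z_{t_k},K_{\eta_0}) \ge \eps .
\end{align*}

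\textbf{Step 1 (recurrence to the $\eps$-neighborhood).} Define the stopped sequence $Y_k$ exactly as in the argument after \eqref{eqn:expected_dist_upper_bd}, but with $d_0/3$ replaced by $\eps$ and started at any index $k \ge k_\ast$. Since $\sum_k \Delta_k/\sigma^2_{T-t_k} = \infty$ under Assumption \ref{assumption:discretization}, the Robbins--Siegmund variant forces $Y_k \to 0$ almost surely. Hence, almost surely, from every time $k \ge k_\ast$ the chain returns to $\{\dist < \eps\}$. In other words, $\dist(\hat z_{t_k},K_{\eta_0}) < \eps$ infinitely often.

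\textbf{Step 2 (excursions out of the neighborhood).} This is the main obstacle, because the noise can kick the chain outward and Step 1 alone only yields $\liminf \le \eps$. I would show that once $\dist(\hat z_{t_k},K_{\eta_0}) < \eps$ with $k$ large, the next iterate satisfies $\dist(\hat z_{t_{k+1}},K_{\eta_0}) \le 2\eps$ except on an event of summable probability.

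By \eqref{eq:52-53}, the one-step displacement equals $(e^{\Delta_k}-1)v_k$ plus a Gaussian with variance of order $(e^{2\Delta_k}-1)e^{2(T-t_k)}$, which tends to $0$ as $t_k \to T$. The drift part is handled as in Lemma \ref{lemma:discretized_ddim3}: the $\zeta_{\eta}$ for $\eta \ne \eta_0$ are exponentially small near $K_{\eta_0}$, and Lemma \ref{le:cm} bounds $F_{\eta_0}$ by $\eps$ plus lower-order terms. One subtlety is that the factor $\Delta_k/\sigma^2 \lesssim \kappa$ is not small relative to $\eps$ itself. I would therefore use the stronger fact that, near $K_{\eta_0}$, the step moves $\hat z$ toward $m_{\eta_0}$, which lies in $K_{\eta_0}$ by convexity. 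For $\kappa_0$ small, $\hat z + c F_{\eta_0}$ with $c \le 2\gamma\kappa' < 1$ is a convex combination of $\hat z$ and a point of $K_{\eta_0}$, so its distance to $K_{\eta_0}$ does not increase. This leaves only the error terms, which are $o(1)$.

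For the Gaussian part, a tail bound with $\eps^2 / \big(\Delta_k e^{2(T-t_k)}\big)$ in the exponent gives probability at most $\exp(-c\eps^2/\Delta_k)$. This is summable over $k$ because $\Delta_k$ decays geometrically under Assumption \ref{assumption:discretization} near $T$. If instead it is only summable along a tail, I would adjust $k_\ast$. Borel--Cantelli then shows that, almost surely, only finitely many such overshoots occur.

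\textbf{Step 3 (conclusion).} After the last overshoot, every exit from $\{\dist < \eps\}$ lands in $\{\eps \le \dist \le 2\eps\}$. Within that band, the conditional-contraction inequality applies, and the Robbins--Siegmund argument of Step 1 brings the chain back. A second application of Robbins--Siegmund to the excursion maxima, or simply replacing $\eps$ by $\eps/2$ throughout, yields $\limsup_k \dist(\hat z_{t_k},K_{\eta_0}) \le 2\eps$ almost surely. Renaming $2\eps$ as $\eps$ gives the lemma. If the excursion control in the band proves delicate, the fallback is to apply Robbins--Siegmund directly to $(\dist^2 - \eps^2)_+$, absorbing the additive noise terms, which are summable, into the $\beta_k$ term of the lemma.
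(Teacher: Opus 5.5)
\textbf{There is a genuine gap in Step 3.} Steps 1 and 2 are essentially sound. Step 1 is the paper's own contradiction argument. In Step 2, the convex-combination observation and the Borel--Cantelli bound on the one-step noise both work. One correction: summability of $\exp(-c\eps^2/\Delta_k)$ does not follow from geometric decay of $\Delta_k$. Assumption~\ref{assumption:discretization} only bounds the steps from above, so it gives no such decay. Summability holds anyway, from $e^{-c/x}\le x/(ec)$ and $\sum_k\Delta_k=T$.

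The problem is Step 3. Knowing that $\dist(\hat z_{t_k},K_{\eta_0})<\eps$ infinitely often, and that every exit lands in $[\eps,2\eps]$, says nothing about how high an excursion climbs before it returns. Lemmas~\ref{lemma:discretized-ddpm-1}--\ref{lemma:discretized-ddpm-2} give contraction only in conditional expectation, so the path can rise above $2\eps$ during an excursion. A maximal inequality for the stopped supermartingale on a single excursion only bounds the probability of reaching level $\lambda$ by $4\eps^2/\lambda^2$. That is a constant, so it is useless across infinitely many excursions. Replacing $\eps$ by $\eps/2$ does not touch this. What is missing is an input saying that the noise accumulated over the tail of the schedule vanishes.

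Your fallback does not supply that input either. The map $x\mapsto(x-\eps^2)_+$ is convex, so the conditional bounds on $\dist^2$ do not transfer to it, and the mean-zero first-order noise term no longer cancels. Near $\dist\approx\eps$ the additive error is of order $\eps\sqrt{\Delta_k}$ whenever the drift margin, of order $\eps\Delta_k/\sigma^2_{T-t_k}$, is smaller than the noise scale $\sqrt{\Delta_k}$. Under Assumption~\ref{assumption:discretization}, $\sum_k\sqrt{\Delta_k}$ need not be finite.

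\textbf{How the paper closes this.} It works with $\dist^2$ itself and proves a single almost-supermartingale inequality valid at \emph{every} position of $\hat z_{t_k}$, not only where $\dist\ge\eps$. Take expectations in \eqref{eq:53}. The It\^o correction contributes only $2de^{2(T-t_k)+2(t-t_k)}=O(1)$, so the noise enters only through its variance. Combining this with \eqref{eq:65-67} and \eqref{eq:67.5} gives
\[
\E\big[\dist(\hat z_{t_{k+1}},K_{\eta_0})^2\mid\hat z_{t_k}\big]\le\Big(1-\frac{\Delta_k}{\sigma^2_{T-t_k}}\Big)\dist(\hat z_{t_k},K_{\eta_0})^2+\frac{C'\Delta_k}{\sigma^{3/2}_{T-t_k}},
\]
and $\sum_k\Delta_k/\sigma^{3/2}_{T-t_k}<\infty$. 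Robbins--Siegmund then gives almost sure \emph{convergence} of $\dist(\hat z_{t_k},K_{\eta_0})^2$ to a limit $L$. That is exactly the path control your Step 3 lacks. Your Step 1 (recurrence to $\{\dist<\eps\}$) then forces $L\le\eps^2$, and Step 2 becomes unnecessary.

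\textbf{A direct repair is also possible, but longer.} Write $N_k$ for the one-step Gaussian increment and $\nu_k$ for the outward unit normal at the drift image. Expand $\dist$ to second order along the noise. Then use a.s.\ convergence of the martingale $\sum_k\langle\nu_k,N_k\rangle$, which has summable variances, together with $\sum_k\|N_k\|^2<\infty$ a.s., to bound excursion heights. This re-derives what the Robbins--Siegmund step gives for free.
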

\begin{proof}[Proof of Lemma \ref{lemma:C3}]
	We prove Lemma \ref{lemma:C3} in Appendix \ref{proof:lemma:C3}. 
\end{proof}

Theorem \ref{thm:discretized-ddpm} follows immediately from Lemma \ref{lemma:C3}.

\subsection{Proof of Theorem \ref{thm:guided_sde_qualitative_smooth}}
\label{proof:thm:guided_sde_qualitative_smooth}

For convenience, we introduce the following notation:
\begin{align*}
   \dd z_t
  & = \underbrace{\Big\{\frac{2 \gamma}{\sigma_{T - t}^2} F_{\eta_0, T - t}( z_t) - \frac{ 2(\gamma - 1)}{\sigma_{T - t}^2} \sum_{\eta \in \cI} {\zeta}_{\eta, T - t} (z_t) 
  F_{\eta, T - t}(z_t)   \Big\}}_{ b(t,  z_t)} \dd t
   + \sqrt{2} e^{T - t} \dd 
   B_t. 
\end{align*}

In the next lemma, we express differences of distances in terms of integrals.

\begin{lem}
\label{lem:ito_formula_dist}
Under the assumptions of Theorem \ref{thm:guided_sde_qualitative_smooth}, let $({z}_t)_{0 \le t < T}$ be the process defined in Eq.~\eqref{eqn:guided_sde}.
    Then for any ${z}_t \not \in \tilde{K}_{\eta_0}$, we have 
    \begin{align*}
    \dd \big(\dist(z_t, \tilde{K}_{\eta_0})\big) = f(t, {z}_t) \D t + e^{2(T - t)}  \D {W}_t,
    \end{align*}
    where 
    \begin{align*}
        f(t, {z}_t) 
        &= \left\langle \nabla \dist({z}_t, \tilde{K}_{\eta_0}),\, b(t, {z}_t) \right\rangle + e^{2(T-t)} \Delta \dist({z}_t, \tilde{K}_{\eta_0}),
    \end{align*}
    and $({W}_t)_{t \geq 0}$ denotes a standard one-dimensional Brownian motion. 
\end{lem}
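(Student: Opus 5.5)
The plan is to apply It\^o's formula to $g(z) = \dist(z, \tilde K_{\eta_0})$ along the process $z_t$ of \cref{eqn:guided_sde}, written as $\dd z_t = b(t,z_t)\,\dd t + \sqrt 2 e^{T-t}\,\dd B_t$. The only real issue is that $g$ is smooth only off $\tilde K_{\eta_0}$, so the argument has to be localized.

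\emph{Step 1 (regularity).} Since $\tilde K_{\eta_0}$ is compact and convex with smooth boundary, Lemma~\ref{lem:dist_laplacian} shows that $g \in C^k(\R^d\setminus \tilde K_{\eta_0})$ for every $k\ge 2$. In the proof of that lemma we also get $\nabla g(z) = -\nu_{\partial \tilde K_{\eta_0}}(\Proj_{\tilde K_{\eta_0}}(z))$, so $\|\nabla g\| = 1$ off the set. Moreover $\Delta g \le \mathcal C_{\tilde K_{\eta_0}}$ is bounded.

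\emph{Step 2 (localization).} Fix a starting time $s$ with $z_s \notin \tilde K_{\eta_0}$. For $n \ge 1$, let $\rho_n$ be the first time after $s$ at which either $\dist(z_t,\tilde K_{\eta_0}) \le 1/n$ or $\|z_t\| \ge n$, capped at $T-1/n$. On $\{z: 1/n \le g(z),\ \|z\|\le n\}$ one can modify $g$ outside a slightly smaller region to obtain a $C^2$ function on all of $\R^d$. It\^o's formula then applies on $[s,\rho_n]$, and the drift $b$ is bounded there for $t \le T-1/n$. This gives
\begin{align*}
g(z_{t\wedge\rho_n}) - g(z_s) = \int_s^{t\wedge\rho_n}\Big(\langle \nabla g, b\rangle + \tfrac12\cdot 2e^{2(T-u)}\Delta g\Big)(u,z_u)\,\dd u + \int_s^{t\wedge\rho_n}\sqrt2 e^{T-u}\langle\nabla g(z_u),\dd B_u\rangle .
\end{align*}
The It\^o correction is $\tfrac12\operatorname{tr}(2e^{2(T-u)}\nabla^2 g) = e^{2(T-u)}\Delta g$, which matches $f$. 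Letting $n\to\infty$ yields the formula up to the first hitting time of $\tilde K_{\eta_0}$. This is the meaning of ``for $z_t\notin\tilde K_{\eta_0}$''.

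\emph{Step 3 (the noise term).} Set $M_t = \int \langle \nabla g(z_u), \dd B_u\rangle$, extended beyond the exit times by $\int \mathbf 1\{z_u\in \tilde K_{\eta_0}\}\langle e_1,\dd B_u\rangle$ (or by an independent Brownian motion). Because $\|\nabla g\|=1$, $M$ is a continuous local martingale with $\langle M\rangle_t = t$. By L\'evy's characterization, $M$ is a standard one-dimensional Brownian motion $W$. Hence the martingale part equals $\sqrt2 e^{T-t}\,\dd W_t$.

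This is the step I expect to be the main obstacle. The statement as printed writes the coefficient as $e^{2(T-t)}$, but the quadratic-variation computation gives $\sqrt 2 e^{T-t}$. I would record the correct coefficient $\sqrt 2e^{T-t}$, or equivalently a time-changed Brownian motion with variance rate $2e^{2(T-t)}$. Only the sign and size of the drift $f$ and the fact that the noise is a scalar Brownian term matter for the subsequent comparison-theorem argument, so this discrepancy should not affect the later use of the lemma.
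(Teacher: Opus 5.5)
Your argument is correct and follows the same route as the paper. The paper applies It\^o's formula to the $C^2$ function $\dist(\cdot,\tilde K_{\eta_0})$ off $\tilde K_{\eta_0}$ (via Lemma~\ref{lem:dist_laplacian}) and then applies L\'evy's characterization to $\int\langle\nabla\dist(z_u,\tilde K_{\eta_0}),\dd B_u\rangle$; your localization and your extension of $W$ past the exit times fill in details the paper leaves implicit.

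You are also right about the noise coefficient. The paper's own computation produces $\sqrt{2}e^{T-t}\langle\nabla\dist(z_t,\tilde K_{\eta_0}),\dd B_t\rangle$, so the $e^{2(T-t)}\,\dd W_t$ in the statement is a slip. Correcting it only changes constants in the later comparison and maximal-inequality arguments (for example, the quadratic-variation bound becomes $2e\abs{I_m}$).
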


\begin{proof}[Proof of Lemma \ref{lem:ito_formula_dist}]
	We prove Lemma \ref{lem:ito_formula_dist} in Appendix \ref{proof:lem:ito_formula_dist}. 
\end{proof}

We then show that the distance to $\tilde K_{\eta_0}$ decreases to a certain level. 

\begin{lem}\label{lem:hitting_time}
  Under the assumptions of Theorem \ref{thm:guided_sde_qualitative_smooth}, there exists $t_{\varepsilon} > 0$ that depends only on $(p_0, \varepsilon, \gamma)$, such that for all $t_\varepsilon \leq \tau < T$, if $\dist(z_{\tau}, \tilde K_{\eta_0}) > (2\gamma + 3) \varepsilon$, then 
%
   \begin{equation*}
    \P \left(\exists \, \tau \le s < T \text{ such that } \dist(z_s, \tilde{K}_{\eta_0}) = (2\gamma + 3) \varepsilon \,\Big|\, z_{\tau} \right) = 1.
   \end{equation*}
\end{lem}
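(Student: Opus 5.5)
Write $D_t = \dist(z_t, \tilde K_{\eta_0})$ and $\epsilon_\gamma = (2\gamma+3)\varepsilon$. The plan is to compare $D_t$, while it stays above the level $\epsilon_\gamma$, with a one-dimensional diffusion whose negative drift is so large near $T$ that it must hit that level before time $T$.

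\textbf{Step 1: drift bound above the level.} Fix $\tau \ge t_\varepsilon$ with $D_\tau > \epsilon_\gamma$. Let $S$ be the first time $s \ge \tau$ with $D_s = \epsilon_\gamma$ (set $S = T$ if there is none). On $[\tau, S)$ we have $z_s \notin \tilde K_{\eta_0}$, so Lemma~\ref{lem:ito_formula_dist} gives
\[
D_s = D_\tau + \int_\tau^s f(u, z_u)\, \dd u + \int_\tau^s e^{2(T-u)}\, \dd W_u .
\]
To bound $f$, I repeat the proof of Lemma~\ref{le:dist_deriv_general}, replacing $K_{\eta_0}$ by $\tilde K_{\eta_0}$ in the projection direction. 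The gradient $\nabla \dist(z,\tilde K_{\eta_0})$ is a unit vector $n$ that is an outward normal of $\tilde K_{\eta_0}$. Since $\dH(\tilde K_{\eta_0}, K_{\eta_0}) \le \varepsilon$, every point $m \in K_{\eta_0}$ satisfies $\inprod{n, m - z} \le -D + \varepsilon$. Similarly, $\inprod{n, m' - z} \le \dist(z,K_\eta) + \norm{\Delta_\eta}$ for $m' = m_{\eta}$.

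Splitting $\cI$ into $\cI^{>}, \cI^{\approx}, \cI^{<}$ as in \cref{eq:three-sets} and using \cref{eqn:bound_I}, \cref{eqn:bound_II} and Lemma~\ref{le:cm}, the drift part becomes
\[
\inprod{n, b} \le \frac{2}{\sigma_{T-u}^2}\Big(-D_u + c\,\gamma\varepsilon + o(1)\Big),
\]
where $c$ is a numerical constant and $o(1) \to 0$ as $u \to T$, uniformly in $z$. The $\gamma\varepsilon$ term comes from the Hausdorff slack entering the $\gamma F_{\eta_0}$ term and the $(\gamma-1)$ cross terms. The threshold $\epsilon_\gamma$ is chosen so that, for $D_u \ge \epsilon_\gamma$ and $u$ close to $T$, this is at most $-D_u/(2\sigma_{T-u}^2) \le -\epsilon_\gamma/(2\sigma_{T-u}^2)$.

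By Lemma~\ref{lem:dist_laplacian}, the Itô correction satisfies $e^{2(T-u)}\Delta\dist \le e^{2T}\mathcal C_{\tilde K_{\eta_0}}$, which is bounded. Choosing $t_\varepsilon$ close to $T$, I obtain $f(u,z_u) \le -\epsilon_\gamma/(4\sigma_{T-u}^2)$ on $[\tau, S)$.

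\textbf{Step 2: comparison and contradiction.} Hence, on $[\tau, S)$,
\[
D_s \le D_\tau - \int_\tau^s \frac{\epsilon_\gamma}{4\sigma_{T-u}^2}\, \dd u + M_s, \qquad M_s = \int_\tau^s e^{2(T-u)}\, \dd W_u .
\]
This comparison can also be phrased via the comparison theorem of \cite{ikeda2014stochastic}. The martingale $M_s$ has quadratic variation bounded by $e^{4T}(T-\tau)$, so it converges almost surely as $s \to T$ and $\sup_{s<T}|M_s| < \infty$ a.s.

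On the other hand, $\sigma_{T-u}^2 \sim 2(T-u)$, so $\int_\tau^T \sigma_{T-u}^{-2}\, \dd u = \infty$. Suppose that on some event of positive probability $S = T$. On that event, $D_s \ge \epsilon_\gamma > 0$ for all $s<T$, while the right-hand side tends to $-\infty$, which is a contradiction. Therefore $S < T$ almost surely, and by continuity of paths $D_S = \epsilon_\gamma$, which is the claim.

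\textbf{Main obstacle.} The delicate step is Step 1, specifically making the Hausdorff error uniform. The estimates \cref{eqn:bound_I} and \cref{eq:Delta-upper-bound} are stated relative to $\dist(z, K_{\eta_0})$, not relative to $\tilde K_{\eta_0}$. I will re-derive them with $\dist(z,K_{\eta_0}) \in [D-\varepsilon, D+\varepsilon]$ and track the resulting $O(\gamma\varepsilon)$ slack carefully, so that it is absorbed by the margin built into $\epsilon_\gamma = (2\gamma+3)\varepsilon$.
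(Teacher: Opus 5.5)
Your proposal is correct and follows essentially the paper's route: you bound the It\^o drift $f$ by a negative multiple of $\sigma_{T-u}^{-2}$ above the threshold, using the three-set split, Lemma~\ref{le:cm}, the Hausdorff slack and the Laplacian bound from Lemma~\ref{lem:dist_laplacian}, and then play the divergence of $\int^T \sigma_{T-u}^{-2}\,\dd u$ against a martingale with bounded quadratic variation. The differences are minor: you handle the $\eta_0$-term directly via $m_{\eta_0}\in K_{\eta_0}$ and the supporting half-space of $\tilde K_{\eta_0}$, and you argue pathwise via a.s.\ convergence of $M_s$ rather than through the paper's bound on $\P(X_{s_i}\ge(2\gamma+3)\varepsilon)$. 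One small slip: the Hausdorff slack adds up to $\varepsilon+2(\gamma-1)\varepsilon=(2\gamma-1)\varepsilon$, so for $D_u$ just above $(2\gamma+3)\varepsilon$ and large $\gamma$ you get only $f\le -c\,\varepsilon/\sigma_{T-u}^2$ for a fixed $c>0$ (e.g.\ $c=4$), not $-D_u/(2\sigma_{T-u}^2)$ or $-\epsilon_\gamma/(4\sigma_{T-u}^2)$; Step~2 goes through verbatim with any positive constant, so nothing breaks.
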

\begin{proof}[Proof of Lemma \ref{lem:hitting_time}]
	We prove Lemma \ref{lem:hitting_time} in Appendix \ref{proof:lem:hitting_time}. 
\end{proof}

Next, we show that the distance increases to a certain level only finitely many times. 

\begin{lem}
\label{lem:upcrossings}
   Let $t_\varepsilon$ be as defined in Lemma~\ref{lem:hitting_time}, and $a = (2\gamma + 3) \varepsilon$.
   Define $U_{[t_{\varepsilon}, T)}$ as the number of upcrossings of $\dist(z_t, \tilde{K}_{\eta_0})$ from $a$ to $2a$ over the interval $[t_{\varepsilon}, T)$.
   Then almost surely $U_{[t_{\varepsilon}, T)} < \infty$. 
\end{lem}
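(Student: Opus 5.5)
We need to show that the number of upcrossings of $D_t := \dist(z_t,\tilde K_{\eta_0})$ from $a$ to $2a$ on $[t_\varepsilon,T)$ is almost surely finite. The plan is to bound the probability of each upcrossing by a quantity that is summable along the sequence of upcrossing attempts, and then apply Borel--Cantelli.

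\textbf{Stopping times.} Define them recursively: $\sigma_0 = t_\varepsilon$, $\rho_j = \inf\{t \ge \sigma_{j-1} : D_t \le a\}$ and $\sigma_j = \inf\{t \ge \rho_j : D_t \ge 2a\}$, with infimum over the empty set equal to $T$. Then $U_{[t_\varepsilon,T)} \ge j$ exactly when $\sigma_j < T$. On each excursion $[\rho_j,\sigma_j]$ we have $D_t \in [a,2a]$ whenever the path is moving upward, so $z_t \notin \tilde K_{\eta_0}$ there. Lemma~\ref{lem:ito_formula_dist} therefore applies and gives
\[
D_t - D_{\rho_j} = \int_{\rho_j}^t f(s,z_s)\,\dd s + \int_{\rho_j}^t e^{2(T-s)}\,\dd W_s .
\]
(The diffusion coefficient there should really be $\sqrt2 e^{T-s}$; only its boundedness on $[0,T)$ matters.)

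\textbf{Drift estimate.} The first key step is to show $f(s,z) \le 0$ for $s$ close to $T$ and $a \le \dist(z,\tilde K_{\eta_0}) \le 2a$. I would repeat the estimate of Lemma~\ref{le:dist_deriv_general} with $\nabla\dist(\cdot,\tilde K_{\eta_0})$ in place of the unit normal for $K_{\eta_0}$.
\begin{itemize}
\item The $\tilde K$-normal direction differs from the $K$-geometry by an $O(\varepsilon)$ error, using $\dH \le \varepsilon$.
\item This error, multiplied by the $O(\gamma/\sigma^2_{T-s})$ drift magnitude, is absorbed because $a = (2\gamma+3)\varepsilon$ is large enough. This is where the constant comes from, exactly as in Lemma~\ref{lem:hitting_time}.
\item The result is $\langle\nabla D, b\rangle \le -c\,a/\sigma^2_{T-s}$.
\item The Laplacian term is bounded via Lemma~\ref{lem:dist_laplacian} by $e^{2T}\mathcal C_{\tilde K}$, which stays bounded while the drift blows up.
\end{itemize}
Enlarging $t_\varepsilon$ if needed therefore gives $f \le -c a/\sigma^2_{T-s} \le 0$ on the excursion.

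\textbf{Upcrossing probability.} On $[\rho_j,\sigma_j]$, $D$ is dominated by $a + M_t$, where $M_t = \int_{\rho_j}^t e^{2(T-s)}\,\dd W_s$ is a continuous martingale with quadratic variation at most $e^{4T}(T-\rho_j)$. So an upcrossing requires $\sup_{t\in[\rho_j,T)} M_t \ge a$. By the reflection principle / Dambis--Dubins--Schwarz,
\[
\P(\sigma_j < T \mid \mathcal F_{\rho_j}) \le 2\exp\!\Big(-\frac{a^2}{2e^{4T}(T-\rho_j)}\Big).
\]
This alone is not summable, since $\rho_j$ need not approach $T$ quickly. I would strengthen it using the strongly negative drift: $D_t \le a + M_t - c a\int_{\rho_j}^t \sigma^{-2}_{T-s}\,\dd s$, and the integral diverges as $t\to T$. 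Comparing with a Brownian motion with linear negative drift $-\mu$ (time-change so the quadratic variation is at most $e^{4T}$ times time, while $\sigma^{-2}_{T-s}\ge 1/(2(T-s))\ge 1/(2T)$), the probability of ever rising by $a$ is at most $\exp(-2\mu a/e^{4T})$ with $\mu \gtrsim c a / T$. This gives a uniform bound $q<1$, independent of $\rho_j$, on the conditional probability of each further upcrossing.

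\textbf{Conclusion.} By the strong Markov property, $\P(U \ge j) \le q^{\,j}$, which is summable, and Borel--Cantelli yields $U < \infty$ almost surely.

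The main obstacle is the drift estimate: carefully transferring the bounds of Lemma~\ref{le:dist_deriv_general} and Lemma~\ref{le:cm} from $K_{\eta_0}$ to $\tilde K_{\eta_0}$ so that the sign of $f$ is negative uniformly on the band $[a,2a]$. The comparison argument itself is routine once that is in hand.
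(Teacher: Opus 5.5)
Your architecture (stopping times $\rho_j,\sigma_j$, a uniform bound $q<1$ on the conditional probability of each new upcrossing, the strong Markov property, $\P(U\ge j)\le q^j$) is a legitimate alternative to the paper's argument. However, the comparison step that feeds it fails as written. You cannot assert $D_t\le a+M_t$, let alone $D_t\le a+M_t-ca\int_{\rho_j}^t\sigma_{T-s}^{-2}\,\dd s$, on all of $[\rho_j,\sigma_j]$. At $\rho_j$ the path is at level $a$ and immediately dips below it, and before $\sigma_j$ it may wander anywhere in $[0,2a]$. That includes $\tilde K_{\eta_0}$, where Lemma~\ref{lem:ito_formula_dist} is not available. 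It also includes the region near $\tilde K_{\eta_0}$, where the only available drift bound (from the proof of Lemma~\ref{lem:hitting_time}) permits a \emph{positive} drift of order $\varepsilon/\sigma_{T-s}^2$. The phrase ``whenever the path is moving upward'' does not rescue an integral identity started at $\rho_j$. The missing idea is a last-exit decomposition. On $\{\sigma_j<T\}$ put $\ell_j=\sup\{s\in[\rho_j,\sigma_j]: D_s\le a\}$. Then $D>a$ on $(\ell_j,\sigma_j]$, so It\^o's formula applies there, and $f\le 0$ there by Lemma~\ref{lem:hitting_time}. (That proof already contains the drift estimate you flag as the main obstacle.) This gives $a=D_{\sigma_j}-D_{\ell_j}\le M_{\sigma_j}-M_{\ell_j}$.

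This changes the event you must control: it is a \emph{rise} $\sup_{\rho_j\le s<t<T}(M_t-M_s)\ge a$, not $\sup_{t}(M_t-M_{\rho_j})\ge a$. For that event your strengthening via the negative drift does not work. The quantity $\exp(-2\mu a/\cdot)$ is the infinite-horizon bound for the supremum measured from the starting point, whereas a Brownian motion with constant negative drift on an infinite horizon has a rise of size $a$ almost surely. What does give a uniform $q<1$ is that $M$ has finite total quadratic variation on $[t_\varepsilon,T)$. By the strong Markov property of $W$ at $\rho_j$, $\P(\sigma_j<T\mid\mathcal{F}_{\rho_j})\le\P(\sup_{u\le Q}|B_u|\ge a/2)=:q<1$ with $Q=\int_{t_\varepsilon}^T2e^{2(T-s)}\,\dd s$; the drift enters only through its sign. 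With this repair your route works and is somewhat cleaner than the paper's. The paper cuts $[\mathsf{Hit}(a),T)$ into dyadic intervals $I_m$ of length $(T-\mathsf{Hit}(a))2^{-m}$. It bounds the probability of an upcrossing inside $I_m$ by the probability that the martingale part oscillates by $a/2$ on $I_m$ (exactly the event the last-exit argument produces), i.e.\ by $2\exp(-a^2 2^m/(4e(T-\mathsf{Hit}(a))))$, and then applies Borel--Cantelli. The paper exploits vanishing interval length where you exploit the geometric decay $q^j$. Your version also avoids having to handle upcrossings that straddle interval endpoints.
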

\begin{proof}[Proof of Lemma \ref{lem:upcrossings}]
	We prove Lemma \ref{lem:upcrossings} in Appendix \ref{proof:lem:upcrossings}. 
\end{proof}

With the above two lemmas, we now prove Theorem \ref{thm:guided_sde_qualitative_smooth}. 
For concreteness, let $(\Omega, \mathcal{F}, \P)$ denote the probability space on which the process $(\dist(z_t, \tilde{K}_{\eta_0}))_{0 \le t < T}$ is defined. 
For instance, one may take the classical Wiener space equipped with the filtration generated by the coordinate process.
Fix $a = (2\gamma + 3) \varepsilon$, and let $t_{\varepsilon}$ be from Lemma \ref{lem:hitting_time}.
Define the sets
\begin{align*}
   A &= \left\{ \omega \in \Omega: \exists\, s \in [t_\varepsilon, T) \text{ such that } \dist(z_s (\omega), \tilde{K}_{\eta_0}) > 2 a \right\} \\
   G &= \left\{ \omega \in \Omega: \limsup_{t \to T}  \dist(z_t (\omega), \tilde{K}_{\eta_0}) \le 2 a \right\}. 
\end{align*}
We next prove that $\P(G) = 1$.
Note that
\begin{equation*}
   \P(G)
   =  \P \left( G \mid A \right) \cdot \P(A) + \P \left( G \mid A^c \right) \cdot \P(A^c)
   = \P \left( G \mid A \right) \cdot \P(A) + \P(A^c).
\end{equation*}
Hence, it suffices to show $\P \left( G \mid A \right) = 1$. 
Define
\begin{equation*}
   T_\mathsf{Last} 
   = \inf \left\{ m \ge 1: U_{I_n} = 0, \, \forall \, n \ge m \right\},
\end{equation*} 
where $(I_m)_{m \geq 1}$ are the intervals defined in the proof of Lemma~\ref{lem:upcrossings}, and $U_{I_m}$ denotes the number of upcrossings of $\dist(z_t, \tilde{K}_{\eta_0})$ from $a$ to $2a$ within the interval $I_m$.
Recall that $\mathsf{Hit}(a)=\inf \{ t \ge t_\varepsilon: \dist({z}_t, \tilde{K}_{\eta_0}) = a \}$ is defined in the proof of Lemma \ref{lem:upcrossings}. 
Note that
\begin{align*}
    G  &\supseteq
    \left\{ \,\mathsf{Hit}(a) < T, \, \exists \tau \in [\mathsf{Hit}(a),\, T): \dist(z_s, \tilde{K}_{\eta_0}) \le 2a, \, \forall\, s \ge \tau \right\}  \\
    &\supseteq \left\{ T_\mathsf{Last}  < \infty, \, \mathsf{Hit}(a) < T \right\} \\
    &\supseteq \left\{ U_{[\mathsf{Hit}(a), T)} < \infty, \, \mathsf{Hit}(a) < T \right\}.
\end{align*}
By Lemma~\ref{lem:hitting_time} and Lemma~\ref{lem:upcrossings}, conditioning on $A$, we have
\begin{equation*}
    \P \left( \mathsf{Hit}(a) < T, \, U_{[\mathsf{Hit}(a), T)} < \infty \mid A \right) = 1.
\end{equation*} 
 Therefore, $\P \left( G \mid A \right) = 1$ as desired.
 The proof is done.

\subsection{Proof of Lemma \ref{lemma:discretized-ddpm-1}}
\label{proof:lemma:discretized-ddpm-1}

Recall that $d_0 = \inf_{y \in K_{\eta},\, y' \in K_{\eta'}, \, \eta \neq \eta'} \norm{y - y'}$, $D_0 = \sup_{y, y' \in \cup_{\eta \in \cI} K_{\eta}}\norm{y - y'}$ and $R_0 = \sup_{y \in \cup_{\eta \in \cI} K_{\eta}} \norm{y}$.
Similar to the derivation of \cref{eqn:dist_deriv_final_bound}, we see that
\begin{align*}
	& \frac{1}{\sigma_{T - t_k}^2}\left\langle \frac{\hat z_{t_k} - \mathrm{Proj}_{K_{\eta_0}}(\hat z_{t_k})}{\norm{\hat z_{t_k} - \mathrm{Proj}_{K_{\eta_0}}(\hat z_{t_k})}}, \,\gamma F_{\eta_0, T - t_k}(\hat z_{t_k}) - (\gamma - 1) \sum_{\eta \in \cI} \zeta_{\eta, T - t_k}(\hat z_{t_k}) F_{\eta, T - t_k}(\hat z_{t_k})\right\rangle \\
	& \leq \frac{D_0(\gamma - 1) \abs{\cI}}{w_{\eta_0} \inf_{x \in \R^d} p_{\eta_0}(x) \cdot \rho_{\eta_0} \Xi_{T - t_k}^{d}\sigma^2_{T - t_k}}  \cdot
        \exp \left( -\frac{3 \Xi_{T - t_k}^2}{2 \tilde{\sigma}_{T - t_k}^2 }\right) + \frac{2(\gamma - 1) \Xi_{T - t_k}}{\sigma_{T - t_k}^2} \\
        & \,\,\,\,\,\,  - \frac{1}{\sigma_{T - t_k}^2} \dist(\hat z_{t_k}, K_{\eta_0}) + \frac{2\gamma \sup_{\eta \in \cI} C_{K_{\eta}}\Xi_{T - t_k}^2 \log (\Xi_{T - t_k}^{-2})}{\sigma_{T - t_k}^2}.
\end{align*} 
Recall that $v_k$ is defined in \cref{eq:def-vk}.
Using the above upper bound and the fact $\norm{\hat{z}_{t_k}} \le R_0$, we have 
\begin{align*}
	& \Big\langle \hat z_{t_k} - \Proj_{K_{\eta_0}}(\hat z_{t_k}), \, v_k  \Big\rangle \\
	& \leq \dist\big( \hat z_{t_k}, K_{\eta_0} \big) \Big(\dist\big( \hat z_{t_k}, K_{\eta_0} \big) + R_0 \Big) + \frac{2\dist\big( \hat z_{t_k}, K_{\eta_0} \big)D_0(\gamma - 1) \abs{\cI}}{w_{\eta_0} \inf_{x \in \R^d} p_{\eta_0}(x) \cdot \rho_{\eta_0} \Xi_{T - t_k}^{d}\sigma^2_{T - t_k}}  \cdot
        \exp \left( -\frac{3 \Xi_{T - t_k}^2}{2 \tilde{\sigma}_{T - t_k}^2 }\right) \\
        &\,\,\,\,\, + \frac{4\dist\big( \hat z_{t_k}, K_{\eta_0} \big)(\gamma - 1) \Xi_{T - t_k}}{\sigma_{T - t_k}^2}   - \frac{2}{\sigma_{T - t_k}^2} \dist(\hat z_{t_k}, K_{\eta_0})^2 + \frac{4\dist\big( \hat z_{t_k}, K_{\eta_0} \big)\gamma \sup_{\eta \in \cI} C_{K_{\eta}}\Xi_{T - t_k}^2 \log (\Xi_{T - t_k}^{-2})}{\sigma_{T - t_k}^2}.
\end{align*}
Therefore, there exist $\tau_1, \kappa_1 > 0$ that depend only on $(p_0, \gamma)$, such that for all $t_k \in [\tau_1, T)$, $t \in [t_k, t_{k + 1})$, $\kappa \in (0, \kappa_1]$ and $\dist( \hat z_{t_k}, K_{\eta_0} ) \geq d_0 / 3$, we have 
\begin{align}
\label{eq:C1-1}
\begin{split}
	& \Big\langle \hat z_{t_k} - \Proj_{K_{\eta_0}}(\hat z_{t_k}), \, v_k  \Big\rangle \leq  - \frac{1}{4\sigma_{T - t_k}^2} \dist\big( \hat z_{t_k}, K_{\eta_0} \big)^2. 
\end{split}
\end{align}	
%
Similar to the derivation of \cref{eq:F1-upper}, we conclude that 
\begin{align*}
	& (e^{t - t_k} - 1) \|v_k\|^2 \\
	& =  (e^{t - t_k} - 1) \cdot \Big\| \hat z_{t_k} + \frac{2\gamma}{\sigma_{T - t_k}^2} \left( m_{\eta_0, T - t_k}(\hat z_{t_k}) - \hat z_{t_k} \right) - \frac{2(\gamma - 1)}{\sigma_{T - t_k}^2} \sum_{\eta \in \cI} \zeta_{\eta, T - t_k}(\hat z_{t_k}) \left( m_{\eta, T - t_k}(\hat z_{t_k}) - \hat z_{t_k} \right) \Big\|^2  \\
	&\leq \, (e^{t - t_k} - 1) \cdot \Big( \|\hat z_{t_k}\| + \frac{2\gamma - 2}{\sigma_{T - t_k}^2} \sum_{\eta \in \cI} \zeta_{\eta, T - t_k} (\hat z_{t_k}) \big\| m_{\eta_0, T - t_k} (\hat z_{t_k}) - m_{\eta, T - t_k}(\hat z_{t_k}) \big\| \\
    & \quad + \frac{2}{\sigma_{T - t_k}^2} \dist(\hat z_{t_k}, K_{\eta_0}) + \frac{2}{\sigma_{T - t_k}^2} \|\Proj_{K_{\eta_0}}(\hat z_{t_k}) - m_{\eta_0, T - t_k} (\hat z_{t_k})\| \Big)^2 \\
    &\leq  \, (e^{t - t_k} - 1) \Big(\frac{\sigma_{T - t_k}^2 + 2}{\sigma_{T - t_k}^2} \dist(\hat z_{t_k}, K_{\eta_0}) + R_0 + \frac{2\gamma D_0}{\sigma_{T - t_k}^2}  \Big)^2 \\
    &\leq  \frac{3(e^{t - t_k} - 1)(\sigma_{T - t_k}^2 + 2)^2\, \dist(\hat z_{t_k}, K_{\eta_0})^2}{\sigma_{T - t_k}^4} + 3 (e^{t - t_k} - 1) R_0^2 + \frac{12(e^{t - t_k} - 1)\gamma^2 D_0^2}{\sigma_{T - t_k}^4}.
\end{align*}
Again, Assumption \ref{assumption:discretization} implies that $(t - t_k) / \sigma_{T - t_k}^2 \lesssim \kappa$.
Hence, there exist $\tau_2, \kappa_2 > 0$ that depend only on $(p_0, \gamma)$, 
such that for all $t_k \in [\tau_2, T)$, 
$t \in [t_k, t_{k + 1})$, $\kappa \in (0, \kappa_2]$ and $\dist( \hat z_{t_k}, K_{\eta_0} ) \geq d_0 / 3$ 
\begin{align}
\label{eq:C1-2}
	(e^{t - t_k} - 1) \norm{v_k}^2 \leq \frac{1}{20 \sigma_{T - t_k}^2} \dist(\hat z_{t_k}, K_{\eta_0})^2. 
\end{align}
Substituting \cref{eq:C1-1,eq:C1-2} into \cref{eq:53}, we conclude that for all $t_k \in [\tau_1 \vee \tau_2, T)$, $t \in [t_k, t_{k + 1})$, $\kappa \in (0, \kappa_1 \wedge\kappa_2]$, and $\dist( \hat z_{t_k}, K_{\eta_0} ) \geq d_0 / 3$, 
\begin{align*}
	 \dd \E \big[ \|\hat z_t - \Proj_{K_{\eta_0}}(\hat z_{t_k})\|^2 \mid \hat z_{t_k} \big]  \leq - \frac{2}{5 \sigma_{T - t_k}^2} \dist\big( \hat z_{t_k}, K_{\eta_0} \big)^2 \dd t + 2de^{2T + 2t - 4t_k}  \dd t. 
\end{align*}
We can then conclude that there exist $\tau_0, \kappa_0 > 0$ that depend only on $(p_0, \gamma, T)$, such that for all $t_k \in [\tau_0, T)$, $t \in [t_k, t_{k + 1})$, $\kappa \in (0, \kappa_0]$ and $\dist( \hat z_{t_k}, K_{\eta_0} ) \geq d_0 / 3$,
\begin{align*}
	\frac{\dd}{\dd t} \E \big[ \|\hat z_t - \Proj_{K_{\eta_0}}(\hat z_{t_k})\|^2 \mid \hat z_{t_k} \big] \leq - \frac{1}{5 \sigma_{T - t_k}^2} \dist\big( \hat z_{t_k}, K_{\eta_0} \big)^2,
\end{align*}
completing the proof.

\subsection{Proof of Lemma \ref{lemma:discretized-ddpm-2}}
\label{proof:lemma:discretized-ddpm-2}

Recall $\tilde \sigma_{T - t}^2 = \lambda_{T - t}^{-2} \sigma_{T - t}^2$, and $\rho_{\eta_0}$ is the constant associated with $K_{\eta_0}$ from Lemma \ref{lemma:volume-lower-bound}.
By \cref{eq:sum-zeta-upper-2}, for any $z \in \R^d$ that satisfies $\dist(z, K_{\eta_0}) \leq d_0 / 3$, 
\begin{align}
\label{eq:63}
	\sum_{\eta \neq \eta_0} \zeta_{\eta, T - t} (z) \leq \frac{\abs{\cI}}{w_{\eta_0} \inf_{x \in \R^d} p_{\eta_0}(x) \cdot \rho_{\eta_0} \Xi_{T - t}^{d}} \cdot
        \exp \left( -\frac{ d_0^2}{24  \tilde{\sigma}_{T - t}^2 }\right).
\end{align}
Recall that $v_k$ is defined in \cref{eq:def-vk}. 
For all $t \in [t_{k}, t_{k + 1})$, 
\begin{align*}
	 (e^{t - t_k} - 1) \norm{v_k}^2 =  (e^{t - t_k} - 1) \cdot \Big\| \hat z_{t_k} + \frac{2\gamma}{\sigma_{T - t_k}^2} F_{\eta_0, T - t_k}(\hat z_{t_k}) - \frac{2(\gamma - 1)}{\sigma_{T - t_k}^2} \sum_{\eta \in \cI} \zeta_{\eta, T - t_k}(\hat z_{t_k}) F_{\eta, T - t_k}(\hat z_{t_k}) \Big\|^2. 
\end{align*}
By Lemma \ref{le:cm}, for all $\eta \in \cI$, 
\begin{align}
\label{eq:F-upper-dist-Proj-m2}
\begin{split}
     & \norm{F_{\eta, T - t_k}(\hat z_{t_k})} \leq \, \dist(\hat z_{t_k}, K_{\eta}) + \norm{\Proj_{K_{\eta}}(\hat z_{t_k}) - m_{\eta, T - t_k}(\hat z_{t_k})} \\
      &\leq \, \dist(\hat z_{t_k}, K_{\eta}) + C_{K_{\eta}} \tilde \sigma_{T - t_k} \log \tilde \sigma_{T - t_k}^{-1}, 
\end{split}
\end{align}
where $C_{K_{\eta}}$ is the constant from Lemma \ref{le:cm}.  
Combining \cref{eq:63,eq:F-upper-dist-Proj-m2}, we obtain that
\begin{align}
\label{eq:65-67}
\begin{split}
    (e^{t - t_k} - 1) \|v_k\|^2
    &= (e^{t - t_k} - 1) \cdot \Big\| \hat z_{t_k} + \frac{2\gamma}{\sigma_{T - t_k}^2} F_{\eta_0, T - t_k}(\hat z_{t_k}) - \frac{2\gamma - 2}{\sigma_{T - t_k}^2} \sum_{\eta \in \cI} \zeta_{\eta, T - t_k}(\hat z_{t_k}) F_{\eta, T - t_k}(\hat z_{t_k})\Big\|^2 \\
    &\leq  (e^{t - t_k} - 1) \cdot \Big( \|\hat z_{t_k}\| + \frac{2\gamma\big(\dist(\hat z_{t_k}, K_{\eta_0}) + C_{K_{\eta_0}} \tilde \sigma_{T - t_k} \log \tilde \sigma_{T - t_k}^{-1} \big) }{\sigma_{T -t_k}^2}  \\
    & \qquad +  \frac{2(\gamma - 1)\abs{\cI} e^{ -{ d_0^2} /  (24  \tilde{\sigma}_{T - t}^2 )} (\dist(\hat z_{t_k}, K_{\eta_0}) + D_0) }{\sigma_{T - t_k}^2
    {w_{\eta_0} \inf_{x \in \R^d} p_{\eta_0}(x) \cdot \rho_{\eta_0} \Xi_{T - t}^{d}}}
          \Big)^2,
\end{split}
\end{align}
where the last inequality follows from a similar calculation as in \cref{eq:sum-zeta-upper-2}.
%
Similar to the derivation of  \cref{eqn:dist_deriv_final_bound}, we conclude that 
\begin{align}
	& \Big\langle \hat z_{t_k} - \Proj_{K_{\eta_0}}(\hat z_{t_k}), \, v_k  \Big\rangle \nonumber \\
	& \leq \dist\big( \hat z_{t_k}, K_{\eta_0} \big) \Big(\dist\big( \hat z_{t_k}, K_{\eta_0} \big) + R_0 \Big) + \frac{2\dist\big( \hat z_{t_k}, K_{\eta_0} \big)D_0(\gamma - 1) \abs{\cI}}{w_{\eta_0} \inf_{x \in \R^d} p_{\eta_0}(x) \cdot \rho_{\eta_0} \Xi_{T - t_k}^{d}\sigma^2_{T - t_k}}  \cdot
        \exp \left( -\frac{3 \Xi_{T - t_k}^2}{2 \tilde{\sigma}_{T - t_k}^2 }\right) \label{eq:67.5}\\
        &\,\,\,\,\, + \frac{4\dist\big( \hat z_{t_k}, K_{\eta_0} \big)(\gamma - 1) \Xi_{T - t_k}}{\sigma_{T - t_k}^2}   - \frac{2}{\sigma_{T - t_k}^2} \dist(\hat z_{t_k}, K_{\eta_0})^2 + \frac{4\dist\big( \hat z_{t_k}, K_{\eta_0} \big)\gamma \sup_{\eta \in \cI} C_{K_{\eta}}\Xi_{T - t_k}^2 \log (\Xi_{T - t_k}^{-2})}{\sigma_{T - t_k}^2}.\nonumber
\end{align}
Therefore, there exists $\bar \tau_\eps > 0$ that depends only on $(p_0, \gamma, \eps)$, such that for all $t_k \in [\bar \tau_\eps, T)$, $t \in [t_k, t_{k + 1})$, and $\dist( \hat z_{t_k}, K_{\eta_0} ) \geq \eps $, we have
\begin{align}
\label{eq:68}
	\Big\langle \hat z_{t_k} - \Proj_{K_{\eta_0}}(\hat z_{t_k}), \, v_k  \Big\rangle \leq - \frac{1}{2\sigma_{T - t_k}^2} \dist( \hat z_{t_k}, K_{\eta_0} )^2. 
\end{align}
%
Substituting \cref{eq:65-67,eq:68} into \cref{eq:53}, we conclude that there exists $\tau_\eps > 0$ that depend only on $(p_0, \gamma, T, \eps)$, and $\kappa_0 > 0$ that depends only on $(p_0, \gamma, T)$, such that for all $t_k \in [\tau_\eps, T)$, $t \in [t_k, t_{k + 1})$, $\kappa \in (0, \kappa_0]$ and $\dist( \hat z_{t_k}, K_{\eta_0} ) \geq \eps $,
\begin{align*}
	\frac{\dd}{\dd t} \E \big[ \|\hat z_t - \Proj_{K_{\eta_0}}(\hat z_{t_k})\|^2 \mid \hat z_{t_k} \big] \leq - \frac{1}{5 \sigma_{T - t_k}^2} \dist\big( \hat z_{t_k}, K_{\eta_0} \big)^2.
\end{align*}
The proof is done.

\subsection{Proof of Lemma \ref{lemma:C3}}
\label{proof:lemma:C3} 

Taking expectation of both sides of \cref{eq:53} and differentiating, 
we see that 
\begin{align*}
	& \frac{\dd}{\dd t} \E\big[ \norm{\hat z_t - \Proj_{K_{\eta_0}}(\hat z_{t_k})}^2 \mid \hat z_{t_k}\big] \\
    &= 4 \cdot e^{t - t_k} \E \norm{\int_{t_k}^t e^{T - s} \D B_s}^2
    + 2d e^{2(T - t_k)}
    + 2e^{2(t - t_k)}(e^{t - t_k} - 1) \|v_k\|^2 + 2e^{t - t_k} \Big \langle \hat z_{t_k} - \Proj_{K_{\eta_0}}(\hat z_{t_k}), \, v_k \Big \rangle \\
    &= 4 \cdot e^{2(t - t_k)} \cdot \frac{d}{2} \left(e^{2(T - t_k)} - e^{2(T - t_k)}  \right) + 2d e^{2(T - t_k)}
    + 2e^{2(t - t_k)}(e^{t - t_k} - 1) \norm{v_k}^2 \\
    &\qquad + 2e^{t - t_k} \Big \langle \hat z_{t_k} - \Proj_{K_{\eta_0}}(\hat z_{t_k}), \, v_k \Big \rangle  \\
    &=  2d \cdot e^{2 (T - t_k) + 2(t - t_k)}  + 2e^{t - t_k}(e^{t - t_k} - 1) \|v_k\|^2 + 2e^{t - t_k} \Big \langle \hat z_{t_k} - \Proj_{K_{\eta_0}}(\hat z_{t_k}), \, v_k  \Big \rangle.
\end{align*}
By \cref{eq:65-67,eq:67.5}, we see that there exist $\tau', \kappa', C' > 0$ that depend only on $(p_0, \gamma, T)$, such that for $\kappa \in (0, \kappa')$ $t_k \in [\tau', T)$ and $t \in [t_k, t_{k + 1})$, it holds that 
\begin{align*}
	 2d e^{2T + 2t - 4t_k}  + 2e^{t - t_k}(e^{t - t_k} - 1) \norm{v_k}^2 + 2e^{t - t_k} \Big \langle \hat z_{t_k} - \Proj_{K_{\eta_0}}(\hat z_{t_k}), \, v_k  \Big \rangle  \leq  - \frac{\dist(\hat z_{t_k}, K_{\eta_0})^2}{\sigma_{T - t_k}^2} + \frac{C'}{\sigma_{T - t_k}^{3/2}}.  
\end{align*}
%
Since $\dist(\hat z_{t_{k + 1}}, K_{\eta_0}) \leq \norm{\hat z_{t_{k + 1}} -\Proj_{K_{\eta_0}}(\hat z_{t_k})}$, we obtain that  
\begin{align*}
	\E\big[ \dist(\hat z_{t_{k + 1}}, K_{\eta_0})^2 \mid \hat z_{t_k} \big] \leq \Big( 1  - \frac{t_{k + 1} - t_k}{\sigma_{T - t_k}^2}\Big)\,\dist(\hat z_{t_k}, K_{\eta_0})^2 + \frac{C'(t_{k + 1} - t_k)}{\sigma_{T - t_k}^{3/2}}. 
\end{align*}
Note that for all $k' \in \mathbb{N}^+$, 
\begin{align*}
	\sum_{k = k'}^{\infty} \frac{C'(t_{k + 1} - t_k)}{\sigma_{T - t_k}^{3/2}} < \infty. 
\end{align*}
%


Again, applying Lemma 10 in \citep[Chapter 2]{polyak2021introduction}, we see that $\dist(\hat z_{t_{k}}, K_{\eta_0})^2$ converges to a non-negative random variable $L$, almost surely.
We claim that $L \le \eps$ almost surely.
Towards a contradiction, suppose that $\P(L > \eps) > 0$.
Then, with positive probability, there exists $N \ge 1$ such that $\dist(\hat{z}_{t_k}, K_{\eta_0}) > \eps$ for all $k \ge N$. 
Write $\{Z_k\}_{k \ge 0}$ for 
\begin{equation*}
	Z_{k + 1} = 
		\dist(\hat z_{t_{k + 1}}, K_{\eta_0})^2.
\end{equation*}
By Lemmas \ref{lemma:discretized-ddpm-1} and \ref{lemma:discretized-ddpm-2}, we see that $\E[Z_{k + 1} \mid Z_k ] \leq ( 1 - \frac{t_{k + 1} - t_k}{5 \sigma_{T - t_k}^2} ) \, Z_k$. 
By the same reasoning as in the paragraph preceding \cref{lemma:discretized-ddpm-2}, we see that $Z_k \to 0$ almost surely as $k \to \infty$. 
A contradiction.
Therefore, $L \le \eps$ almost surely, completing the proof.

\subsection{Proof of Lemma \ref{lem:ito_formula_dist}}
\label{proof:lem:ito_formula_dist}

Since $\partial \tilde{K}_{\eta_0}$ is smooth, 
Lemma~\ref{lem:dist_laplacian} implies that the distance function $\dist(\cdot, \tilde{K}_{\eta_0})$ is at least $C^2$ on $\R^d \setminus \tilde{K}_{\eta_0}$.
Applying It\^o's formula, we have
   \begin{align*}
    \dd \big(\dist(z_t, \tilde{K}_{\eta_0})\big)
    &= \inprod{\nabla \dist( z_t, \tilde{K}_{\eta_0}), \,\dd  z_t} + \frac{1}{2} 
    \inprod{\nabla^2 \dist({z}_t, \tilde{K}_{\eta_0})\,  \dd  z_t  ,\, \dd  z_t} \\
    &= \left(\inprod{\nabla \dist( z_t, \tilde{K}_{\eta_0}),\, b(t,  z_t)} +  e^{2(T-t)} \Delta \dist({z}_t, \tilde{K}_{\eta_0}) \right) \D t  \\
    &\qquad + \sqrt{2} e^{T - t} \inprod{\nabla \dist( z_t, \tilde{K}_{\eta_0}), \D B_t}. 
   \end{align*}
   Note that the process ${W}_t$ defined by $\dd {W}_t= \langle {\nabla \dist( z_t, \tilde{K}_{\eta_0}), \D B_t} \rangle$ is a one-dimensional Brownian motion. 
   Indeed,  
    It\^o isometry implies the quadratic variation of this process is $[{W}]_t = t$ and $\E [{W}_t] = 0$. 
    Therefore, by L\'evy's characterization of Brownian motion \citep[Theorem 3.6 Chapter IV]{revuz2013continuous} ${W}_t$ coincides in law with a standard one-dimensional Brownian motion.
    Therefore, 
    \begin{align*}
    	\dd \big(\dist(z_t, \tilde{K}_{\eta_0})\big) = f(t, {z}_t) \D t + e^{2(T - t)}  \D {W}_t, 
    \end{align*}
    completing the proof of the lemma.

\subsection{Proof of Lemma \ref{lem:hitting_time}}
\label{proof:lem:hitting_time}

By our assumption that $\dH(\tilde K_{\eta_0}, K_{\eta_0}) \leq \varepsilon$, we have 
\begin{equation}
\label{eq:Proj-Projt}
   \norm{\Proj_{K_{\eta_0}}({z}_t) - \Proj_{\tilde{K}_{\eta_0}}({z}_t)} \le \eps.
\end{equation}
%
Since $\nabla \dist(z_t, \tilde{K}_{\eta_0}) = \frac{z_t - \Proj_{\tilde{K}_{\eta_0}}({z}_t)}{\norm{z_t - \Proj_{\tilde{K}_{\eta_0}}({z}_t)}}$, we have $\norm{\nabla \dist(z_t, \tilde{K}_{\eta_0})} = 1$. Note that for any $0 \leq t < T$, 
\begin{align*}
	&\inprod{\nabla \dist(z_t, \tilde{K}_{\eta_0}),\, b( z_t, t)} \\
	& = \inprod{\nabla \dist(z_t, \tilde{K}_{\eta_0}), \,\frac{2 \gamma}{\sigma_{T - t}^2} (m_{\eta_0, T - t}( z_t) - z_t) - \frac{ 2(\gamma - 1)}{\sigma_{T - t}^2} \sum_{\eta \in \cI} {\zeta}_{\eta, T - t} (z_t) 
  (m_{\eta, T - t}(z_t) - z_t)  } \\
  & = \underbrace{\inprod{\nabla \dist(z_t, \tilde{K}_{\eta_0}), \,\frac{2 \gamma}{\sigma_{T - t}^2} (\Proj_{\tilde{K}_{\eta_0}}({z}_t) - z_t) - \frac{ 2(\gamma - 1)}{\sigma_{T - t}^2} \sum_{\eta \in \cI} {\zeta}_{\eta, T - t} (z_t) 
  (\Proj_{{K}_{\eta}}({z}_t) - z_t)  }}_{\rm I} \\
  & + \underbrace{\inprod{\nabla \dist(z_t, \tilde{K}_{\eta_0}), \,\frac{2 \gamma}{\sigma_{T - t}^2} (m_{\eta_0, T - t}( z_t) - \Proj_{{K}_{\eta_0}}({z}_t)) - \frac{ 2(\gamma - 1)}{\sigma_{T - t}^2} \sum_{\eta \in \cI} {\zeta}_{\eta, T - t} (z_t) 
  (m_{\eta, T - t}( z_t) - \Proj_{{K}_{\eta}}({z}_t))  }}_{\rm II} \\
  & + \underbrace{\inprod{\nabla \dist(z_t, \tilde{K}_{\eta_0}), \,\frac{2 \gamma}{\sigma_{T - t}^2} (\Proj_{{K}_{\eta_0}}({z}_t) - \Proj_{\tilde{K}_{\eta_0}}({z}_t))  }}_{\rm III}
\end{align*}
To upper bound $\rm{III}$, note that\cref{eq:Proj-Projt} and the Cauchy-Schwarz inequality imply that
\begin{align}
\label{equation:III-upper}
	{|\rm{III}|} \leq \frac{2\gamma \varepsilon}{ \sigma_{T - t}^2}. 
\end{align} 
To upper bound $\rm{II}$, Lemma \ref{le:cm} implies that
\begin{align}
\label{equation:II-upper}
	{|\rm{ II}|} \leq \frac{(4 \gamma - 2) \log \sigma_{T - t}^{-1} \sup_{\eta \in \cI} C_{K_{\eta}}}{\sigma_{T - t}}, 
\end{align}
where $C_{K_{\eta}}$ is from Lemma \ref{le:cm}. 
As for term $\rm{I}$, similar to the derivation of \cref{eqn:dist_derive_II} and using the upper bound $\dH(\tilde K_{\eta_0}, K_{\eta_0}) \leq \varepsilon$, we have 
\begin{align*}
	{\rm{I}} \leq & -\frac{2\gamma - 2(\gamma - 1)\zeta_{\eta_0, T - t}(z_t)}{\sigma_{T - t}^2}\cdot   \dist(z_t, \tilde K_{\eta_0}) + \frac{2\gamma - 2}{\sigma_{T - t}^2} \cdot \sum_{\eta \neq \eta_0} \zeta_{\eta, T - t}(z_t) \cdot 
        \dist(z_t, K_{\eta}) \\
        \leq & -\frac{2\gamma - 2(\gamma - 1)\zeta_{\eta_0, T - t}(z_t)}{\sigma_{T - t}^2}\cdot   \dist(z_t, K_{\eta_0}) + \frac{2\gamma - 2}{\sigma_{T - t}^2} \cdot \sum_{\eta \neq \eta_0} \zeta_{\eta, T - t}(z_t) \cdot 
        \dist(z_t, K_{\eta}) + \frac{2\gamma  \varepsilon}{\sigma_{T - t}^2}. 
\end{align*}
%
Following the same argument as in the derivation of \cref{eqn:dist_deriv_final_bound}, we conclude that 
\begin{align}
\label{equation:I-upper}
	{\rm{I}} \leq & \frac{2D_0(\gamma - 1) \abs{\cI}}{w_{\eta_0} \inf_{x \in \R^d} p_{\eta_0}(x) \cdot \rho_{\eta_0} \Xi_{T - t}^{d}\sigma^2_{T - t}}  \cdot
        \exp \left( -\frac{3 \Xi_{T - t}^2}{2 \tilde{\sigma}_{T - t}^2 }\right) + \frac{4(\gamma - 1) \Xi_{T - t}}{\sigma_{T - t}^2}   - \frac{2}{\sigma_{T - t}^2} \dist(z_t, K_{\eta_0}) + \frac{2\gamma  \varepsilon}{\sigma_{T - t}^2} \\
        \leq & \frac{2D_0(\gamma - 1) \abs{\cI}}{w_{\eta_0} \inf_{x \in \R^d} p_{\eta_0}(x) \cdot \rho_{\eta_0} \Xi_{T - t}^{d}\sigma^2_{T - t}}  \cdot
        \exp \left( -\frac{3 \Xi_{T - t}^2}{2 \tilde{\sigma}_{T - t}^2 }\right) + \frac{4(\gamma - 1) \Xi_{T - t}}{\sigma_{T - t}^2}   - \frac{2}{\sigma_{T - t}^2} \dist(z_t, \tilde K_{\eta_0}) + \frac{(2\gamma + 2) \varepsilon}{\sigma_{T - t}^2}, \nonumber
\end{align}
where we recall $D_0 = \sup_{y, y' \in \cup_{\eta \in \cI} K_{\eta}} \norm{y - y'}$, $\Xi_{T - t} = \sqrt{\tilde{\sigma}_{T - t}}$ and $\tilde \sigma_{T - t}^2 = \lambda_{T - t}^{-2}\sigma_{T - t}^2 $. 
The above upper bounds apply to any $t \in [0, T)$. 
Combining \cref{equation:III-upper,equation:II-upper,equation:I-upper}, we see that there exists $t_\varepsilon > T - 1/2$ that depends only on $(p_0, \varepsilon, \gamma)$,
such that for any $t \in [t_{\varepsilon}, T)$ and $\dist({z}_t, \tilde{K}_{\eta_0}) > (2\gamma + 3) \varepsilon$, it holds that  
\begin{align*}
    f(t, z_t) 
    &= \inprod{\nabla \dist(z_t, \tilde{K}_{\eta_0}),\, b(t, z_t)} + e^{2(T - t)} \Delta \dist({z}_t, \tilde{K}_{\eta_0}) \\
    &\le -\frac{1}{\sigma_{T - t}^2} \Big( 2\, \dist(z_t, \tilde K_{\eta_0}) - (4 \gamma + 3) \varepsilon  \Big) + e\,\mathcal{C}_{\tilde K_{\eta_0}} \le -\frac{ \varepsilon}{\sigma_{T - t}^2},
\end{align*}
where we recall that $\mathcal{C}_{\tilde K_{\eta_0}}$ is defined in Lemma \ref{lem:dist_laplacian}. 
Consider the one-dimensional It\^o process $(X_t)_{\tau \leq t < T}$ defined by 
\begin{equation}
\label{eqn:comparison_sde}
\dd X_t = -\frac{ \varepsilon}{\sigma_{T - t}^2} \D t + e^{2(T - t)} \D W_t, \qquad X_{\tau} = \dist(z_{\tau}, \tilde{K}_{\eta_0}).
\end{equation}
By a comparison theorem for SDEs \citep[Chapter VI, Theorem 1.1]{ikeda2014stochastic}, we obtain that almost surely,
\begin{align*}
 & \dist(z_t, \tilde{K}_{\eta_0}) \le X_t, \quad \text{for all } 
    t \in [\tau,\, \tilde T), \\
 & \tilde T = \min\big\{\inf \{ s > \tau: \dist( z_s, \tilde{K}_{\eta_0}) = (2\gamma + 3) \varepsilon \}, \,T\big\}.
\end{align*}
Therefore, it suffices to show that almost surely there exists $s \in [\tau, T)$, such that $X_s = (2\gamma + 3) \varepsilon$.
This is equivalent to proving that almost surely $\tilde T < T$.
Note that 
\begin{align*}
	\P\big(\tilde T = T\big) \leq \P\Big( \inf_{\tau \leq t < T} X_t \geq (2\gamma + 3) \varepsilon \Big) \leq \inf_{s_i \to T^-} \, \P\Big( X_{s_i} \geq (2\gamma + 3) \varepsilon  \Big),  
\end{align*}
where $\{s_i \}_{i = 1}^{\infty}$ is a sequence in $[\tau, T)$ that converges to $T$ as $i \to \infty$. 
In addition, 
\begin{align*}
	\P\Big( X_{s_i} \geq (2\gamma + 3) \varepsilon  \Big) = \P\Big( \dist(z_{\tau}, \tilde{K}_{\eta_0}) - \int_{\tau}^{s_i} \frac{\varepsilon}{\sigma_{T - t}^2} + \int_{\tau}^{s_i} e^{2(T - t)} \dd W_t \geq (2\gamma + 3) \varepsilon \Big), 
\end{align*}
which converges to zero as $i \to \infty$. 
This proves that $\P(\tilde T = T) = 0$, completing the proof.

\subsection{Proof of Lemma \ref{lem:upcrossings}}
\label{proof:lem:upcrossings}

If there does not exist $t \geq t_{\varepsilon}$ such that $\dist({z}_t, \tilde{K}_{\eta_0}) = a $, then $U_{[t_{\varepsilon}, T)} = 0$. 
In what follows, we condition on the event $\{\exists\, t \geq t_{\varepsilon}: \dist({z}_t, \tilde{K}_{\eta_0}) = a\}$.

Define $\mathsf{Hit}(a) = \inf \{ t \ge t_\varepsilon: \dist({z}_t, \tilde{K}_{\eta_0}) = a \}$. 
Define a sequence of intervals $(I_m)_{m \ge 1}$ by 
\begin{align*}
        I_1 &= \left[\mathsf{Hit}(a),\, \mathsf{Hit}(a) + \frac{T - \mathsf{Hit}(a)}{2} \right], \\
        I_m &= \left[I_{m - 1}^r,\, I_{m - 1}^r + \frac{T - I_{m - 1}^r}{2} \right] \quad \text{for } m > 1,
\end{align*}
where we write $I_m = [I_m^{\ell}, I_m^r]$.
By construction, the length of $\abs{I_m}$ is $(T - \mathsf{Hit}(a))/ 2^{m}$.
For each $m \ge 1$, define the event $E_m = \left\{ U_{I_m} \ge 1 \right\}$, where $U_{I_m}$ denotes the number of upcrossings of $\dist(z_t, \tilde{K}_{\eta_0})$ from $a$ to $2a$ over the interval $I_m$.
We next show that $\P ( E_m \text{ i.o.} ) = 0$ (i.o. stands for infinitely often).
Consider the one-dimensional It\^o process $(X_t)_{\tau_m \leq t < I_m^r}$ defined by 
\begin{equation}
\label{eqn:comparison_sde2}
\dd X_t = -\frac{ \varepsilon}{\sigma_{T - t}^2} \D t + e^{2(T - t)} \D W_t, \qquad X_{\tau_m} = a.
\end{equation} 
where $\tau_m$ denotes the first time that $\dist( z_t, \tilde K_{\eta_0}) = a$ within the interval $I_m$ conditioned on the event $E_m$.
Applying the SDE comparison theorem again, we see that
\begin{align*}
        E_m \subseteq & \Big\{  \, \max_{\tau_m \le t \le I_m^r} X_t \ge 2a \Big\} \subseteq \Big\{  \, \max_{\tau_m \le t \le I_m^r} \int_{\tau_m}^t e^{2(T - s)} \D W_s \ge a \Big\} \\
        \subseteq & \Big\{  \max_{I_m^l \le t \le I_m^r} \int_{I_m^l}^t e^{2(T - s)} \D W_s  \ge a / 2\, \Big\} \cup \Big\{  \inf_{I_m^l \le t \le I_m^r} \int_{I_m^l}^t e^{2(T - s)} \D W_s  \le -a / 2\, \Big\}. 
\end{align*}
Note that the quadratic variation of the process $\int_{I_m^l}^t e^{T - s} \D W_s$ is upper bounded by
\begin{equation*}
    \int_{I_m^l}^t e^{2(T - s)} \D s \le \int_{I_m^l}^{I_m^r} e^{2(T - s)} \D s \le e\abs{I_m}.
\end{equation*}
Applying Bernstein's maximal inequality (see \citep[Section 2.3]{mckean2024stochastic} or \citep[Exercise 3.16 Chapter IV]{revuz2013continuous}), we see that
    \begin{align*}
        \P \left( E_m \right) 
        &\le \P 
        \left(\max_{\tau_m \le t \le I_m^r} X_t \ge 2a \right)
 \\
        &= \P \left( \max_{\tau_m \le t \le I_m^r} \Big[ \int_{\tau_m}^t -\frac{ \varepsilon}{\sigma_{T - t}^2} \D t  +  \int_{\tau_m}^t e^{2(T - s)} \D W_s \Big] \ge a \right) \\
        &\le \P \left( \max_{\tau_m \le t \le I_m^r}   \int_{\tau_m}^t e^{2(T - s)} \D W_s  \ge a \right) \\
        & \le \P \Big(\Big\{  \max_{I_m^l \le t \le I_m^r} \int_{I_m^l}^t e^{2(T - s)} \D W_s  \ge a / 2\, \Big\} \cup \Big\{  \inf_{I_m^l \le t \le I_m^r} \int_{I_m^l}^t e^{2(T - s)} \D W_s  \le -a / 2\, \Big\} \Big) \\
        & \le \P \Big(\Big\{  \max_{I_m^l \le t \le I_m^r} \int_{I_m^l}^t e^{2(T - s)} \D W_s  \ge a / 2\, \Big\}  \Big) + \P \Big( \Big\{  \inf_{I_m^l \le t \le I_m^r} \int_{I_m^l}^t e^{2(T - s)} \D W_s  \le -a / 2\, \Big\} \Big) \\
        & \le 2\exp \left( - \frac{ a^2}{4e\abs{I_m}} \right) = 2\exp \left( - \frac{a^2 \cdot 2^m}{4e(T - \mathsf{Hit}(a))} \right),
    \end{align*}
    where the third inequality follows from the fact that $X_t$ has a negative drift.
    Therefore, 
    \begin{equation*}
        \sum_{m \ge 1} \P \left( E_m \right)
        \le \sum_{m \ge 1} 2\exp \left( - \frac{a^2 \cdot 2^m}{4e(T - \mathsf{Hit}(a))} \right) < \infty.
    \end{equation*}
    The first Borel-Cantelli lemma implies that $\P \left( E_m \text{ i.o.} \right) = 0$, completing the proof.

\end{document}